\documentclass[sigconf,authorversion]{acmart}

\usepackage{hyperref}
\usepackage{url}

\usepackage{dsfont}
\usepackage{amsmath}
\usepackage{xspace}
\usepackage{booktabs}
\usepackage{subcaption}
\usepackage{tcolorbox}
\usepackage{pgfplotstable, pgfmath} 
\usepackage[noenumitem]{eda} 
\usepackage{colortbl}
\usepackage{prettyplots} 
\usepackage{wrapfig}
\usepackage{thm-restate}
\usepackage{algorithm}
\usepackage{algpseudocode}
\usepackage{xcolor}
\usepackage{stfloats}      

\DeclareMathOperator*{\argmax}{arg\,max}

\pgfplotsset{
    colormap/viridis,
}
\graphicspath{ {./figs/} }
\pgfplotsset{compat=1.18}
\usetikzlibrary{calc} 
\usepgfplotslibrary{groupplots}
\usetikzlibrary{positioning}
\usetikzlibrary{backgrounds}
\usetikzlibrary{external}
\usepgfplotslibrary{fillbetween}
\usepgfplotslibrary{colorbrewer} 
\usetikzlibrary{pgfplots.colormaps} 

\title[Differential Subgroup Discovery]{Differential Subgroup Discovery: \\ Characterizing Where Two Populations Differ, and Why}

\author{Sascha Xu}
\affiliation{%
  \institution{CISPA Helmholtz Center \\for Information Security}
  \city{Saarbrücken}
  \country{Germany}
}
\email{sascha.xu@cispa.de}
\author{Jilles Vreeken}
\affiliation{%
  \institution{CISPA Helmholtz Center \\for Information Security}
  \city{Saarbrücken}
  \country{Germany}
}
\email{vreeken@cispa.de}

\newcommand{\featvar}{X}
\newcommand{\targvar}{Y}
\newcommand{\attributevar}{A}
\newcommand{\generality}{\mathcal{G}_{\gamma}}
\newcommand{\exceptionality}{\mathcal{E}}
\newcommand{\sufficiency}{\mathcal{C}}
\newcommand{\subgroup}{S}
\newcommand{\subgroupf}{s}
\newcommand{\subgroupfhat}{\hat{\subgroupf}}
\newcommand{\phat}{\hat{p}}

\newcommand{\attributei}{\attribute^{(i)}}
\newcommand{\feati}{\feat^{(i)}}
\newcommand{\targi}{\targ^{(i)}}

\newcommand{\doit}{\mathit{do}}

\newcommand{\temp}{t}

\newcommand{\pred}{\pi}

\newcommand{\nfeat}{d}
\newcommand{\nsamples}{n}

\newcommand{\nclasses}{m}

\newcommand{\feat}{x}
\newcommand{\targ}{y}
\newcommand{\attribute}{a}

\newcommand{\featdomain}{\mathcal{X}}

\newcommand{\fvec}[1]{\mathbf{#1}} 

\newcommand{\lowerbound}{a}

\newcommand{\upperbound}{b}

\newcommand{\andweight}{w}
\newcommand{\andweightvec}{\fvec{\andweight}}

\newcommand{\ourmethod}{\textsc{DiffSub}\xspace}
\newcommand{\syflow}{\textsc{SyFlow}\xspace}
\newcommand{\causaltree}{\textsc{HonestTree}\xspace}
\newcommand{\uplifttree}{\textsc{UpliftTree}\xspace}
\newcommand{\pysubgroup}{\textsc{PySubgroup}\xspace}
\newcommand{\xlearner}{\textsc{XLearner}\xspace}
\newcommand{\causalforest}{\textsc{CausalForest}\xspace}

\definecolor{cbc1}{RGB}{1,0.2,0.3} \definecolor{cbc2}{RGB}{254,219,199} \definecolor{cbc3}{RGB}{1,2,3} \definecolor{cbc4}{RGB}{209,229,240}

\pgfkeys{
    /name mapper/.cd,
    our-rule-list/.code = {\ourmethod},
    greedy_rule_list/.code = {\greedy},
    mdl-rule-list/.code = {\mdlrl},
    rlnet/.code = {\rlnet},
    drs/.code = {\drnet},
    rrl/.code = {\rrl},
    optimal_rule_list/.code = {\corels},
    bayesian_rule_list/.code = {\sbrl},
    xgboost/.code = {\xgboost}
}

\newtheorem{assumption}{Assumption}

\def\R{{\mathbb{R}}}

\newcommand{\indep}{\mathop{\perp\!\!\!\perp}\nolimits}

\begin{document}

\begin{abstract}
We study the problem of understanding where two populations differ within a feature space, which
we formalize in the concept of a differential subgroup: a subset of individuals from both populations who, despite sharing similar characteristics, exhibit exceptional differences in a target outcome.
Differential subgroups reveal the regions of the feature space where population-level gaps are most pronounced and can help
practitioners identify the covariate combinations that are structurally responsible for these differences, e.g.\ in clinical analysis, model diagnostics, or treatment-effect studies.
We introduce a general optimization objective for discovering differential subgroups and establish conditions under which the resulting subgroups admit a causal interpretation of population differences.
We propose \ourmethod, a gradient-based approach that discovers interpretable differential subgroups in tabular data.
Across synthetic benchmarks, medical case studies, model-error analyses, and treatment-effect settings, \ourmethod identifies informative subgroups that reveal where population differences arise and why.
\end{abstract}
\maketitle


\definecolor{tabblue}{HTML}{1F77B4}
\definecolor{taborange}{HTML}{FF7F0E}

\begin{figure*}[t]
\centering

\begin{minipage}[t]{0.2\textwidth}
\vspace*{0.35cm}
\centering
\begin{tikzpicture}
\pgfplotsset{every axis/.style={tick style={black}, ticklabel style={font=\small}}}
\begin{axis}[
    title={Age vs Sex},
    xlabel={Age},
    ytick={0,1},
    yticklabels={Men, Women},
    xmin=30, xmax=80,
    scatter,
    pretty scatter,
    pretty labelshift,
    width=\textwidth,
    height=3.6cm,
]
\addplot+[only marks, mark=*, mark size=1.2pt,scatter=false,draw=tabblue] table[x=age, y=sex, col sep=comma] {expres/intro_example_scatter_0.csv};
\addplot+[only marks, mark=*, mark size=1.2pt, scatter=false, draw=taborange] table[x=age, y=sex, col sep=comma] {expres/intro_example_scatter_1.csv};
\end{axis}
\end{tikzpicture}
\subcaption{Presence/absence of heart disease vs.~age and sex.}
\label{fig:intro-scatter}
\end{minipage}
\hfill
\begin{minipage}[t]{0.4\textwidth}
\vspace*{0.cm}
\centering
\begin{tikzpicture}
\pgfplotsset{every axis/.style={tick style={black}, ticklabel style={font=\small}}}
\begin{groupplot}[
    group style={group size=3 by 1, horizontal sep=0.7cm},
    width=0.4\textwidth,
    height=3.6cm,
    legend columns=2,
    legend pos=north west,
    legend style={/tikz/every even column/.append style={column sep=0.6em}, draw=none, font=\small,
    at={(0,1.6)}},
]

\nextgroupplot[
    title={Population},
    ylabel={Probability},
    ymin=0, ymax=1,
    xtick={0,1},
    xticklabels={Men, Women},
    xlabel={Sex},
    pretty ybar stacked,
    pretty labelshift,
]
\addplot+[ybar, bar width=12pt, color=tabblue] coordinates {(0,0.43) (1,0.73)};
\addlegendentry{No Heart Disease}
\addplot+[ybar, bar width=12pt, color=taborange] coordinates {(0,0.57) (1,0.27)};
\addlegendentry{Heart Disease}
\hfill
\nextgroupplot[
    title={Age 45--55},
    ymin=0, ymax=1,
    xtick={0,1},
    xticklabels={Men, Women},
    xlabel={Sex},
    pretty ybar stacked,
    pretty labelshift
]
\addplot+[ybar, bar width=12pt, tabblue] coordinates {(0,0.51) (1,0.89)};
\addplot+[ybar, bar width=12pt, taborange] coordinates {(0,0.49) (1,0.11)};
\hfill
\nextgroupplot[
    title={Age 56--62},
    ymin=0, ymax=1,
    xtick={0,1},
    xticklabels={Men, Women},
    xlabel={Sex},
    pretty ybar stacked,
    pretty labelshift
]
\addplot+[ybar, bar width=12pt, tabblue] coordinates {(0,0.29) (1,0.43)};
\addplot+[ybar, bar width=12pt, taborange] coordinates {(0,0.71) (1,0.57)};

\end{groupplot}
\end{tikzpicture}
\subcaption{Heart disease per sex for overall population and age groups. The difference between men and women is larger around ages 45--55 (middle) and 
smaller at 56--62 (right).}
\label{fig:intro-bars}
\end{minipage}
\hfill
\begin{minipage}[t]{0.34\textwidth}
\vspace*{0pt}
\centering
\begin{minipage}[t]{0.5\textwidth}
\vspace*{20pt}
\fbox{\parbox[t]{\textwidth}{
{\footnotesize \textbf{Subgroup}}\\[-2pt]
{\footnotesize $\texttt{Age} \in (36,63)$}\\[-2pt]
{\footnotesize $\texttt{Cholesterol} \in (235,500)$}\\[-2pt]
{\footnotesize $\texttt{Max Heartrate} \in (156,193)$}\\[2pt]
{\footnotesize \textbf{Coverage}}\\[-2pt]
{\footnotesize Men: 15\%, Women: 28\%}
}}
\end{minipage}
\hfill
\begin{minipage}[t]{0.4\textwidth}
\vspace*{0.43cm}
\begin{tikzpicture}
\pgfplotsset{every axis/.style={tick style={black}, ticklabel style={font=\small}}}
\begin{axis}[
    title={Subgroup},
    ymin=0, ymax=1,
    xtick={0,1},
    xticklabels={Men, Women},
    xlabel={Sex},
    width=\textwidth,
    height=3.6cm,
    ytick={0,0.5,1},
    pretty ybar stacked,
    pretty labelshift,
]
\addplot+[ybar, bar width=10pt, color=tabblue] coordinates {(0,0.76) (1,0.76)};
\addplot+[ybar, bar width=10pt, color=taborange] coordinates {(0,0.24) (1,0.24)};
\end{axis}
\end{tikzpicture}
\end{minipage}
\subcaption{A \emph{differential subgroup}: Individuals with low age, high cholesterol and high max heart rate 
show similar heart disease rates.}
\label{fig:intro-subgroup}
\end{minipage}

\caption{Analysis of heart disease dataset \citep{detrano1989international}.
Rates of heart disease differ substantially between women and men in the overall population (a). 
When stratifying by age groups, these differences change in magnitude (b), 
but it remains unclear which combinations of risk factors drive them.
On the right, we show a discovered \emph{differential subgroup}:
younger individuals with high cholesterol and high maximum heart rate exhibit similarly high heart disease rates, 
even after regularizing the influence of other covariates (c).
}
\label{fig:intro-example}
\end{figure*}
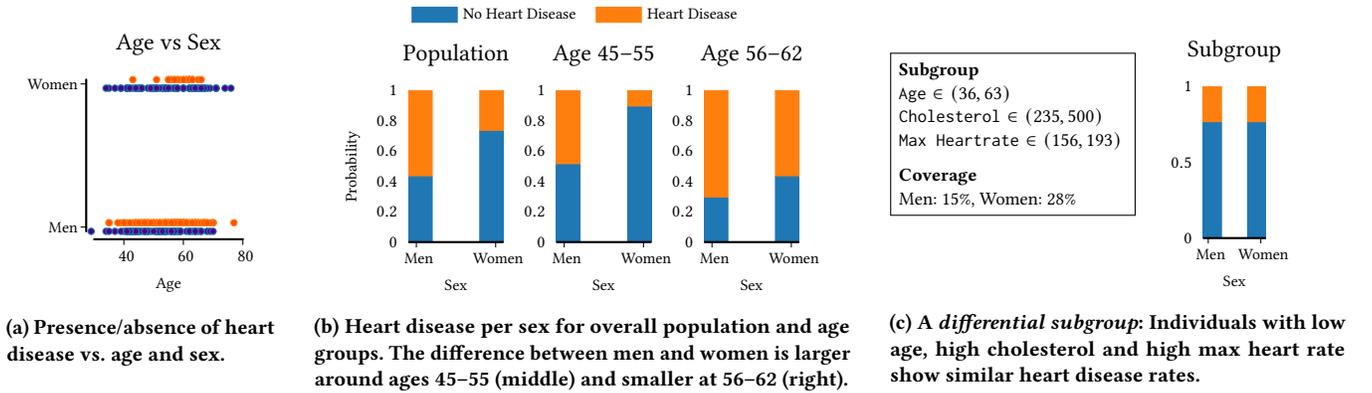

\section{Introduction}
What are the differences between two populations, and where do they stem from?
This question appears across sociology, medicine, and many other fields, but 
it rarely has a straightforward answer.
Population-level differences often increase, decrease, 
or even reverse once we examine specific sub-populations that capture confounding or interacting covariates.
Consider the heart disease dataset \citep{detrano1989international} shown in Figure \ref{fig:intro-example}.
Averaged over all individuals, men have a substantially higher rate of heart disease than women.
However, when stratifying by age groups, these differences vary considerably:
The gap widens for individuals aged 45--55 but narrows for those aged 56--62 (Figure~\ref{fig:intro-bars}).
This suggests that age modulates the relationship between sex and heart disease, raising a broader question:
which combinations of features drive such shifts, and can we identify them automatically from data?

Existing methods provide only partial answers.
Traditional subgroup discovery \citep{atzmueller2015subgroup} identifies sub-populations with exceptional behavior, 
but it operates within a single population and does not compare two groups to understand how they differ.
In clinical trials, the contrast between two groups (treatment and control) is central, yet subgroup analyses are typically defined a priori to preserve statistical power \citep{cook2004subgroup}.
Data-driven approaches that do discover subgroups rely on restrictive data-generating assumptions specific to randomized control trials \citep{foster2011subgroup,ballarini2018subgroup} or structured observational studies \citep{athey2016recursive}, 
limiting their applicability to more general, non-controlled settings.
Fairness analysis offers tools for quantifying and mitigating individual- or group-level disparities \citep{mehrabi2021survey}, but does not reveal where in the feature space such differences arise or why they occur.
In general, it remains an open problem to identify the subgroups in which the target variable exhibits pronounced differences between two populations.

To address this challenge, we formalize the problem of differential subgroup discovery.
Its goal is to discover and name sets of individuals from both groups who, despite sharing similar characteristics, exhibit significant differences in the local distribution of a target variable between the two populations.
This perspective generalizes traditional subgroup discovery from isolating exceptional behavior within one population to identifying where two populations diverge or converge.

In the heart disease example, our method uncovers a differential subgroup consisting of younger individuals with high cholesterol 
and high maximum heart rate who exhibit similarly high rates of heart disease regardless of sex (Figure \ref{fig:intro-subgroup}).
This aligns with well-established medical findings indicating that age, cholesterol, and heart rate are risk factors for heart disease \citep{wilson1998prediction,lang2010elevated}.

Differential subgroups thereby help understand \emph{where} and \emph{why} population-level differences arise by identifying the combinations of covariates that drive these differences.
To discover such patterns, we introduce \ourmethod, a method for differential subgroup discovery in tabular data.
Our key contributions are:
\begin{enumerate}
    \item We formalize the notion of differential subgroups and introduce the three desiderata such subgroups should satisfy:
    \begin{itemize}
        \item \emph{Exceptionality}: The target distributions differ exceptionally within the subgroup.
        \item \emph{Generality}: The subgroup represents a significant portion of both populations.
        \item \emph{Covariate Independence}: The observed difference cannot be attributed to other covariates.
    \end{itemize}
    \item We identify the conditions under which differential subgroups reflect causal differences attributable to population membership.
    \item We develop \ourmethod, a gradient-based approach that efficiently discovers differential subgroups 
    and outputs interpretable rule-based descriptions.
\end{enumerate}
Across synthetic datasets with known ground truth, real-world medical and treatment-effect case studies, and analyses comparing prediction errors between models, 
\ourmethod discovers interpretable subgroups that expose the structure underlying population-level differences.

\section{Preliminaries}
We consider a dataset of $\nsamples$ individuals $(\feat^{(i)}, \attribute^{(i)}, \targ^{(i)})$, $i \in \{1,\dots,\nsamples\}$, where $\targ$ is the \textbf{target variable}, 
$\feat$ represents the \textbf{descriptive features}, and $\attribute \in \{0, 1\}$ is the \textbf{binary attribute} that partitions the population into two groups. 
From a statistical perspective, we assume that each sample $(\feat^{(i)}, \attribute^{(i)}, \targ^{(i)})$ is a realization of the joint distribution of $P(\featvar, \attributevar, \targvar)$.
We denote random variables by capitals, write $p$ for their density, and $P$ for their distributions.

A \textbf{subgroup} selects a subset of individuals with shared feature characteristics through its membership function $\subgroupf: \R^{\nfeat} \to \{0, 1\}$.
The subgroup membership $\subgroupf(\featvar)=1$ indicates that an individual with features $\featvar$ belongs to the subgroup, while $\subgroupf(\featvar)=0$ indicates non-membership.
We denote by $\mathcal{S}$ the family of subgroup membership functions considered in this work.

For a subgroup $\subgroupf \in \mathcal{S}$ and a group $\attribute \in \{0,1\}$,
the distribution of the target variable within that subgroup is
\begin{equation}
    P_{\attribute,\subgroupf}(\targvar) 
    \coloneqq 
    P\bigl(\targvar \mid \attributevar=\attribute,\; \subgroupf(\featvar)=1\bigr)\,.
\end{equation}

To ensure that subgroup-specific quantities are well-defined for both populations, we require that each group occurs with positive probability across the support of $X$.

\begin{assumption}[Positivity]
For all covariate values $\feat$ in the support of $\featvar$, i.e.~where $p(\feat)>0$, both
populations have positive probability of occurring, i.e.,
\begin{equation}
    \forall \feat \text{ with } p(\feat)>0:\quad
    0 < \mathbb{P}(A=1 \mid X=\feat) < 1\;.
\end{equation}
\label{as:positivity}
\end{assumption}

Positivity ensures that for any subgroup $s \in \mathcal{\subgroup}$ intersecting the support of $X$, the conditional distributions $P_{0,s}(Y)$ and $P_{1,s}(Y)$ are both identifiable and comparable.
With these definitions in place, we now introduce differential subgroups.

\section{Differential Subgroups}
From the space of all possible subgroups,
our goal is to discover a subgroup $\subgroupf \in \mathcal{\subgroup}$, 
for which the group membership $\attributevar\in \{0,1\}$ results in an exceptional difference in the distribution of the target variable $P_{0,\subgroupf}(\targvar)$ and $P_{1,\subgroupf}(\targvar)$.
In the following, we motivate and formally define the properties of an interesting differential subgroup $\subgroupf$.

\subsection{Exceptionality}
In the differential setting, we are interested not in how a subgroup differs from the overall population, but in how the two populations differ within the subgroup. 
To capture this local contrast, we compare the subgroup-conditioned target distributions
$P_{0,\subgroupf}(\targvar)$ and $P_{1,\subgroupf}(\targvar)$ using a divergence measure $D$.
This motivates the following definition of differential exceptionality.

\begin{definition}[Differential Exceptionality]
    \label{def:exceptionality}
    The differential exceptionality of a subgroup $\subgroupf$ is defined as the divergence $D$ between the target variable distributions in the two groups as per
    \begin{equation}
        \mathcal{E}(\subgroupf) = D\left(P_{0,\subgroupf}(\targvar), P_{1,\subgroupf}(\targvar)\right)\;.
    \end{equation}
\end{definition}

We show the difference between differential and standard (non-differential) exceptionality in Figure \ref{fig:differential_example}.
Under the same subgroup $\subgroupf$, the overall distribution $P(\targvar)$ and the subgroup-conditioned distribution $P(\targvar \mid \subgroupf(\featvar)=1)$ are virtually identical.
However, if we contrast the two populations, we observe a significant difference in the income of men and women in this particular subgroup.
That is, differential exceptionality focuses only on the difference within the subgroup, independent of the distribution outside the subgroup.
This enables it to discover characteristics which one would not obtain by maximizing standard exceptionality between subgroup and overall population.

The choice of divergence measure $D$ is flexible in our framework.
It can be any measure that quantifies the difference between two distributions, e.g.~the Kullback-Leibler/Jensen-Shannon divergence, or Wasserstein distance.
In general, the higher the divergence, the more exceptional the subgroup is.
To search for subgroups with minimal difference, the objective changes to minimize the divergence instead of maximizing it.

\subsection{Support}
To be able to compare the target distribution between the two populations, any subgroup $\subgroupf$ must have support in both populations,
i.e.
\begin{equation}
    E[\subgroupf(\featvar) \mid \attributevar=0] > 0 \quad \text{and} \quad E[\subgroupf(\featvar) \mid \attributevar=1] > 0\;.
\end{equation}
On a specific dataset $\left\{\feat^{(i)}\right\}_{i=1}^{\nsamples}$, this means that there must be at least one sample $\feat^{(i)}$ from each group where $\subgroupf(\feat^{(i)}) = 1$.
In practice, supports beyond the minimal requirement are desirable,
such that a subgroup $\subgroupf$ is supported by a significant portion of the population.
Large supports also yield more reliable estimates of $P_{0,\subgroupf}(\targvar)$ and $P_{1,\subgroupf}(\targvar)$.

\begin{restatable}[Generality]{definition}{defgenerality}
    We define the generality of a subgroup $\subgroupf$ over two populations as the geometric mean over the individual supports as per 
    \begin{align}
        \generality(\subgroupf) = \left(\sqrt{E[\subgroupf(\featvar) \mid \attributevar=0] \cdot E[\subgroupf(\featvar) \mid \attributevar=1]}\right)^{\gamma}\;.
    \end{align}
\end{restatable}

Under the generality $\generality$, a subgroup $\subgroupf$ is supported, iff $\mathcal{G}_{\gamma}(\subgroupf) > 0$ which is the case if both $E[\subgroupf(\featvar) \mid \attributevar=0] > 0$ and $E[\subgroupf(\featvar) \mid \attributevar=1] > 0$.
The parameter $\gamma$ controls the sensitivity of the generality to the individual supports. 
Larger values of $\gamma$ penalize subgroups with small supports more heavily, while smaller values allow more flexibility in subgroup size.
\begin{figure*}[t]
    \centering
    \begin{subfigure}[t]{0.48\textwidth}
        \centering
        \includegraphics[width=\linewidth]{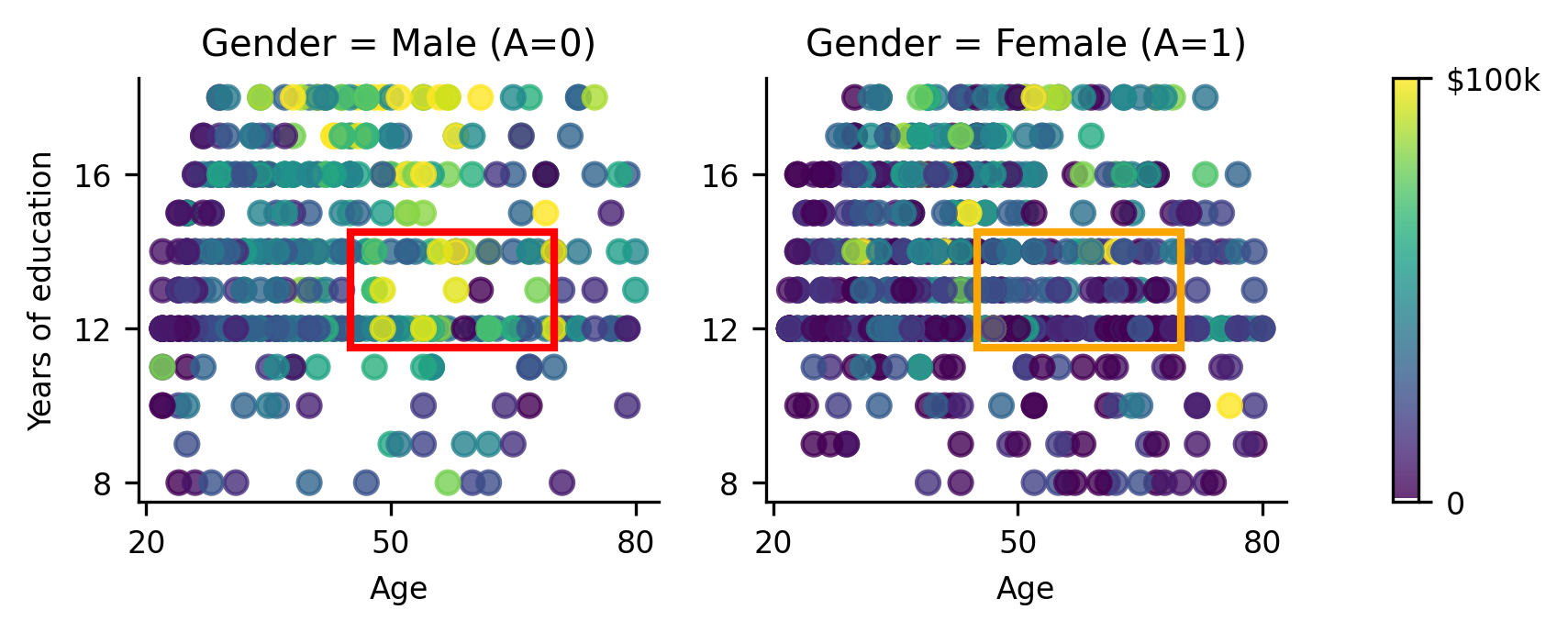}
        \caption{Wage distribution for men (left) and women (right) in relation to age and education level.}
    \end{subfigure}
    \hfill
    \begin{subfigure}[t]{0.48\textwidth}
        \centering
            \includegraphics[width=\linewidth]{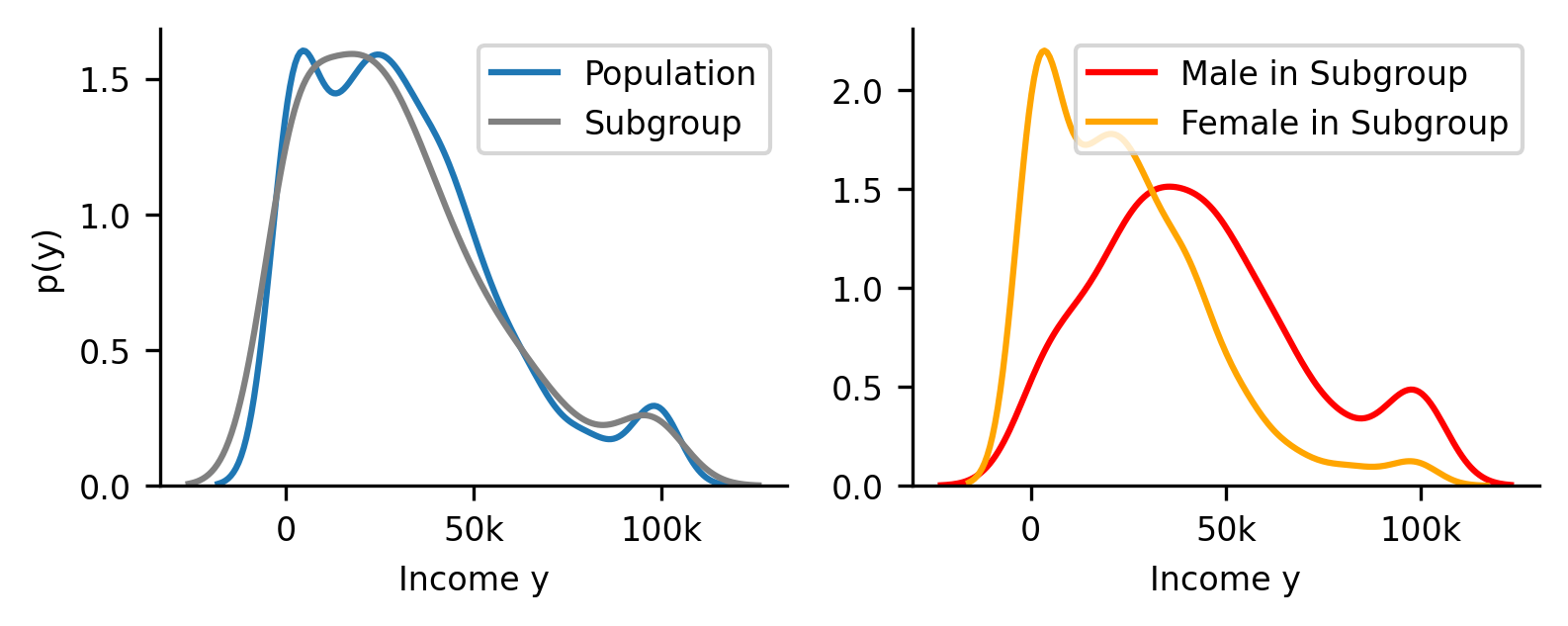}
            \caption{Wage distribution of \textcolor{blue}{population} vs \textcolor{gray}{subgroup} (left) and of \textcolor{red}{men} vs \textcolor{orange}{women} in subgroup (right).}
            \end{subfigure}
    \caption{The income distribution of the subgroup $\subgroupf$ with $\texttt{Age}\in [45, 70]$ and $\texttt{Education}\in [12, 14]$ does not deviate significantly
    from the overall population (left). But, when comparing men and women within that subgroup, there is a significant difference comparing the respective distributions (right).
    This subgroup is differentially exceptional.}
    \label{fig:differential_example}
\end{figure*}

\subsection{Covariate Dependence}
Exceptionality and generality characterize how pronounced and how well supported a subgroup difference is, but they do not address whether this difference reflects the effect of group membership itself. 
Within a subgroup, individuals may still vary considerably in their features $\featvar$, and such covariate influence can induce apparent differences between the two populations even when no true population effect is present. 
To separate meaningful population differences from those arising purely due to variation in $\featvar$ within the subgroup, we require a criterion that assesses how much the target distribution still depends on the features once subgroup membership is fixed.  
This motivates the following notion.
\begin{definition}[Covariate Dependence]
    \label{def:covariate}
    We define the covariate dependence as:
    \begin{equation}
        \sufficiency_0(\subgroupf) = E_{\featvar \mid \attributevar=0, \subgroupf(\featvar)=1}\left[D\left(P_{0,\subgroupf}(\targvar \mid  \featvar = \feat), P_{0,\subgroupf}(\targvar)\right)\right]
    \end{equation}
    and $\sufficiency_1(\subgroupf)$ analogously for the second group.
    The covariate dependence $\sufficiency(\subgroupf)$ is then given by
    \begin{equation}
        \sufficiency(\subgroupf) = \sufficiency_0(\subgroupf) + \sufficiency_1(\subgroupf)\;.
    \end{equation}
    \end{definition}
The covariate dependence $\sufficiency(\subgroupf)$ measures how much the target distribution within the subgroup is influenced by the features.
In the ideal case, the subgroup membership $\subgroupf$ captures all relevant information about the target variable $\targvar$ and $\sufficiency(\subgroupf)=0$.
Then the local target is conditionally independent of the features $\featvar$, i.e.~$(\targvar \indep \featvar) \mid (\subgroupf(\featvar)=1, \attributevar=0)$ and $(\targvar \indep \featvar) \mid (\subgroupf(\featvar)=1, \attributevar=1)$.
That means that the observed difference in the target variable can solely be attributed to the subgroup membership $\attributevar$ alone, and not due to any confounding effect from the features $\featvar$.

Minimizing the covariate dependence $\sufficiency(\subgroupf)$ aims to minimize the influence of indirect causal effects and thus avoid 
discovering false positive subgroups. 
By ensuring that the features do not locally influence the target variable, we can be more confident that the observed differences are 
not due to a shift in features caused by $\attributevar$, but instead directly driven by $\attributevar$ itself.
As we show in Section~\ref{sec:causal}, $\mathcal{C}(s)$ is the key quantity that determines when differential subgroup differences admit a causal interpretation.

\subsection{Problem Statement}
We can now define the problem of discovering differential subgroups as follows:
\begin{equation}
    \label{eq:objective_general}
    \argmax_{\subgroupf \in \mathcal{\subgroup}} \generality (\subgroupf) \cdot \mathcal{E}(\subgroupf) - \lambda \mathcal{C}(\subgroupf)\;.
\end{equation}
Here, $\lambda$ controls the trade-off between capturing a strong population-level difference and enforcing covariate independence within the subgroup. 
The objective balances these components: $\generality(\subgroupf)$ ensures that only subgroups supported in both populations are considered and scales $\mathcal{E}(\subgroupf)$, which quantifies the local outcome difference between them. 
Together with the covariate dependence penalty $\mathcal{C}(\subgroupf)$, the objective favors subgroups that are simultaneously general, exceptional, and minimally confounded.
\section{Causality of Differential Subgroups}
\label{sec:causal}
In the previous section, we defined differential subgroups through statistical contrast. 
Here, we analyze under which conditions such differences can be given a causal interpretation.
We will use Pearl's $\doit$-calculus \citep{pearl2009causality} to analyze the causal relationships between the group membership $\attributevar$, features $\featvar$, and target $\targvar$.
We now examine how differential subgroups behave under several prototypical causal structures.

\subsection{Observational Study ($\featvar \to \attributevar, \featvar \to \targvar, \attributevar \to \targvar$)}
In an observational study, the goal is to study the effect between treatment and control group, where membership $\attributevar$ is not randomized,
but rather determined by the features $\featvar$ (Fig.~\ref{fig:graph_observational}).
Because $X$ influences both treatment assignment and the outcome, the observational quantity $P(Y \mid A=a)$ differs from the interventional $P(Y \mid \text{do}(A=a))$.

Under certain conditions however, it is possible to identify the interventional distribution.
If $X$ satisfies the backdoor criterion, i.e.~if $\featvar$ blocks all backdoor paths between $\attributevar$ and $\targvar$,
then the interventional distribution can be identified as
\begin{equation}
    p(\targ \mid \doit(\attributevar=a)) = \int_{\feat} p(\targ \mid \featvar = \feat, \attributevar=a) p(\featvar=\feat)\;.
    \label{eq:backdoor}
\end{equation}
\begin{figure*}[t]
    \begin{subfigure}[t]{0.24\linewidth}
        \centering
        \begin{tikzpicture}[->, >=stealth, thick, every node/.style={circle, draw, minimum size=6mm, font=\small}]
            \node (A) at (0,1.5) {A};
            \node (X) at (-1,0) {X};
            \node (Y) at (1,0) {Y};
            \draw (X) -- (Y);
            \draw (X) -- (A);
            \draw (A) -- (Y);
        \end{tikzpicture}
        \caption{Observational Study}
        \label{fig:graph_observational}
    \end{subfigure}
    \hfill
    \begin{subfigure}[t]{0.24\linewidth}
        \centering
        \begin{tikzpicture}[->, >=stealth, thick, every node/.style={circle, draw, minimum size=6mm, font=\small}]
            \node (A) at (0,1.5) {A};
            \node (X) at (-1,0) {X};
            \node (Y) at (1,0) {Y};
            \draw (X) -- (Y);
            \draw (A) -- (Y);
        \end{tikzpicture}
        \caption{Randomized Trial}
        \label{fig:graph_randomized}
    \end{subfigure}
    \hfill
    \centering
    \begin{subfigure}[t]{0.24\linewidth}
        \centering
        \begin{tikzpicture}[->, >=stealth, thick, every node/.style={circle, draw, minimum size=6mm, font=\small}]
            \node (A) at (0,1.5) {A};
            \node (X) at (-1,0) {X};
            \node (Y) at (1,0) {Y};
            \draw (A) -- (X);
            \draw (A) -- (Y);
            \draw (X) -- (Y);
        \end{tikzpicture}
        \caption{Demographic Groups}
        \label{fig:graph_demographic}
    \end{subfigure}
    \hfill
    \begin{subfigure}[t]{0.24\linewidth}
        \centering
        \begin{tikzpicture}[->, >=stealth, thick, node distance=2cm, every node/.style={circle, draw, minimum size=6mm, font=\small}]
            \node (A) at (0,1.5) {A};
            \node (X) at (-1,0) {X};
            \node (Y) at (1,0) {Y};
            \draw (A) -- (X);
            \draw (X) -- (Y);
        \end{tikzpicture}
        \caption{Full Mediation}
        \label{fig:graph_mediator}
    \end{subfigure}
    \caption{Structural causal models for group indicator $\attributevar$, features $\featvar$, and target $\targvar$. When the covariate dependence is minimized, 
    i.e.~$\sufficiency(\subgroupf)=0$, differential subgroups capture the interventional distribution (\textbf{a}) or the controlled direct effect (\textbf{c}) of $\attributevar$ on $\targvar$.
    In a randomized trial (\textbf{b}), the interventional distribution is equal to the observational distribution regardless of $\sufficiency(\subgroupf)$.
    In the case of full mediation (\textbf{d}), there cannot be a differential subgroup with $\sufficiency(\subgroupf)=0$ and $\generality(\subgroupf)>0$.
    }
    \label{fig:causal_structure}
\end{figure*}

That is, if $\featvar$ is a valid backdoor, then it is possible to infer the interventional distribution
by integrating over the features $\featvar$.
Conditioning on $s(X)=1$ restricts attention to a region of the feature space, but does not itself remove the influence of $X$ on $Y$. 
Consequently, the resulting subgroup distribution $P_{a,s}(Y)$ need not match the interventional distribution. 
However, if the covariate dependence is minimized, i.e.~$\sufficiency(\subgroupf)=0$, 
then all remaining feature-based heterogeneity is removed within the subgroup
and the subgroup $\subgroupf$ captures the interventional distribution within that subgroup.

\begin{restatable}{proposition}{propCausalEffect}
    \label{prop:causal_effect}
    Let $\featvar$ be a valid backdoor to infer the interventional distribution as per Eq.~\eqref{eq:backdoor}.
    Then, any subgroup $\subgroupf$ that minimizes the covariate dependence $\sufficiency(\subgroupf)=0$ with generality $\generality(\subgroupf)>0$
    captures the interventional distribution within the subgroup
    \begin{equation}
        P_{\attribute,\subgroupf}(\targvar) = P(\targvar \mid \doit(\attributevar=a), \subgroupf(\featvar)=1)\;.
    \end{equation}
\end{restatable}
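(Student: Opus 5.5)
We want to show that $P_{a,s}(Y)$ coincides with the interventional distribution restricted to the subgroup. The plan has three steps: define the target quantity by applying the backdoor adjustment of Eq.~\eqref{eq:backdoor} to the subpopulation $s(X)=1$; use $\sufficiency(\subgroupf)=0$ to show that the integrand no longer depends on $x$; and then integrate.

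\textbf{Step 1: the target quantity.} We read $P(\targvar \mid \doit(\attributevar=a), \subgroupf(\featvar)=1)$ as the interventional distribution evaluated on the region $s(X)=1$. Because $s(X)$ is a deterministic function of $X$ and $A$ does not cause $X$ in Fig.~\ref{fig:graph_observational}, intervening on $A$ leaves $p(x)$ unchanged. Conditioning on $s(X)=1$ then only restricts the adjustment to the subgroup:
\begin{equation*}
p(y \mid \doit(A=a), s(X)=1) = \int_{x:\, s(x)=1} p(y \mid X=x, A=a)\, p(x \mid s(X)=1)\, dx .
\end{equation*}
To justify this, I would use the backdoor criterion: $X$ blocks every backdoor path, so $Y_a \indep A \mid X$. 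Conditioning on a function of $X$ preserves this independence.

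\textbf{Step 2: using the covariate dependence.} Since $D$ is a divergence, $D(P,Q)=0$ holds iff $P=Q$. All summands of $\sufficiency$ are nonnegative, so $\sufficiency(\subgroupf)=0$ forces $\sufficiency_a(\subgroupf)=0$ for both $a\in\{0,1\}$. An expectation of a nonnegative quantity is zero only if the quantity vanishes almost surely. Hence
\begin{equation*}
p(y \mid X=x, A=a) = p_{a,s}(y) \quad \text{for } P_{X\mid A=a,\, s(X)=1}\text{-a.e. } x .
\end{equation*}
Here I use that on $\{s(x)=1\}$, conditioning additionally on $s(X)=1$ adds nothing once $X=x$ is fixed. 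The assumption $\generality(\subgroupf)>0$ guarantees that both conditional laws $P_{a,s}$ are well defined.

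\textbf{Step 3: integrating, and the main obstacle.} The main obstacle is a measure mismatch. Step 2 gives the equality only almost everywhere with respect to $p(x\mid A=a, s(X)=1)$, while Step 1 integrates against $p(x\mid s(X)=1)$. Positivity (Assumption~\ref{as:positivity}) closes this gap. Since $0<\mathbb{P}(A=a\mid X=x)$ wherever $p(x)>0$, and
\begin{equation*}
p(x\mid A=a, s(X)=1) \propto \mathbb{P}(A=a\mid x)\, p(x)\, s(x),
\end{equation*}
the two measures are mutually absolutely continuous on the subgroup. So the null sets agree, and the equality from Step 2 holds $p(x\mid s(X)=1)$-a.e. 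Substituting it into Step 1 pulls out the constant $p_{a,s}(y)$:
\begin{equation*}
p(y \mid \doit(A=a), s(X)=1) = p_{a,s}(y)\int_{x:\, s(x)=1} p(x\mid s(X)=1)\,dx = p_{a,s}(y),
\end{equation*}
which proves the claim for each $a$.
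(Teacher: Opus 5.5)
Your proof is correct and follows essentially the same route as the paper's: restrict the backdoor adjustment to $\subgroupf(\featvar)=1$, use $\sufficiency(\subgroupf)=0$ to make $P(\targvar \mid \attributevar=a, \featvar=\feat, \subgroupf(\featvar)=1)$ constant in $\feat$, and pull that constant out of the integral. Your Step 3 uses Positivity (Assumption~\ref{as:positivity}) to show that $p(\feat \mid \attributevar=a, \subgroupf(\featvar)=1)$ and $p(\feat \mid \subgroupf(\featvar)=1)$ share null sets; this makes explicit a point the paper passes over when it substitutes the conditional independence of Lemma~\ref{le:independence} into an integral taken against $p(\feat \mid \subgroupf(\featvar)=1)$.
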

We provide the proof in Appendix \ref{sec:proof_causal_effect}.
The result suggests that given data from an observational study, differential subgroups search for a set of data points with $\subgroupf(\featvar)=1$ 
that capture the interventional distribution of the target variable $\targvar$, and which have an exceptional treatment effect. 

\subsection{Randomized Trial ($\featvar \to \targvar, \attributevar \to \targvar, \attributevar \indep \featvar$)}
If the treatment assignment $\attributevar$ is randomized, i.e.~$\attributevar$ is independent of the covariates $\featvar$,
the interventional distribution is equal to the observational distribution.
This extends naturally to all sub-distributions induced by a subgroup $\subgroupf$, regardless of whether the covariate dependence is minimal or not,
i.e.~$\sufficiency(\subgroupf) > 0$.

Minimizing $\sufficiency(\subgroupf)$ can still be desirable.
A low covariate dependence mainly ensures that the difference in target distribution is driven by $\attributevar$ and not some spurious dependence on $\featvar$.
In general, in a randomized control trial, differential subgroups may inform treatment policies by discovering patients with exceptional treatment effects.

\subsection{Demographic Group ($\attributevar \to \featvar, \featvar \to \targvar, \attributevar \to \targvar$)}
Next, we examine the setting where the attribute $\attributevar$ has an influence on the feature distribution $P(\featvar)$. 
This structure is common when searching for disparities between protected demographic groups, where membership $\attributevar$ is an inherent attribute like gender or ethnicity that can influence mediating features $\featvar$ (e.g., job, income).

The effect of $\attributevar$ on $\targvar$ is twofold:
Firstly, $\attributevar$ can have a direct effect ($\attributevar \to \targvar$).
Secondly, it can have an indirect effect mediated through the features ($\attributevar \to \featvar \to \targvar$).

To distinguish these pathways, causal inference defines the Controlled Direct Effect (CDE). The CDE measures the effect of changing $\attributevar$ while holding the mediating features $\featvar$ constant at a specific level $\feat$. Following \citet{pearl2022direct}, it is defined as:
\begin{align}
    \label{eq:cde}
    \text{CDE}(\feat) = E[\targvar | \doit(\attributevar=1, \featvar=\feat)]
    - E[\targvar | \doit(\attributevar=0, \featvar=\feat)]
\end{align}
Our framework aims to find subgroups where this causal relationship is simplified. If a subgroup $\subgroupf$ satisfies the conditional independence $(\targvar \indep \featvar) \mid (\attributevar=\attribute, \subgroupf(\featvar)=1)$ in the case of $\sufficiency(\subgroupf)=0$,
then the indirect path through $\featvar$ is effectively neutralized within that subgroup.
For such a subgroup, the CDE does not vary with $\feat$ among members of the subgroup and is identifiable directly from the observed difference in means
for any member $\feat$ with $\subgroupf(\feat)=1$:
\begin{equation}
    \quad \text{CDE}(\feat) = E[\targvar \mid \attributevar=1, \subgroupf(\featvar)=1] - E[\targvar \mid \attributevar=0, \subgroupf(\featvar)=1]
\end{equation}
For mediating variables, the objective in Eq.~\eqref{eq:objective_general} balances discovering subgroups with a large total effect against discovering subgroups whose effect is largely direct. 
Increasing the regularization parameter $\lambda$ places more weight on minimizing the covariate dependence $\mathcal{C}(s)$, thereby favoring subgroups in which the indirect pathway $\attributevar \to \featvar \to \targvar$ is neutralized and the observed exceptionality reflects a controlled direct effect. 
Conversely, setting $\lambda$ close to zero prioritizes the magnitude of the total effect, regardless of whether it arises through direct or mediated pathways.

In this way, differential subgroups provide a flexible tool for studying demographic disparities: the practitioner can tune the search toward subgroups that reveal overall disparities or toward those that isolate a stable, causally interpretable direct effect.

\subsection{Full Mediation ($\attributevar \to \featvar \to \targvar$ and $(\targvar \indep \attributevar) \mid \featvar$)}
Finally, we consider a scenario of full mediation, where there is no direct causal effect of $\attributevar$ on $\targvar$.
Often times, it is unclear whether a direct effect exists or not.
In this case, we can use differential subgroups to disprove the absence of a direct effect.
The following proposition shows that if the model of full mediation is correct, it is impossible to find a subgroup that both has no covariate dependence and exhibits an exceptional outcome. 
Therefore, if we do find such a subgroup in practice, it serves as evidence against the full mediation assumption, suggesting a direct effect likely exists.
We provide the complete proof in the Appendix \ref{sec:proof_indirect_only}.
\begin{restatable}{proposition}{propIndirect}
    \label{prop:indirect_only}
    In a model of full mediation, where $\attributevar \to \featvar \to \targvar$ and $(\targvar \indep \attributevar) \mid \featvar$,
     there cannot exist a subgroup $\subgroupf$ that simultaneously satisfies $\sufficiency(\subgroupf)=0$ and $\mathcal{E}(\subgroupf) > 0$.
\end{restatable}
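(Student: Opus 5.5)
We argue by contradiction. Suppose a subgroup $\subgroupf$ satisfies $\sufficiency(\subgroupf)=0$; we show that $P_{0,\subgroupf}(\targvar)=P_{1,\subgroupf}(\targvar)$, hence $\mathcal{E}(\subgroupf)=0$ for any divergence $D$ with $D(P,P)=0$. We take $\mathcal{E}(\subgroupf)$ to be well defined, i.e.\ $\subgroupf$ has support in both populations ($\generality(\subgroupf)>0$).

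\textbf{Step 1: vanishing dependence gives conditional independence.} We use that $D$ is a divergence, so $D(P,Q)=0$ iff $P=Q$. Since each $\sufficiency_a(\subgroupf)$ is an expectation of a nonnegative quantity, $\sufficiency(\subgroupf)=0$ forces
\begin{equation*}
P_{a,\subgroupf}(\targvar\mid\featvar=\feat)=P_{a,\subgroupf}(\targvar)
\end{equation*}
for $P(\featvar\mid\attributevar=a,\subgroupf(\featvar)=1)$-almost every $\feat$, for both $a\in\{0,1\}$. Thus the conditional target distribution is constant in $\feat$ on the subgroup within each group: write $P_{a,\subgroupf}(\targvar)=Q_a$.

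\textbf{Step 2: use full mediation.} Full mediation gives $(\targvar\indep\attributevar)\mid\featvar$. Since $\subgroupf(\featvar)$ is a function of $\featvar$, conditioning on $\{\featvar=\feat,\subgroupf(\featvar)=1\}$ adds nothing beyond $\featvar=\feat$. Hence for $\subgroupf(\feat)=1$,
\begin{equation*}
P(\targvar\mid\featvar=\feat,\attributevar=a)=P(\targvar\mid\featvar=\feat)=:R_{\feat},
\end{equation*}
so $Q_a=R_{\feat}$ for almost every $\feat$ in the support of $P(\featvar\mid\attributevar=a,\subgroupf(\featvar)=1)$.

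\textbf{Step 3: link the two groups (main obstacle).} We need a set of $\feat$ lying, with positive probability, in the support of both $P(\featvar\mid\attributevar=0,\subgroupf=1)$ and $P(\featvar\mid\attributevar=1,\subgroupf=1)$. Then $Q_0=R_{\feat}=Q_1$ on that set, giving $\mathcal{E}(\subgroupf)=0$. We obtain this overlap from Assumption~\ref{as:positivity}. Bayes' rule gives
\begin{equation*}
p(\feat\mid\attributevar=a,\subgroupf=1)\propto p(\feat)\,\mathbb{P}(\attributevar=a\mid\featvar=\feat)\,\subgroupf(\feat),
\end{equation*}
and positivity makes $\mathbb{P}(\attributevar=a\mid\featvar=\feat)>0$ for both $a$. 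So the two conditional feature laws are mutually absolutely continuous, both equivalent to $P(\featvar\mid\subgroupf(\featvar)=1)$. The almost-sure statements of Steps 1 and 2 therefore hold on a common set of full measure, which is nonempty because $\generality(\subgroupf)>0$.

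The delicate points are the measure-theoretic handling of the "almost every $\feat$" qualifiers and the requirement that $D$ separate distributions. We make both explicit as assumptions, and then $Q_0=Q_1$ contradicts $\mathcal{E}(\subgroupf)>0$.
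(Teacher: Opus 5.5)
Your proof is correct and follows the same route as the paper's: Lemma~\ref{le:independence} turns $\sufficiency(\subgroupf)=0$ into within-group constancy of $P(\targvar\mid\featvar=\feat,\attributevar=a)$ on the subgroup, full mediation identifies both constants with $P(\targvar\mid\featvar=\feat)$ at a common $\feat$, and so $\mathcal{E}(\subgroupf)=0$. Your Step~3 makes rigorous what the paper only asserts (``the premise that the supports overlap''): deriving mutual absolute continuity of the two conditional feature laws from Assumption~\ref{as:positivity} is what licenses a single $\feat$ valid for both groups, and without it the claim fails (e.g.\ $\featvar=\attributevar$, $\targvar=\featvar+N_{\targvar}$, $\subgroupf\equiv 1$).
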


\textbf{General Discussion.}
This section has established a causal foundation for differential subgroups, demonstrating that their interpretation is fundamentally tied to the assumed causal structure of the data. 
We have shown that under different, plausible causal models, the discovery of an exceptional subgroup has a distinct meaning. 
In observational studies, we can identify the local average treatment effect, while in settings with demographic attributes, we isolate the controlled direct effect.
This transforms subgroup discovery from a purely descriptive pattern-finding tool into a more powerful instrument 
for discovering causal drivers of group disparities.


\section{Discovering Differential Subgroups}
\label{sec:method}
In this section, we introduce \ourmethod, a method to discover differential subgroups given a dataset of $\nsamples$ individuals $(\feat^{(i)}, \attribute^{(i)}, \targ^{(i)})$.
We focus on tabular data with a continuous feature space $\featvar \in \R^{\nfeat}$, where we one-hot encode the categorical features.
Regarding the target variable, we present solutions for a 
discrete-valued as well as an univariate continuous target $\targvar$.

\subsection{Differentiable Rule Learning}
To capture the characteristics of a subgroup, we use a rule-based membership function $\subgroupf: \R^{\nfeat} \to [0,1]$.
In this work, we focus on logical conjunctions over single-feature conditions, as used by decision trees and regular subgroup discovery methods~\citep{atzmueller2015subgroup}.
These lead to interpretable descriptions such as "$\texttt{Age} \in (36,63) \,\&\, \texttt{Cholesterol} \in (235,500) \,\&\, \texttt{Max Heartrate} \in (156,193)$".

We encode the presence of a characteristic in a feature $j \in [\nfeat]$ as $\pred(\feat_j; \lowerbound_j, \upperbound_j)=\mathds{1}(\lowerbound_j < \feat_j < \upperbound_j)$.
The rule function $\subgroupf$ aggregates the conditions over all dimensions with parameters $\theta=\{\lowerbound_j, \upperbound_j\}_{j=1}^{\nfeat}$ using a logical conjunction
\begin{equation}
    \subgroupf(\feat;\theta) = \bigwedge_{j=1}^{\nfeat} \pred(\feat_j; \lowerbound_j, \upperbound_j)\;.
\end{equation}

We instantiate \ourmethod using the differentiable rule learner $\hat{\subgroupf}_{\temp}: \R^{\nfeat} \to [0,1]$ from the \syflow framework for subgroup discovery \citep{xu2024syflow}.
The parameters of $\hat{\subgroupf}_{\temp}$ are learned by gradient descent and hence do not require restrictive pre-processing and scale well to high-dimensional data.

We briefly summarize the key components of the differentiable rule learner $\hat{\subgroupf}_{\temp}$.
For each feature $j$, the learner places soft conditions $\hat{\pred}_{\temp}(\feat_j; \lowerbound_j, \upperbound_j)$ with learnable thresholds $\lowerbound_j, \upperbound_j \in \R$ as per
\begin{equation}
    \hat{\pred}_{\temp}(\feat_j; \lowerbound_j, \upperbound_j) = \frac{1}{1 + \exp\left(-\frac{\feat_j - \lowerbound_j}{\temp}\right)+ \exp\left(-\frac{\upperbound_j - \feat_j}{\temp}\right)}\;,
\end{equation}
where $\temp$ is a temperature parameter that controls the smoothness.
When annealing the temperature $\temp \to 0$, the condition approaches the binary thresholding function \citep{xu2024syflow}.
To combine the conditions into a logical conjunction, the weighted harmonic mean is used as 
\begin{equation}
    \label{eq:soft_rule}
    \hat{\subgroupf}_{\temp}(\feat;\theta,\andweightvec) = \frac{\sum_{j=1}^{\nfeat} \andweight_j}{\sum_{j=1}^{\nfeat} \andweight_j \cdot \hat{\pred}(\feat_j; \lowerbound_j, \upperbound_j, \temp)^{-1}} \quad \text{with} \quad \andweight_j \in \R^+_0\;.
\end{equation}
The weights $\andweight_j$ allow to exclude conditions from the rule by setting $\andweight_j=0$ and thereby simplify 
the rule's description.
Over the remaining conditions, the soft rule $\hat{\subgroupf}_{\temp}(\feat;\theta,\andweightvec)=1$ only if all conditions $\hat{\pred}_{\temp}(\feat_l;\lowerbound_l,\upperbound_l)=1$.
On the other hand, if there is a condition which is not met with $\andweight_l>0$, the soft rule tends to zero.
Hence, for the binary conditions obtained in the limit of $\temp \to 0$, $\hat{\subgroupf}$ mimics the behavior of a logical conjunction, whilst flexibly learning
which features to include ($\andweight_j$) and where to place the thresholds ($\lowerbound_j, \upperbound_j$).

\subsection{Optimization Objective}
Using the differentiable rule learner $\hat{\subgroupf}_{\temp}$, we now instantiate the objective proposed in Equation \eqref{eq:objective_general}.
For brevity, we will omit the dependence on the parameters $\theta$ and $\andweightvec$ in the following.

\textbf{Generality.}  
We interpret the output of the soft rule 
$\subgroupfhat_{\temp} : \R^{\nfeat} \to [0,1]$ 
as the probability that a sample $\feat$ belongs to the subgroup $\subgroupf$,  
i.e.~$\subgroupfhat_{\temp}(\feat) = \hat{\Pr}(\subgroupf(\feat)=1)$.  
Over a dataset $\{\feati, \attributei\}_{i=1}^{\nsamples}$, let  
$\nsamples_0 = \sum_{i=1}^{\nsamples} \mathds{1}(\attributei=0)$  
and $\nsamples_1 = \sum_{i=1}^{\nsamples} \mathds{1}(\attributei=1)$.  
Then the generality of $\subgroupfhat_{\temp}$ is computed as  
\begin{align}
    \label{eq:generality}
    \generality(\subgroupfhat_{\temp})
    &= \left( 
        \frac{1}{\nsamples_0}\sum_{\{i : \attributei=0\}} 
        \subgroupfhat_{\temp}(\feati)
        \;\cdot\;
        \frac{1}{\nsamples_1}\sum_{\{i : \attributei=1\}} 
        \subgroupfhat_{\temp}(\feati)
      \right)^{\!\gamma/2} .
\end{align}

\textbf{Exceptionality.}  
To compute the exceptionality, we need to estimate the target variable distributions 
$\hat{P}_{0,\subgroupf}(\targvar)$ 
and $\hat{P}_{1,\subgroupf}(\targvar)$.  
We weight each sample $\targi$ by the subgroup membership $\subgroupfhat_{\temp}(\feati)$ and use for 
a \emph{discrete} target $\targvar \in \{0, \dots, \nclasses\}$ the empirical distribution, and a kernel density estimator for a 
\emph{continuous} target $\targvar \in \R$.
During training, we refit the estimators after every $k=10$ updates to $\subgroupfhat_{\temp}$. 

Now that we have the target distributions, we can compute the differential exceptionality (Def.~\ref{def:exceptionality}).  
As divergence measure $D$, we use the Jensen–Shannon divergence,  
\[
D_{\text{JS}}(P, Q) = \frac{1}{2}D_{\text{KL}}(P , M) + \frac{1}{2} D_{\text{KL}}(Q , M), 
\quad 
M = \tfrac{1}{2}(P+Q).
\]
Compared to the Kullback–Leibler divergence, $D_{\text{JS}}$ seamlessly handles cases where the supports of $P$ and $Q$ do not overlap.  
Let the mixture density be 
$m(\targ) = \tfrac{1}{2}(\phat_{0,\subgroupf}(\targ) + \phat_{1,\subgroupf}(\targ))$.  
Using the approximation of the KL divergence under subgroup membership \citep{xu2024syflow},  
\begin{align*}
    D_{\text{KL}}\left(\hat{P}_{\attribute,\subgroupf}(\targvar) , M(\targvar) \right)
    &\approx 
    \frac{1}{\nsamples_a} 
    \sum_{\{i : \attributei=a\}} 
        \subgroupfhat_{\temp}(\feati)\,
        \log\!\left(
            \frac{\phat_{a,\subgroupf}(\targi)}{m(\targi)}
        \right) ,
\end{align*}
we obtain for the exceptionality of $\subgroupfhat_{\temp}$ the following approximation  
\begin{align}
    \label{eq:exceptionality}
    \mathcal{E}(\subgroupfhat_{\temp})
    &= \frac{1}{2}D_{\text{KL}}(\hat{P}_{0,\subgroupf}(\targvar) , M(\targvar)) 
     + \frac{1}{2}D_{\text{KL}}(\hat{P}_{1,\subgroupf}(\targvar) , M(\targvar)).
\end{align}

\textbf{Covariate Dependence.}
Lastly, the covariate dependence $\sufficiency(\subgroupf)$ is needed to regulate the influence of the features $\featvar$ on the target variable $\targvar$.
To that end, we also need to estimate $\hat{P}(\targvar \mid \featvar=\feat, \attributevar=\attribute)$. For a \emph{discrete} target variable, we train a random forest model $f(\feat)$ to predict
the class probabilities $\hat{\mathbb{P}}(\targvar=l \mid \featvar=\feat,\attributevar=\attribute)$. For each sample $\feati$, we compute a local divergence as
\begin{align}
     c(\feati,\attributei) =&  \sum_{l=1}^{\nclasses} \hat{\mathbb{P}}(\targvar = l \mid \featvar=\feati,\attributevar=\attributei)\\
     & \cdot \log\left(\frac{\hat{\mathbb{P}}(\targvar = l \mid \featvar=\feati,\attributevar=\attributei)}{\hat{\mathbb{P}}_{\attributei,\subgroupf}(\targvar = l)}\right)\;.
\end{align}

For a \emph{continuous} target variable, assessing the conditional distribution is more difficult.
Therefore, to bypass the need for a full conditional distribution, we compare only the first moment of the local resp.~subgroup distribution.
For a continuous target variable $\targvar$, we likewise fit a random forest to compute the squared difference between the local mean and the subgroup mean as
\begin{align}
    c(\feati,\attributei) = (f(\feati,\attributei) - E[\targvar \mid \attributevar=\attributei,\subgroup=1])^2\;.
\end{align}
Then, the covariate dependence is computed as
\begin{align}
    \label{eq:covariate_dependence}
    \sufficiency(\subgroupfhat_{\temp}) =& \sum_{i=1}^{\nsamples} \subgroupfhat_{\temp}(\feati) \cdot c(\feati,\attributei)\;.
\end{align}
Thus, we obtain a fully differentiable approach to discovering differential subgroups. 
We train \ourmethod as a single end-to-end optimization problem, where we update the parameters of the soft rule $\hat{\subgroupf}_{\temp}$ by gradient descent on the objective in Eq.~\eqref{eq:objective_general}.
In the beginning of training, we set a high temperature $\temp_{\text{start}}$ to enable smooth optimization, and gradually anneal $\temp$ to obtain a sharper, more discrete rule.
Algorithm~\ref{alg:diffsub} summarizes the full procedure.

\textbf{Multiple Subgroups.}
While our method is designed to find a single differential subgroup, it can be extended to discover multiple subgroups.
A simple approach is to iteratively re-apply \ourmethod, excluding previously found subgroups from the dataset.
More sophisticated methods for discovering multiple subgroups simultaneously are left for future work.

\begin{algorithm}[t]
\caption{\ourmethod: Discovering Differential Subgroups}
\label{alg:diffsub}
\begin{algorithmic}[1]

\Require Dataset $\mathcal{D}=\{(x_i,a_i,y_i)\}_{i=1}^n$, 
         soft rule $\subgroupfhat_{\temp}(x_i;\theta,\andweightvec)$,
         distribution estimators $\hat{P}_{0,s}(Y)$, $\hat{P}_{1,s}(Y)$,
         hyperparameters $\gamma, \lambda$, 
         epochs $K$,
         temperatures $\temp_{\text{start}},\temp_{\text{end}},\temp_{\text{step}}=\frac{\temp_{\text{start}} - \temp_{\text{end}}}{K}$

\State $\temp \gets \temp_{\text{start}}$
\For{$k = 1,\dots,K$}
    \For {$i = 1,\dots,n$}
        \State $m_i \gets \subgroupfhat_{\temp}(x_i;\theta,\andweightvec)$ \Comment{Subgroup Memberships}
    \EndFor
    \State $\hat{P}_{0,s}(Y) \gets \text{FitDistribution}(\{y_i : a_i=0\}, \{m_i : a_i=0\})$
    \State $\hat{P}_{1,s}(Y) \gets \text{FitDistribution}(\{y_i : a_i=1\}, \{m_i : a_i=1\})$
    \State $\exceptionality \gets D(\hat{P}_{0,s}(Y), \hat{P}_{1,s}(Y))$ \Comment{Exceptionality}
    \State $\generality \gets (\sum_{i=1, a_i=0}^n \frac{m_i}{n_0} )^{\gamma/2} (\sum_{i=1, a_i=1}^n \frac{m_i}{n_1})^{\gamma/2}$ \Comment{Generality}
    \State $\sufficiency \gets \frac{1}{n} \sum_{i=1}^n m_i \cdot c(x_i, a_i)$ \Comment{Covariate Dependence}
    \State $\mathcal{L} \gets \generality \cdot \exceptionality - \lambda \cdot \sufficiency$ \Comment{Loss}
    \State $(\theta, \andweightvec) \gets \text{GradientUpdate}(-\mathcal{L}, \theta, \andweightvec)$
    \State $\temp \gets \temp - \temp_{\text{step}}$ \Comment{Temperature Annealing}
\EndFor
\State $\text{Rule}\gets ""$ \Comment{Extract Rule}
\For {$j = 1,\dots,\nfeat$}
    \If{$\andweight_j > 0$}
        \State $\text{Rule} \gets \text{Rule} \;\&\;\texttt{Feature}_j  \in (\lowerbound_j, \upperbound_j)$
    \EndIf
\EndFor
\State \Return $\text{Rule}$, $\hat{P}_{0,s}(Y)$, $\hat{P}_{1,s}(Y)$
\end{algorithmic}
\end{algorithm}

\section{Related Work}
Subgroup analysis appears in several research areas, but with differing goals. 
Existing work in subgroup discovery, fairness, and treatment-effect heterogeneity addresses related questions, yet approaches them from distinct perspectives. 
Below, we situate our setting within these lines of work.

\textbf{Subgroup Discovery}
on tabular data is a well-established task \citep{atzmueller2015subgroup}.
The goal is to identify a subset of individuals that exhibit exceptional behavior, e.g.~a subgroup with a higher survival rate.
Subgroup exceptionality is quantified as the deviation from the overall population, such as differences in means \citep{lemmerich2016fast}, dispersion-aware measures \citep{boley2017identifying}, or by comparing the entire distribution \citep{xu2024syflow}.

\citet{kalofolias2017efficiently} extend subgroup discovery to a protected attribute.
They aim to minimize the effect of the sensitive attribute on the target variable by ensuring that the distribution of a sensitive attribute, e.g.~the ratio of males to females, is similar in the subgroup to the overall population.
However, they still only search for subgroups that diverge from the overall population.
Contrast set mining \citep{bay2001detecting} is another related direction, aiming to find feature characteristics that occur significantly more often in one group than another.
Unlike differential subgroups however, which analyze differences in a third variable such as survival rate, 
contrast set mining seeks to characterize the difference between the groups themselves, e.g.~the difference in features between men and women.

\textbf{Machine Learning Fairness}
is concerned with defining and quantifying disparity between a protected and an unprotected group.
Statistical parity \citep{calders2010three,kamishima2011fairness} requires that the marginal distribution of the target variable be equal across protected groups.
\citet{dwork2012fairness} instead propose individual fairness, formalizing the idea that similar individuals should be treated similarly.
However, their approach focuses on enforcing fairness through regularization rather than discovering where disparities arise.

Subgroup-based fairness frameworks likewise aim to train models that satisfy fairness constraints across many subgroups \citep{kearns2018preventing,martinez2021blind,shui2022learning}.
Similarly, multicalibration and multiaccuracy methods \citep{hebert2018multicalibration,kim2019multiaccuracy} adjust predictors to ensure accuracy or calibration across a pre-defined family of subgroups.

A complementary line of work, discrimination discovery, detects potential discrimination at the individual level by comparing each decision to matched counterparts \citep{zhang2016situation,zhang2018causal}.
These approaches identify whether discrimination may have occurred but do not yield interpretable subgroup descriptions.
Slicing methods \citep{chung2019slice,sagadeeva2021sliceline} discover subgroups with high error rates, but—like classical subgroup discovery—measure deviations relative to the overall population rather than contrasting outcome distributions between two groups.

\begin{figure*}[t!]
    \begin{subfigure}[t]{0.19\linewidth}
        \centering
        \begin{tikzpicture}
            \begin{axis}[
        pretty line,
        width=\linewidth,
        height=3.5cm,
        ylabel={$F_1$ score},
        xlabel={Subgroup strength $\tau$},
        ymin=0,
        ymax = 1, smooth ]

        \addplot+[dollarbill,
        error bars/.cd,
        y dir=both, y explicit]
        table [x index=0, y index=4, y error index=9, col sep=comma] {expres/synthetic_observational_F1.csv};

        \addplot+[gray,
        error bars/.cd,
        y dir=both, y explicit]
        table [x index=0, y index=1, y error index=6, col sep=comma] {expres/synthetic_observational_F1.csv};

        \addplot+[pink,
        error bars/.cd,
        y dir=both, y explicit]
        table [x index=0, y index=2, y error index=7, col sep=comma] {expres/synthetic_observational_F1.csv};

        \addplot+[teal,
        error bars/.cd,
        y dir=both, y explicit]
        table [x index=0, y index=3, y error index=8, col sep=comma] {expres/synthetic_observational_F1.csv};

        \addplot+[orange,
        error bars/.cd,
        y dir=both, y explicit]
        table [x index=0, y index=5, y error index=10, col sep=comma] {expres/synthetic_observational_F1.csv};
        \end{axis}
        \end{tikzpicture}
        \caption{Observational study}
        \label{fig:experiment_observational_F1}
        \end{subfigure}
    \hfill
    \begin{subfigure}[t]{0.19\linewidth}
        \centering
        \begin{tikzpicture}
            \begin{axis}[
        pretty line,
        width=\linewidth,
        height=3.5cm,
        xlabel={Subgroup strength $\tau$},
        ymin=0,
        ymax = 1, smooth ]
        \addplot+[dollarbill,
        line width=1pt,
        error bars/.cd,
        y dir=both, y explicit]
        table [x index=0, y index=4, y error index=9, col sep=comma] {expres/synthetic_interventional_F1.csv};

        \addplot+[gray,
        error bars/.cd,
        y dir=both, y explicit]
        table [x index=0, y index=1, y error index=6, col sep=comma] {expres/synthetic_interventional_F1.csv};

        \addplot+[pink,
        error bars/.cd,
        y dir=both, y explicit]
        table [x index=0, y index=2, y error index=7, col sep=comma] {expres/synthetic_interventional_F1.csv};

        \addplot+[teal,
        error bars/.cd,
        y dir=both, y explicit]
        table [x index=0, y index=3, y error index=8, col sep=comma] {expres/synthetic_interventional_F1.csv};

        \addplot+[orange,
        error bars/.cd,
        y dir=both, y explicit]
        table [x index=0, y index=5, y error index=10, col sep=comma] {expres/synthetic_interventional_F1.csv};
        \end{axis}
        \end{tikzpicture}
        \caption{Randomized trial}
        \label{fig:experiment_randomized_F1}
        \end{subfigure}
    \hfill
    \begin{subfigure}[t]{0.19\linewidth}
        \centering
        \begin{tikzpicture}
            \begin{axis}[
        pretty line,
        width=\linewidth,
        height=3.5cm,
        xlabel={Subgroup strength $\tau$},
        legend style={/tikz/overlay,
        legend columns=5,
        at={(0.5,1.18)}, 
        anchor=south,   
        font=\scriptsize,
        },
        legend entries={\ourmethod, \uplifttree,\causaltree,\pysubgroup,\syflow},
        ymin=0,
        ymax = 1, smooth ]

        \addplot+[dollarbill,
        error bars/.cd,
        y dir=both, y explicit]
        table [x index=0, y index=4, y error index=9, col sep=comma] {expres/synthetic_demographic_F1.csv};

        \addplot+[gray,
        error bars/.cd,
        y dir=both, y explicit]
        table [x index=0, y index=1, y error index=6, col sep=comma] {expres/synthetic_demographic_F1.csv};

        \addplot+[pink,
        error bars/.cd,
        y dir=both, y explicit]
        table [x index=0, y index=2, y error index=7, col sep=comma] {expres/synthetic_demographic_F1.csv};

        \addplot+[teal,
        error bars/.cd,
        y dir=both, y explicit]
        table [x index=0, y index=3, y error index=8, col sep=comma] {expres/synthetic_demographic_F1.csv};

        \addplot+[orange,
        error bars/.cd,
        y dir=both, y explicit]
        table [x index=0, y index=5, y error index=10, col sep=comma] {expres/synthetic_demographic_F1.csv};
        \end{axis}
        \end{tikzpicture}
        \caption{Demographic groups}
        \label{fig:experiment_demographic_F1}
        \end{subfigure}
    \hfill
    \begin{subfigure}[t]{0.19\linewidth}
        \centering
        \begin{tikzpicture}
            \begin{axis}[
        pretty line,
        width=\linewidth,
        height=3.5cm,
        xlabel={Variables $d$},
        ymin=0,
        ymax = 1, smooth ]

        \addplot+[dollarbill,
        error bars/.cd,
        y dir=both, y explicit]
        table [x index=0, y index=4, y error index=9, col sep=comma] {expres/noise_n_vars_F1.csv};

        \addplot+[gray,
        error bars/.cd,
        y dir=both, y explicit]
        table [x index=0, y index=1, y error index=6, col sep=comma] {expres/noise_n_vars_F1.csv};

        \addplot+[pink,
        error bars/.cd,
        y dir=both, y explicit]
        table [x index=0, y index=2, y error index=7, col sep=comma] {expres/noise_n_vars_F1.csv};

        \addplot+[teal,
        error bars/.cd,
        y dir=both, y explicit]
        table [x index=0, y index=3, y error index=8, col sep=comma] {expres/noise_n_vars_F1.csv};

        \addplot+[orange,
        error bars/.cd,
        y dir=both, y explicit]
        table [x index=0, y index=5, y error index=10, col sep=comma] {expres/noise_n_vars_F1.csv};
        \end{axis}
        \end{tikzpicture}
        \caption{Scalability: dimensionality $d$}
        \label{fig:experiment_noise_vars_F1}
        \end{subfigure}
    \hfill
    \begin{subfigure}[t]{0.19\linewidth}
        \centering
        \begin{tikzpicture}
            \begin{axis}[
        pretty line,
        width=\linewidth,
        height=3.5cm,
        xlabel={Number of samples $n$},
        xtick={1000,5000,10000},
        xticklabels={1k,5k,10k},
        ymin=0,
        ymax = 1, smooth ]

        \addplot+[dollarbill,
        error bars/.cd,
        y dir=both, y explicit]
        table [x index=0, y index=4, y error index=9, col sep=comma] {expres/scalability_n_samples_F1.csv};

        \addplot+[gray,
        error bars/.cd,
        y dir=both, y explicit]
        table [x index=0, y index=1, y error index=6, col sep=comma] {expres/scalability_n_samples_F1.csv};

        \addplot+[pink,
        error bars/.cd,
        y dir=both, y explicit]
        table [x index=0, y index=2, y error index=7, col sep=comma] {expres/scalability_n_samples_F1.csv};

        \addplot+[teal,
        error bars/.cd,
        y dir=both, y explicit]
        table [x index=0, y index=3, y error index=8, col sep=comma] {expres/scalability_n_samples_F1.csv};

        \addplot+[orange,
        error bars/.cd,
        y dir=both, y explicit]
        table [x index=0, y index=5, y error index=10, col sep=comma] {expres/scalability_n_samples_F1.csv};
        \end{axis}
        \end{tikzpicture}
        \caption{Scalability: samples $n$}
        \label{fig:experiment_sample_complexity_F1}
        \end{subfigure}
        \caption{$F_1$-score of discovered subgroup compared to known-ground truth in the synthetic benchmark data. \ourmethod differential subgroup approach is most accurate across all settings (a-c).
        In terms of scalability, \ourmethod handles both increasing dimensionality (d) and sample size (e) effectively. 
    }
    \label{fig:experiment_synthetic}
\end{figure*}
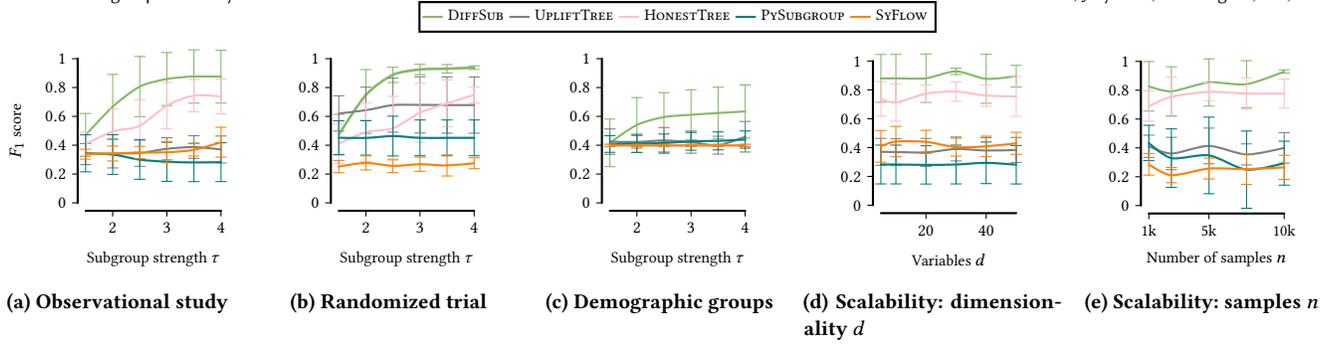

\textbf{Treatment-effect estimation} also faces the challenge of identifying meaningful subgroups.
In randomized trials, subgroup analysis methods include regression trees \citep{lipkovich2011subgroup}, trees with local linear models \citep{seibold2016model}, and Lasso-based approaches \citep{ballarini2018subgroup}.
In observational studies, the main difficulty lies in estimating heterogeneous treatment effects under confounding, typically addressed using flexible machine-learning models \citep{kunzel2019metalearners,shalit2017estimating}.
\citet{chikahara2022feature} consider distributional treatment effects but focus on selecting important features rather than identifying feature combinations defining meaningful subgroups.
Honest causal trees \citep{athey2016recursive} and causal forests \citep{wager2018estimation} provide unbiased estimation of treatment effects at the leaf level, though they rely on restrictive causal assumptions and do not explicitly target subgroup interpretability.

In general, differential subgroups are related to, yet distinct from, existing ideas in subgroup discovery, fairness, and treatment-effect estimation.
Discovering such subgroups offers a structured way to understand how population differences arise within localized regions of the feature space.

\section{Experiments}

We compare \ourmethod against regular subgroup discovery using the \pysubgroup package \citep{lemmerich2018pysubgroup} as well as \syflow \citep{xu2024syflow} using 
the authors implementation.
We also compare against \uplifttree, which implement recursive partitioning for randomized trials using the
causalml package \citep{causalml}, and \causaltree \citep{athey2016recursive}, which focuses on observational studies using econml \citep{econml}.
We provide the source code for all experiments in an anonymized repository \footnote{\url{https://anonymous.4open.science/r/diffsub-kdd-supplement-07EE/}}.

\subsection{Simulated Data}
We first evaluate \ourmethod on simulated data with known causal structure, following the observational, randomized, and demographic settings introduced in Figure~\ref{fig:causal_structure}.
The target variable is generated as $\targvar = f(\featvar,\attributevar) + N_{\targvar}$ with Gaussian noise, where $\featvar$ may be a cause or effect of $\attributevar$. 
We embed a ground-truth subgroup $s$ defined by two randomly selected features and amplify the treatment effect inside the subgroup by a parameter $\tau$ (details in Appendix~\ref{app:simulated_data}).  
Hyperparameters are tuned via grid search (Appendix~\ref{app:hyperparameters}).  
Performance is evaluated using the $F_1$-score between the true and recovered subgroup; for tree-based baselines, we report the best leaf.

Figures~\ref{fig:experiment_observational_F1}, \ref{fig:experiment_randomized_F1}, and \ref{fig:experiment_demographic_F1} show $F_1$-scores as $\tau$ increases for the three settings.
Classical subgroup discovery represented by \pysubgroup and \syflow fails due to its non-differential objective.
\uplifttree excels in randomized trials but struggles in observational ones, while \causaltree shows the opposite pattern. 
Both struggle in the demographic setting, where indirect effects distort the treatment contrast.  
Across all conditions, \ourmethod consistently identifies the correct subgroup and improves as the effect size $\tau$ grows.
By directly searching for differential subgroups, \ourmethod outperforms specialized baselines across all settings.

\textbf{Scalability.}
We further examine robustness to noise and sample size by varying the number of irrelevant features $\nfeat$ and the number of samples $\nsamples$ (Figures~\ref{fig:experiment_noise_vars_F1} and \ref{fig:experiment_sample_complexity_F1}), using the observational setting with fixed effect size $\tau=4$.
\ourmethod remains stable as noisy features are added, indicating that it effectively separates signal from noise.
As $\nsamples$ increases, its $F_1$-score improves and confidence intervals tighten, showing that additional data translates into more precise subgroup recovery.
Runtime grows linearly in $\nfeat$ but quadratically in $\nsamples$ due to KDE-based density estimation; for reference, a run with $\nfeat=5$ and $\nsamples=10{,}000$ takes roughly two minutes (Appendix~\ref{app:runtimes}).

We further conduct a sensitivity analysis on the hyperparameters $\lambda$ and $\gamma$ as well as the choice of estimator in Appendix~\ref{app:sensitivity}.
The results for $\gamma$ show that performance is stable in a range of $[0.1,0.3]$, while large values or disabling the generality ($\gamma=0$) degrade performance.
For $\lambda$, we find that the performance is stable up to $\lambda=0.5$ and then degrades for larger values, where all trends persist across the different settings.

\begin{figure}[t]
    \centering
        \begin{subfigure}[b]{0.45\linewidth}
            \centering
            \includegraphics[width=\linewidth]{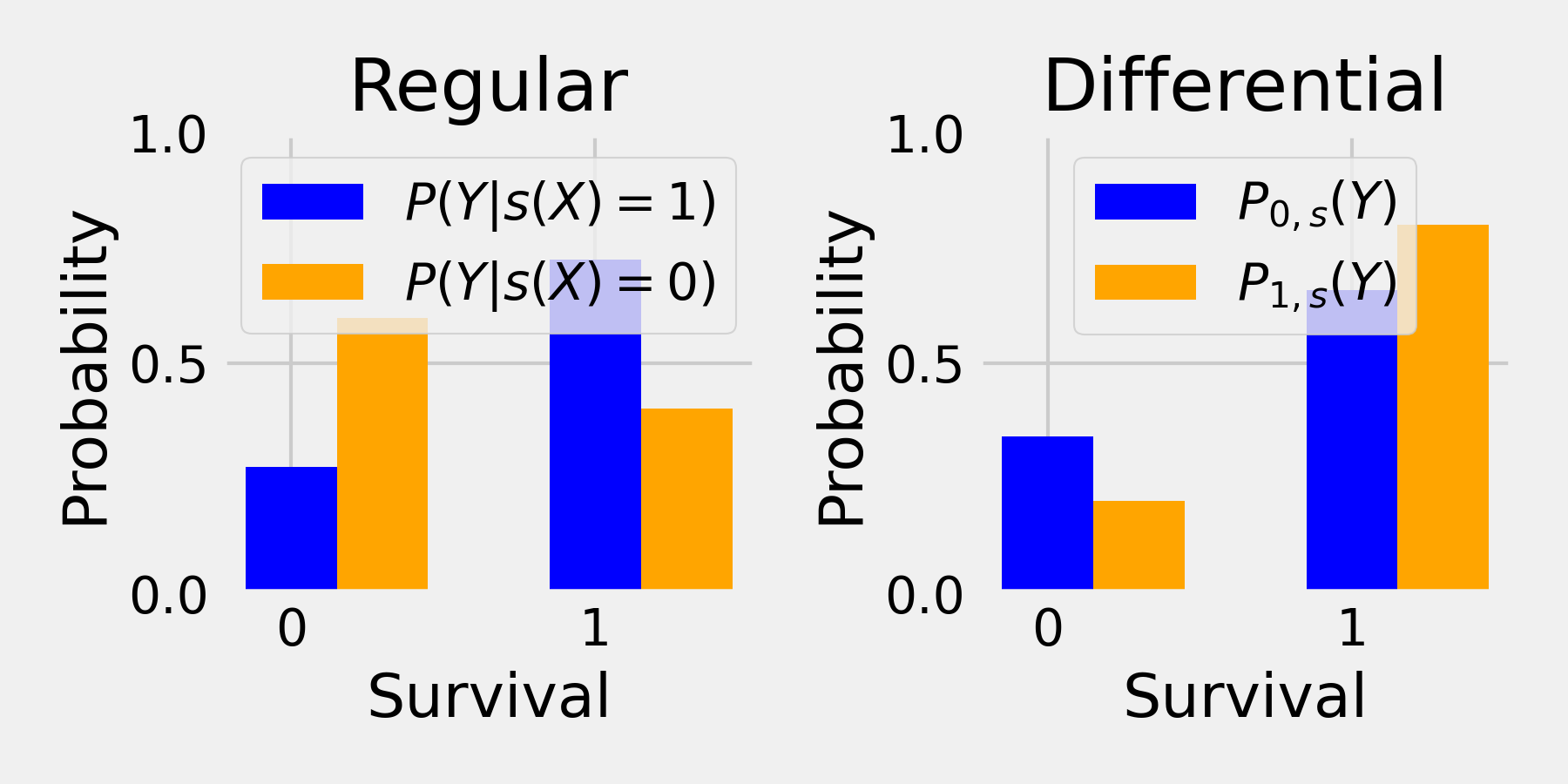}
            \caption{\pysubgroup subgroup}
            \label{fig:covid_ps}
        \end{subfigure}
        \hfill
        \begin{subfigure}[b]{0.45\linewidth}
            \centering
            \includegraphics[width=\linewidth]{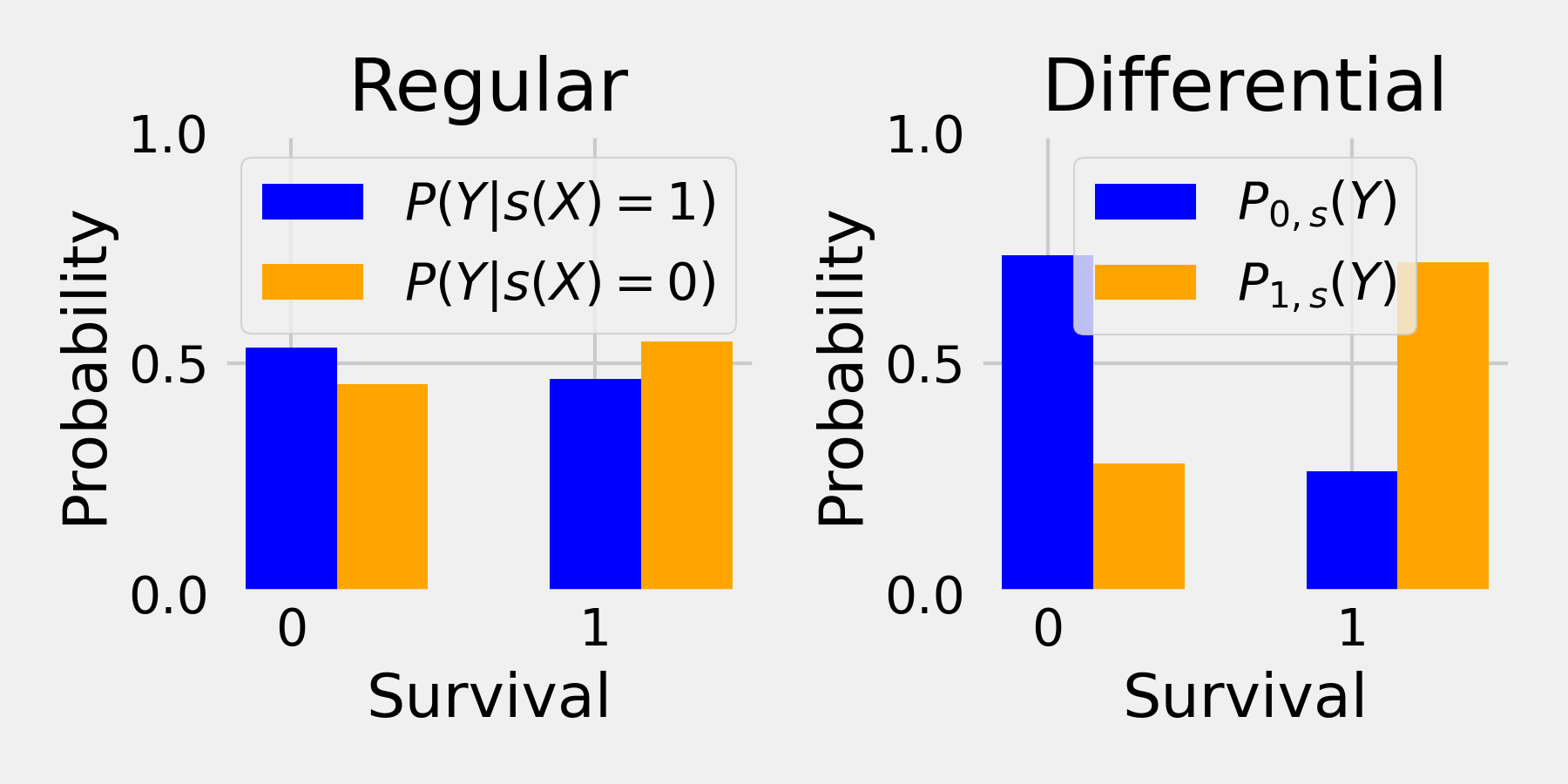}
            \caption{\ourmethod subgroup}
            \label{fig:covid_max}
        \end{subfigure}
        \caption{Subgroup discovery on COVID-19 dataset.}
        \label{fig:covid_example}
    \end{figure}

\subsection{Real-World Data}
Lastly, we qualitatively evaluate \ourmethod on downstream applications in medicine, treatment 
effects, and model error analysis.

\textbf{Medical Subgroups.}
We compare differential versus standard subgroup discovery on the COVID-19 dataset from \citet{lambert2022using}, containing biomarkers, comorbidities, and ICU outcomes from two New York City hospitals.  
We define gender as the binary attribute and seek subgroups exhibiting differential mortality between men and women.
We apply \pysubgroup to discover subgroups that deviate from the overall population and contrast these with the differential subgroups identified by \ourmethod.

Figure~\ref{fig:covid_example} summarizes the findings: the left panels show mortality inside versus outside the subgroup (standard), while the right panels compare male and female mortality within the subgroup (differential).  
The top subgroup discovered by \pysubgroup is
\[
\texttt{Age}<74 \wedge \texttt{No Coronary Artery/Cerebrovascular Dis.},
\]
corresponding to younger patients without severe comorbidities.  
These individuals exhibit reduced overall mortality but little gender disparity.
In contrast, \ourmethod discovers the subgroup  
\[
\texttt{Race}\neq\texttt{black} \wedge \texttt{Diabetes} \wedge 
\texttt{No Cerebrovascular/Hypertension}
\]
where male mortality substantially exceeds female mortality, despite not being high relative to the overall population, therewith not identifiable by standard subgroup discovery.
This pattern is consistent with prior clinical evidence: men with diabetes have been reported to experience higher COVID-19 mortality than diabetic women \citep{belice2020gender}, 
while Black women have shown disproportionately high COVID-19 mortality overall \citep{rushovich2021sex}. 
Note that these subgroups do not suffice to establish causal effect; rather, they provide hypothesis-generating patterns that warrant further clinical investigation.

\textbf{Treatment-Effects.}
We use the IHDP semi-synthetic benchmark from \citet{hill2011bayesian} with the 10 replications provided by \citet{louizos2017causal}.  
Baselines include \causaltree \citep{athey2016recursive}, \causalforest \citep{wager2018estimation}, and the \xlearner \citep{kunzel2019metalearners} with gradient-boosted trees.  

For \ourmethod, we estimate the treatment effect inside a discovered subgroup using the empirical difference in means.  
Proposition~\ref{prop:causal_effect} implies that whenever $\sufficiency(\subgroupf)=0$, the subgroup distribution equals the interventional distribution.  
Then, the treatment effect in a subgroup $s$ is given by
\begin{equation}
\hat{\tau}_{s} 
= \mathbb{E}[Y \mid A=1,\, s(X)=1]
- \mathbb{E}[Y \mid A=0,\, s(X)=1].
\end{equation}

\begin{wrapfigure}{l}{0.5\linewidth}
\centering
\begin{tikzpicture}
\begin{axis}[
    pretty ybar small,
    pretty labelshift,
    pretty enlargexlimits,
    width=5cm,
    height=2.8cm,
    ylabel={PEHE},
    ymin=0,
    symbolic x coords={subcon,ct,cf,xl,ablation},
    xtick={subcon,ct,cf,xl,ablation},
    xticklabels={\ourmethod, \causaltree, \causalforest, \xlearner,\ourmethod ($\lambda=0$)},
    x tick label style={rotate=25, anchor=east},
]
\addplot+[ bar width=7pt, bar shift=0pt, dollarbill]
    coordinates {(subcon,1.73)};

\addplot+[ bar width=7pt, bar shift=0pt, gray]
    coordinates {(ablation,3.77)};

\addplot+[ bar width=7pt, bar shift=0pt, pink]
    coordinates {(ct,2.33)};

\addplot+[ bar width=7pt, bar shift=0pt, teal]
    coordinates {(cf,2.17)};

\addplot+[ bar width=7pt, bar shift=0pt, orange]
    coordinates {(xl,1.19)};

\end{axis}
\end{tikzpicture}
\caption{IHDP in-subgroup PEHE (lower is better).}
\label{fig:ihdp_pehe_bar}
\end{wrapfigure}
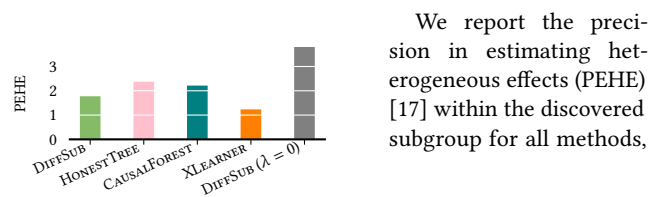

We report the precision in estimating heterogeneous effects (PEHE) \citep{hill2011bayesian} within the discovered subgroup for all 
methods, and further include an ablation without covariate regularization ($\lambda=0$).
We report the results in Figure~\ref{fig:ihdp_pehe_bar}.  
\ourmethod achieves a PEHE of $1.73$, outperforming the tree-based methods \causaltree ($2.33$) and \causalforest ($2.17$), while remaining less accurate than the black-box \xlearner ($1.19$).  
The ablation performs substantially worse ($3.77$), confirming that controlling covariate dependence is essential for identifying subgroups with faithful treatment-effect estimates.

\textbf{Model Errors.}
Lastly, we investigate differential subgroups as a tool for model error analysis.  
We train a Logistic Regression and a Gradient Boosting classifier on the Adult dataset \citep{adultdataset} and search for subgroups in which the models exhibit different error rates.

\definecolor{tabblue}{HTML}{1F77B4}
\definecolor{taborange}{HTML}{FF7F0E}

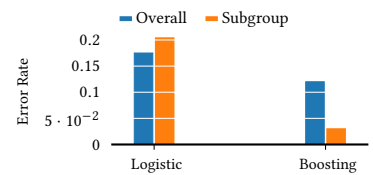
\begin{wrapfigure}{r}{0.5\linewidth}
\centering
\begin{tikzpicture}
\pgfplotsset{every axis/.style={tick style={black}, ticklabel style={font=\small}}}
\begin{axis}[
    width=5cm,
    height=3cm,
    ylabel={Error Rate},
    ymin=0,
    symbolic x coords={Logistic,Boosting},
    xtick=data,
    pretty ybar,
    pretty labelshift,
    legend columns=2,
    legend style={/tikz/every even column/.append style={column sep=0.6em}, draw=none, font=\small,
    at={(0.02,1.18)}, anchor=west},
    enlarge x limits=0.25,
]

\addplot+[ybar, bar width=7pt, bar shift=-4pt, color=tabblue] coordinates {(Logistic,0.175) (Boosting,0.12)};
\addlegendentry{Overall}
\addplot+[ybar, bar width=7pt, bar shift=4pt, color=taborange] coordinates {(Logistic,0.204) (Boosting,0.03)};
\addlegendentry{Subgroup}

\end{axis}
\end{tikzpicture}

\caption{Differential subgroup on Adult: $\texttt{Education} \neq \texttt{PhD} \wedge\ \texttt{Capital-Gains}  \in (2609,95725)$.}
\label{fig:error-rate-bars}
\end{wrapfigure}

Figure~\ref{fig:error-rate-bars} shows the subgroup discovered by \ourmethod.
For individuals without a PhD and with medium to high capital gains, Logistic Regression misclassifies substantially more often than Gradient Boosting (error rate $20\%$ vs.\ $3\%$).  
Notably, this subgroup contains a much larger share of high-income individuals (two-thirds) compared to the overall population (one-quarter), indicating that the linear model struggles precisely where income is strongly shifted relative to the global distribution.
This demonstrates the value of differential subgroups for model auditing: 
they reveal where two models diverge and therewith explain for which populations a simpler model may be unreliable.


\section{Conclusion}
We introduce differential subgroups, subsets of two populations that show exceptional disparities in a target variable.  
They are defined by three key properties: support in both populations, a significant outcome difference, and minimal covariate influence.
Our method \ourmethod, discovers such subgroups by jointly optimizing these criteria through a using efficient gradient-based search over interpretable rule-based descriptions.
We showed theoretically that differential subgroups admit a causal interpretation under certain assumptions for observational studies, randomized trials, and demographic comparisons.
In experiments, \ourmethod consistently outperforms existing approaches on synthetic benchmarks and recovers practically relevant subgroups in real-world datasets.  

\textbf{Limitations.}
\ourmethod relies on the assumption that the observed covariates $\featvar$ adequately capture the relevant causal structure. 
If important confounders are unobserved or if post-treatment variables are inadvertently included, the causal interpretation of discovered subgroups may be compromised.
Careful, domain-informed feature selection therefore remains essential.  
The covariate-dependence regularization depends on accurate estimation of $P(\targvar \mid \featvar, \attributevar)$, and using simpler, less sensitive estimators,
such as mean-based regularization for continuous targets, may reduce subgroup quality. 
As of now, the current framework identifies a single differential subgroup. 
Developing more principled multi-subgroup extensions is an important direction for future work.
Overall, the discovered subgroups should be interpreted as hypotheses rather than definitive causal claims, and require external validation or domain expertise for confirmation.
\clearpage
\bibliographystyle{ACM-Reference-Format}
\bibliography{bib/paper}

@article{atzmueller2015subgroup,
	title        = {Subgroup discovery},
	author       = {Atzmueller, Martin},
	year         = 2015,
	journal      = {Wiley Interdisciplinary Reviews: Data Mining and Knowledge Discovery},
	publisher    = {Wiley Online Library},
	volume       = 5,
	number       = 1,
	pages        = {35--49}
}

@inproceedings{zhang2016situation,
	title        = {Situation Testing-Based Discrimination Discovery: A Causal Inference Approach.},
	author       = {Zhang, Lu and Wu, Yongkai and Wu, Xintao},
	year         = 2016,
	booktitle    = {IJCAI},
	volume       = 16,
	pages        = {2718--2724}
}

@article{zhang2018causal,
	title        = {Causal modeling-based discrimination discovery and removal: Criteria, bounds, and algorithms},
	author       = {Zhang, Lu and Wu, Yongkai and Wu, Xintao},
	year         = 2018,
	journal      = {IEEE Transactions on Knowledge and Data Engineering},
	publisher    = {IEEE},
	volume       = 31,
	number       = 11,
	pages        = {2035--2050}
}

@inproceedings{kamishima2011fairness,
	title        = {Fairness-aware learning through regularization approach},
	author       = {Kamishima, Toshihiro and Akaho, Shotaro and Sakuma, Jun},
	booktitle    = {2011 IEEE 11th international conference on data mining workshops},
	pages        = {643--650},
  year     = 2011,
	organization = {IEEE}
}

@inproceedings{chikahara2022feature,
	title        = {Feature selection for discovering distributional treatment effect modifiers},
	author       = {Chikahara, Yoichi and Yamada, Makoto and Kashima, Hisashi},
	year         = 2022,
	booktitle    = {Uncertainty in Artificial Intelligence},
	pages        = {400--410},
	organization = {PMLR}
}

@article{detrano1989international,
	title        = {International application of a new probability algorithm for the diagnosis of coronary artery disease},
	author       = {Detrano, Robert and Janosi, Andras and Steinbrunn, Walter and Pfisterer, Matthias and Schmid, Johann-Jakob and Sandhu, Sarbjit and Guppy, Kern H and Lee, Stella and Froelicher, Victor},
	year         = 1989,
	journal      = {The American journal of cardiology},
	publisher    = {Elsevier},
	volume       = 64,
	number       = 5,
	pages        = {304--310}
}

@article{wilson1998prediction,
	title        = {Prediction of coronary heart disease using risk factor categories},
	author       = {Wilson, Peter WF and D'Agostino, Ralph B and Levy, Daniel and Belanger, Albert M and Silbershatz, Halit and Kannel, William B},
	year         = 1998,
	journal      = {Circulation},
	publisher    = {Lippincott Williams \& Wilkins},
	volume       = 97,
	number       = 18,
	pages        = {1837--1847}
}

@article{lang2010elevated,
	title        = {Elevated heart rate and cardiovascular outcomes in patients with coronary artery disease: clinical evidence and pathophysiological mechanisms},
	author       = {Lang, Chim C and Gupta, Sandeep and Kalra, Paul and Keavney, Bernard and Menown, Ian and Morley, Chris and Padmanabhan, Sandosh},
	year         = 2010,
	journal      = {Atherosclerosis},
	publisher    = {Elsevier},
	volume       = 212,
	number       = 1,
	pages        = {1--8}
}

@article{mehrabi2021survey,
	title        = {A survey on bias and fairness in machine learning},
	author       = {Mehrabi, Ninareh and Morstatter, Fred and Saxena, Nripsuta and Lerman, Kristina and Galstyan, Aram},
	year         = 2021,
	journal      = {ACM computing surveys (CSUR)},
	publisher    = {ACM New York, NY, USA},
	volume       = 54,
	number       = 6,
	pages        = {1--35}
}

@inproceedings{xu2024syflow,
	title        = {Learning Exceptional Subgroups by End-to-End Maximizing KL-Divergence},
	author       = {Xu, Sascha and Walter, Nils Philipp and Kalofolias, Janis and Vreeken, Jilles},
	year         = 2024,
	booktitle    = {International Conference on Machine Learning},
	pages        = {55267--55285},
	organization = {PMLR}
}

@article{bay2001detecting,
	title        = {Detecting group differences: Mining contrast sets},
	author       = {Bay, Stephen D and Pazzani, Michael J},
	year         = 2001,
	journal      = {Data mining and knowledge discovery},
	publisher    = {Springer},
	volume       = 5,
	number       = 3,
	pages        = {213--246}
}

@inproceedings{kalofolias2017efficiently,
	title        = {Efficiently discovering locally exceptional yet globally representative subgroups},
	author       = {Kalofolias, Janis and Boley, Mario and Vreeken, Jilles},
	year         = 2017,
	booktitle    = {2017 IEEE International Conference on Data Mining (ICDM)},
	pages        = {197--206},
	organization = {IEEE}
}

@article{boley2017identifying,
	title        = {Identifying consistent statements about numerical data with dispersion-corrected subgroup discovery},
	author       = {Boley, Mario and Goldsmith, Bryan R and Ghiringhelli, Luca M and Vreeken, Jilles},
	year         = 2017,
	journal      = {Data Mining and Knowledge Discovery},
	publisher    = {Springer},
	volume       = 31,
	number       = 5,
	pages        = {1391--1418}
}

@article{lemmerich2016fast,
	title        = {Fast exhaustive subgroup discovery with numerical target concepts},
	author       = {Lemmerich, Florian and Atzmueller, Martin and Puppe, Frank},
	year         = 2016,
	journal      = {Data Mining and Knowledge Discovery},
	publisher    = {Springer},
	volume       = 30,
	number       = 3,
	pages        = {711--762}
}

@article{athey2016recursive,
	title        = {Recursive partitioning for heterogeneous causal effects},
	author       = {Athey, Susan and Imbens, Guido},
	year         = 2016,
	journal      = {Proceedings of the National Academy of Sciences},
	publisher    = {National Academy of Sciences},
	volume       = 113,
	number       = 27,
	pages        = {7353--7360}
}

@article{wager2018estimation,
	title        = {Estimation and inference of heterogeneous treatment effects using random forests},
	author       = {Wager, Stefan and Athey, Susan},
	year         = 2018,
	journal      = {Journal of the American Statistical Association},
	publisher    = {Taylor \& Francis},
	volume       = 113,
	number       = 523,
	pages        = {1228--1242}
}

@article{foster2011subgroup,
	title        = {Subgroup identification from randomized clinical trial data},
	author       = {Foster, Jared C and Taylor, Jeremy MG and Ruberg, Stephen J},
	year         = 2011,
	journal      = {Statistics in medicine},
	publisher    = {Wiley Online Library},
	volume       = 30,
	number       = 24,
	pages        = {2867--2880}
}

@article{lipkovich2011subgroup,
	title        = {Subgroup identification based on differential effect search--a recursive partitioning method for establishing response to treatment in patient subpopulations},
	author       = {Lipkovich, Ilya and Dmitrienko, Alex and Denne, Jonathan and Enas, Gregory},
	year         = 2011,
	journal      = {Statistics in medicine},
	publisher    = {Wiley Online Library},
	volume       = 30,
	number       = 21,
	pages        = {2601--2621}
}

@article{seibold2016model,
	title        = {Model-based recursive partitioning for subgroup analyses},
	author       = {Seibold, Heidi and Zeileis, Achim and Hothorn, Torsten},
	year         = 2016,
	journal      = {The international journal of biostatistics},
	publisher    = {De Gruyter},
	volume       = 12,
	number       = 1,
	pages        = {45--63}
}

@article{ballarini2018subgroup,
	title        = {Subgroup identification in clinical trials via the predicted individual treatment effect},
	author       = {Ballarini, Nicol{\'a}s M and Rosenkranz, Gerd K and Jaki, Thomas and K{\"o}nig, Franz and Posch, Martin},
	year         = 2018,
	journal      = {PloS one},
	publisher    = {Public Library of Science San Francisco, CA USA},
	volume       = 13,
	number       = 10,
	pages        = {e0205971}
}

@inproceedings{kearns2018preventing,
	title        = {Preventing fairness gerrymandering: Auditing and learning for subgroup fairness},
	author       = {Kearns, Michael and Neel, Seth and Roth, Aaron and Wu, Zhiwei Steven},
	year         = 2018,
	booktitle    = {International conference on machine learning},
	pages        = {2564--2572},
	organization = {PMLR}
}

@article{shui2022learning,
	title        = {On learning fairness and accuracy on multiple subgroups},
	author       = {Shui, Changjian and Xu, Gezheng and Chen, Qi and Li, Jiaqi and Ling, Charles X and Arbel, Tal and Wang, Boyu and Gagn{\'e}, Christian},
	year         = 2022,
	journal      = {Advances in Neural Information Processing Systems},
	volume       = 35,
	pages        = {34121--34135}
}

@inproceedings{dwork2012fairness,
	title        = {Fairness through awareness},
	author       = {Dwork, Cynthia and Hardt, Moritz and Pitassi, Toniann and Reingold, Omer and Zemel, Richard},
	year         = 2012,
	booktitle    = {Proceedings of the 3rd innovations in theoretical computer science conference},
	pages        = {214--226}
}

@article{calders2010three,
	title        = {Three naive bayes approaches for discrimination-free classification},
	author       = {Calders, Toon and Verwer, Sicco},
	year         = 2010,
	journal      = {Data mining and knowledge discovery},
	publisher    = {Springer},
	volume       = 21,
	number       = 2,
	pages        = {277--292}
}

@inproceedings{shalit2017estimating,
	title        = {Estimating individual treatment effect: generalization bounds and algorithms},
	author       = {Shalit, Uri and Johansson, Fredrik D and Sontag, David},
	year         = 2017,
	booktitle    = {International conference on machine learning},
	pages        = {3076--3085},
	organization = {PMLR}
}

@article{kunzel2019metalearners,
	title        = {Metalearners for estimating heterogeneous treatment effects using machine learning},
	author       = {K{\"u}nzel, S{\"o}ren R and Sekhon, Jasjeet S and Bickel, Peter J and Yu, Bin},
	year         = 2019,
	journal      = {Proceedings of the national academy of sciences},
	publisher    = {National Academy of Sciences},
	volume       = 116,
	number       = 10,
	pages        = {4156--4165}
}

@inproceedings{martinez2021blind,
	title        = {Blind pareto fairness and subgroup robustness},
	author       = {Martinez, Natalia L and Bertran, Martin A and Papadaki, Afroditi and Rodrigues, Miguel and Sapiro, Guillermo},
	year         = 2021,
	booktitle    = {International Conference on Machine Learning},
	pages        = {7492--7501},
	organization = {PMLR}
}

@book{pearl2009causality,
	title        = {Causality},
	author       = {Pearl, Judea},
	year         = 2009,
	publisher    = {Cambridge University Press},
	place        = {Cambridge},
	edition      = 2
}

@incollection{pearl2022direct,
	title        = {Direct and indirect effects},
	author       = {Pearl, Judea},
	year         = 2022,
	booktitle    = {Probabilistic and causal inference: the works of Judea Pearl},
	pages        = {373--392}
}

@article{terrell1992variable,
	title        = {Variable kernel density estimation},
	author       = {Terrell, George R and Scott, David W},
	year         = 1992,
	journal      = {The Annals of Statistics},
	publisher    = {JSTOR},
	pages        = {1236--1265}
}

@inproceedings{lemmerich2018pysubgroup,
	title        = {pysubgroup: Easy-to-use subgroup discovery in python},
	author       = {Lemmerich, Florian and Becker, Martin},
	year         = 2018,
	booktitle    = {Joint European conference on machine learning and knowledge discovery in databases},
	pages        = {658--662},
	organization = {Springer}
}

@article{causalml,
	title        = {Causalml: Python package for causal machine learning},
	author       = {Chen, Huigang and Harinen, Totte and Lee, Jeong-Yoon and Yung, Mike and Zhao, Zhenyu},
	year         = 2020,
	journal      = {arXiv preprint arXiv:2002.11631}
}

@inproceedings{econml,
  title={EconML: A machine learning library for estimating heterogeneous treatment effects},
  author={Oprescu, Miruna and Syrgkanis, Vasilis and Battocchi, Keith and Hei, Maggie and Lewis, Greg},
  booktitle={33rd Conference on Neural Information Processing Systems},
  volume={6},
  year={2019},
  organization={Curran Associates, Inc}
}

@article{cook2004subgroup,
	title        = {Subgroup analysis in clinical trials},
	author       = {Cook, David I and Gebski, Val J and Keech, Anthony C},
	year         = 2004,
	journal      = {Medical Journal of Australia},
	publisher    = {Australasian Medical Publishing Company Proprietary, Ltd.},
	volume       = 180,
	number       = 6,
	pages        = 289
}

@article{lambert2022using,
	title        = {Using patient biomarker time series to determine mortality risk in hospitalised COVID-19 patients: A comparative analysis across two New York hospitals},
	author       = {Lambert, Ben and Stopard, Isaac J and Momeni-Boroujeni, Amir and Mendoza, Rachelle and Zuretti, Alejandro},
	year         = 2022,
	journal      = {Plos one},
	publisher    = {Public Library of Science San Francisco, CA USA},
	volume       = 17,
	number       = 8,
	pages        = {e0272442}
}

@misc{kumarajarshi_ray_2020,
	title        = {Life Expectancy (WHO)},
	author       = {Ray, Kumarajarshi},
	year         = 2020,
	publisher    = {Kaggle},
	url          = {https://www.kaggle.com/datasets/kumarajarshi/life-expectancy-who},
	howpublished = {\url{https://www.kaggle.com/datasets/kumarajarshi/life-expectancy-who}}
}

@inproceedings{kingma2015adam,
	title        = {Adam: A method for stochastic gradient descent},
	author       = {Kingma, Diederik P and Ba, Jimmy Lei},
	year         = 2015,
	booktitle    = {ICLR: international conference on learning representations},
	pages        = {1--15}
}

@article{paszke2019pytorch,
	title        = {Pytorch: An imperative style, high-performance deep learning library},
	author       = {Paszke, Adam and Gross, Sam and Massa, Francisco and Lerer, Adam and Bradbury, James and Chanan, Gregory and Killeen, Trevor and Lin, Zeming and Gimelshein, Natalia and Antiga, Luca and others},
	year         = 2019,
	journal      = {Advances in neural information processing systems},
	volume       = 32
}

@article{hill2011bayesian,
	title        = {Bayesian nonparametric modeling for causal inference},
	author       = {Hill, Jennifer L},
	year         = 2011,
	journal      = {Journal of Computational and Graphical Statistics},
	publisher    = {Taylor \& Francis},
	volume       = 20,
	number       = 1,
	pages        = {217--240}
}

@article{louizos2017causal,
	title        = {Causal effect inference with deep latent-variable models},
	author       = {Louizos, Christos and Shalit, Uri and Mooij, Joris M and Sontag, David and Zemel, Richard and Welling, Max},
	year         = 2017,
	journal      = {Advances in neural information processing systems},
	volume       = 30
}

@inproceedings{hebert2018multicalibration,
	title        = {Multicalibration: Calibration for the (computationally-identifiable) masses},
	author       = {H{\'e}bert-Johnson, Ursula and Kim, Michael and Reingold, Omer and Rothblum, Guy},
	year         = 2018,
	booktitle    = {International Conference on Machine Learning},
	pages        = {1939--1948},
	organization = {PMLR}
}

@inproceedings{kim2019multiaccuracy,
	title        = {Multiaccuracy: Black-box post-processing for fairness in classification},
	author       = {Kim, Michael P and Ghorbani, Amirata and Zou, James},
	year         = 2019,
	booktitle    = {Proceedings of the 2019 AAAI/ACM Conference on AI, Ethics, and Society},
	pages        = {247--254}
}

@inproceedings{chung2019slice,
	title        = {Slice finder: Automated data slicing for model validation},
	author       = {Chung, Yeounoh and Kraska, Tim and Polyzotis, Neoklis and Tae, Ki Hyun and Whang, Steven Euijong},
	year         = 2019,
	booktitle    = {2019 IEEE 35th International Conference on Data Engineering (ICDE)},
	pages        = {1550--1553},
	organization = {IEEE}
}

@inproceedings{sagadeeva2021sliceline,
	title        = {Sliceline: Fast, linear-algebra-based slice finding for ml model debugging},
	author       = {Sagadeeva, Svetlana and Boehm, Matthias},
	year         = 2021,
	booktitle    = {Proceedings of the 2021 international conference on management of data},
	pages        = {2290--2299}
}

@misc{adultdataset,
	title        = {{Adult}},
	author       = {Becker, Barry and Kohavi, Ronny},
	year         = 1996,
	note         = {{DOI}: https://doi.org/10.24432/C5XW20},
	howpublished = {UCI Machine Learning Repository}
}

@article{belice2020gender,
	title        = {The gender differences as a risk factor in diabetic patients with COVID-19},
	author       = {Belice, Tahir and Demir, Ismail},
	year         = 2020,
	journal      = {Iranian journal of microbiology},
	volume       = 12,
	number       = 6,
	pages        = 625
}

@article{rushovich2021sex,
	title        = {Sex disparities in COVID-19 mortality vary across US racial groups},
	author       = {Rushovich, Tamara and Boulicault, Marion and Chen, Jarvis T and Danielsen, Ann Caroline and Tarrant, Amelia and Richardson, Sarah S and Shattuck-Heidorn, Heather},
	year         = 2021,
	journal      = {Journal of General Internal Medicine},
	publisher    = {Springer},
	volume       = 36,
	number       = 6,
	pages        = {1696--1701}
}

\clearpage
\appendix
\onecolumn
\section{Proofs}
In this section, we provide a thorough investigation of the formal and causal properties of differential subgroups.
\subsection{Covariate Dependence}
We first show the connection between a minimized covariate dependence $\sufficiency(\subgroupf)=0$ (Def.~\ref{def:covariate}) and the 
independence of features.
\begin{restatable}{lemma}{independencelemma}
    \label{le:independence}
Given a distribution distance $D:\mathcal{P}(\mathcal{\targvar}) \times \mathcal{P}(\mathcal{\targvar}) \to [0,\infty)$, where it holds that $D(P,Q)=0$ if and only if $P=Q$.
A minimal covariate dependence implies a conditional independence between the features $\featvar$ 
and the target $\targvar$ within the scope of the subgroup $\subgroupf(\featvar)$, i.e.
 $$\sufficiency(\subgroupf)=0 \Rightarrow (\targvar \indep \featvar) \mid (\attributevar, \subgroupf(\featvar)=1)\;.$$
\end{restatable}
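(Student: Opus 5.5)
Since $D\ge 0$, $\sufficiency(\subgroupf)=0$ forces both summands $\sufficiency_0(\subgroupf)$ and $\sufficiency_1(\subgroupf)$ to vanish, so it suffices to treat a fixed group $\attribute\in\{0,1\}$. The plan is to pass from ``expected divergence is zero'' to ``divergence is zero almost surely'', and then to use the identity-of-indiscernibles property of $D$ to conclude that the conditional target distribution does not depend on $\feat$.

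\textbf{Step 1 (well-definedness).} Assume the subgroup intersects the support of $\featvar$, which the paper implicitly requires, e.g.\ via $\generality(\subgroupf)>0$. By Assumption~\ref{as:positivity}, the conditioning event $\{\attributevar=\attribute,\subgroupf(\featvar)=1\}$ then has positive probability. Hence $P_{\attribute,\subgroupf}(\targvar)$ and the regular conditional distributions $P_{\attribute,\subgroupf}(\targvar\mid\featvar=\feat)=P(\targvar\mid\featvar=\feat,\attributevar=\attribute)$ exist for almost every $\feat$ in the conditioning region.

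\textbf{Step 2 (a.s.\ vanishing).} The integrand $g(\feat)=D\bigl(P_{\attribute,\subgroupf}(\targvar\mid\featvar=\feat),P_{\attribute,\subgroupf}(\targvar)\bigr)$ is nonnegative, and its expectation under $\featvar\mid\attributevar=\attribute,\subgroupf(\featvar)=1$ is zero. A nonnegative function with zero integral vanishes almost everywhere, so $g(\feat)=0$ for almost every $\feat$ with respect to this conditional law.

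\textbf{Step 3 (conditional independence).} On that full-measure set, the assumption $D(P,Q)=0\iff P=Q$ gives $P(\targvar\mid\featvar=\feat,\attributevar=\attribute,\subgroupf(\featvar)=1)=P_{\attribute,\subgroupf}(\targvar)$. Then for measurable sets $B$ and $C$,
\[
P(\targvar\in B,\featvar\in C\mid \attributevar=\attribute,\subgroupf(\featvar)=1)=\int_C P_{\attribute,\subgroupf}(B)\,dP(\feat\mid \attributevar=\attribute,\subgroupf=1),
\]
which factorizes as $P_{\attribute,\subgroupf}(B)\,P(\featvar\in C\mid\attributevar=\attribute,\subgroupf=1)$. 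This is exactly $(\targvar\indep\featvar)\mid(\attributevar=\attribute,\subgroupf(\featvar)=1)$. Applying the argument to both $\attribute=0$ and $\attribute=1$ yields the claim.

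The main obstacle is measure-theoretic rather than conceptual. Conditional distributions given $\featvar=\feat$ are defined only up to null sets, so the equality in Step 3 holds only almost surely. The argument should therefore be phrased via regular conditional distributions and the disintegration identity above. The measurability of $g$ is assumed, as is standard for divergences such as KL or JS applied to a measurable kernel.
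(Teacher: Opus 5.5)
Your proposal is correct and follows the same route as the paper. Nonnegativity of $D$ together with a vanishing expectation forces $D\bigl(P_{\attribute,\subgroupf}(\targvar \mid \featvar=\feat), P_{\attribute,\subgroupf}(\targvar)\bigr)=0$ on the subgroup; identity of indiscernibles then gives $P(\targvar \mid \featvar=\feat, \attributevar=\attribute, \subgroupf(\featvar)=1)=P_{\attribute,\subgroupf}(\targvar)$, which is read off as conditional independence for each $\attribute$. Your version is slightly more careful than the paper's: the paper claims the equality for every $\feat$ with positive conditional density rather than almost every such $\feat$, and it leaves implicit both the splitting of $\sufficiency(\subgroupf)$ into two nonnegative summands and the factorization of the joint conditional law.
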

\begin{proof}
    The covariate dependence is defined as 
    \[
        \sufficiency_0(\subgroupf) = E_{\featvar \mid \attributevar=0, \subgroupf(\featvar)=1}\left[D\left(P_{0,\subgroupf}(\targvar \mid  \featvar = \feat), P_{0,\subgroupf}(\targvar)\right)\right]\;,
    \]
    for $\attributevar = 0$ and analogously for $\attributevar=1$.
    It is comprised of the integral over the feature space $\featdomain$ as
    \[
        \sufficiency_0(\subgroupf) = \int_{\feat} D\left(P_{0,\subgroupf}(\targvar \mid  \featvar = \feat), P_{0,\subgroupf}(\targvar)\right) p(\feat \mid \attributevar=0, \subgroupf(\featvar)=1) d\feat\;.
    \]    
    Given that $\sufficiency_0(\subgroupf)=0$ and $D$ has a domain of $[0,\infty)$, it holds that for every 
    $\feat \in \featdomain$ with $p(\feat \mid \attributevar=0, \subgroupf(\featvar)=1)>0$ the conditional and target distribution are equal, i.e.
    \[
        \forall \feat\in \featdomain,  p(\feat \mid \attributevar=0, \subgroupf(\featvar)=1)>0: P(\targvar\mid\attributevar=0, \featvar=\feat, \subgroupf(\featvar)=1)=P(\targvar\mid \attributevar=0,\subgroupf(\featvar)=1)\;.
    \]
    Therefore, it holds that
    \[
        P(\targvar\mid\featvar=\feat, \attributevar=0, \subgroupf(\featvar)=1) = P(\targvar\mid\attributevar=0, \subgroupf(\featvar)=1)\;,
    \]
    i.e.~$(\targvar \indep \featvar) \mid (\attributevar=0, \subgroupf(\featvar)=1)$.
    We can similarly derive the same result for $\attributevar=1$.
    This shows the claim that if $\sufficiency(\subgroupf)=0$, the features $\featvar$ and $\targvar$ are conditionally independent given $\subgroupf(\featvar)=1$ and $\attributevar$.
\end{proof}

\subsection{Proof of Proposition \ref{prop:causal_effect}}
\label{sec:proof_causal_effect}
\propCausalEffect*
\begin{proof}
    Firstly, the population support $\generality(\subgroupf)>0$ ensures positivity, i.e.~that there are samples from both populations within the subgroup, i.e.~$P(\subgroupf(\featvar)=1 \mid \attributevar=a)>0$ for $a\in\{0,1\}$.
    
    We start with the \textbf{interventional distribution} the right-hand side. Since $\featvar$ is a valid backdoor, we can identify the causal effect within the subgroup by adjusting for $\featvar$:
    $$
    P(\targvar \mid \doit(\attributevar=a), \subgroupf(\featvar)=1) = \int_{\feat} P(\targvar \mid \attributevar=a, \featvar=x, \subgroupf(\featvar)=1) p(\feat \mid \subgroupf(\featvar)=1) d\feat\;.
    $$
    If the covariate dependence is minimized so that $\sufficiency(\subgroupf)=0$ and $D$ is a distribution distance for which $D(P,Q)=0$ iff $P=Q$,
    then as per Lemma \ref{le:independence} we know that $(\targvar \indep \featvar) \mid (\attributevar, \subgroupf(\featvar)=1)$.
    
    This means that $P(\targvar \mid \attributevar=a, \featvar=x, \subgroupf(\featvar)=1) = P(\targvar \mid \attributevar=a, \subgroupf(\featvar)=1)$. Substituting this into the equation yields
    \begin{align*}
    P(\targvar \mid \doit(\attributevar=a), \subgroupf(\featvar)=1) &= \int_{\feat} P(\targvar \mid \attributevar=a, \subgroupf(\featvar)=1) p(\feat \mid \subgroupf(\featvar)=1) d\feat\\
    &= P(\targvar \mid \attributevar=a, \subgroupf(\featvar)=1) \int_{\feat} p(\feat \mid \subgroupf(\featvar)=1) d\feat
    \end{align*}
    The summation term $\int_{\feat} p(\feat\mid \subgroupf(\featvar)=1)$ is the integrated density of $\featvar$ over its support within the subgroup, which equals 1.
    $$
    P(\targvar \mid \doit(\attributevar=a), \subgroupf(\featvar)=1) = P(\targvar \mid \attributevar=a, \subgroupf(\featvar)=1) \cdot 1 = P_{\attribute,\subgroupf}(\targvar)\;.
    $$
    
    This shows that under the condition of zero covariate dependence, the interventional distribution (LHS) within the subgroup simplifies to the observational distribution (RHS), which completes the proof.
\end{proof}

\subsection{Proof of Proposition \ref{prop:indirect_only}}
\label{sec:proof_indirect_only}
\propIndirect*
\begin{proof}
We will prove this by contradiction. Assume that in a model of full mediation, there \textbf{exists} a subgroup $\subgroupf$ that simultaneously satisfies both conditions:
\begin{enumerate}
    \item Covariate independence: $\sufficiency(\subgroupf)=0$, which implies $(\featvar \indep \targvar) \mid (\attributevar, \subgroupf(\featvar)=1)$.
    \item Exceptional outcome: $\mathcal{E}(\subgroupf) > 0$, which implies that the outcome distributions differ, i.e., $P(\targvar \mid \attributevar=1, \subgroupf(\featvar)=1) \neq P(\targvar \mid \attributevar=0, \subgroupf(\featvar)=1)$.
\end{enumerate}
Now, we analyze the consequences of the other assumptions. Let x be any feature vector from the support of the subgroup, i.e., $\subgroupf(x)=1$. The premise that the supports overlap ensures such an x exists for both populations.

From Condition 1 ($\sufficiency(\subgroupf)=0$), Lemma \ref{le:independence} tells us that the target and features are conditionally independent given the attribute and subgroup membership. This allows us to state that for any x with $\subgroupf(x)=1$, the distribution of $\targvar$ is constant within the subgroup for a given attribute a:
$$P(\targvar \mid \attributevar=a, \subgroupf(\featvar)=1) = P(\targvar \mid \attributevar=a, \featvar=x, \subgroupf(\featvar)=1) \quad \text{for } a \in \{0,1\}$$
From the problem setup, we assume a model of full mediation, which is formally defined as $(\targvar \indep \attributevar) \mid \featvar$. This means that once the features x are known, the attribute a provides no additional information about the target $\targvar$. This gives us the following equality:
As per the causal structure of the data, $\featvar$ is a mediator between $\attributevar$ and $\targvar$, so that it holds for every sample $\feat$:
$$P(\targvar \mid \featvar=\feat, \attributevar=a)=P(\targvar \mid \featvar=\feat)\;.$$

We can now combine these two equalities. Beginning with the $\attributevar=1$ population:
\begin{align*}
P(\targvar \mid \attributevar=1, \subgroupf(\featvar)=1) &= P(\targvar \mid \attributevar=1, \featvar=x, \subgroupf(\featvar)=1) \
&= P(\targvar \mid \featvar=x, \subgroupf(\featvar)=1)
\end{align*}
We now apply the same transformation to the $\attributevar=0$ population:
\begin{align*}
P(\targvar \mid \attributevar=0, \subgroupf(\featvar)=1) &= P(\targvar \mid \attributevar=0, \featvar=x, \subgroupf(\featvar)=1) \
&= P(\targvar \mid \featvar=x, \subgroupf(\featvar)=1)
\end{align*}
By chaining these equalities, we have shown that both population-specific distributions are equal to the same underlying distribution, $P(\targvar \mid \featvar=x, \subgroupf(\featvar)=1)$. Therefore, they must be equal to each other:

$$P(\targvar \mid \attributevar=1, \subgroupf(\featvar)=1) = P(\targvar \mid \attributevar=0, \subgroupf(\featvar)=1)$$

This implies that the exceptionality score $\mathcal{E}(\subgroupf)$, which measures the divergence between these two distributions, must be 0. This is a direct contradiction to our initial assumption that $\mathcal{E}(\subgroupf) > 0$.
In a model of full mediation, there cannot exist a subgroup $\subgroupf$ that satisfies both $\sufficiency(\subgroupf)=0$ and $\mathcal{E}(\subgroupf) > 0$.
\end{proof}

\section{Implementation}
\label{sec:training_procedure}
We implement our method using \texttt{PyTorch} \citep{paszke2019pytorch}.
We optimize the parameters $\theta$ of the neural network $\subgroupfhat_{\temp}(\feat;\theta,\andweightvec)$ and the attribute weights $\andweightvec$ using Adam \citep{kingma2015adam}.
We describe the chosen hyperparameters for the optimization in Appendix \ref{app:hyperparameters}.


\subsection{Density Estimation}
For a \textbf{discrete} target $\targvar \in \{0, \dots, \nclasses\}$, we use the empirical distribution weighted by the subgroup membership, e.g.  
\[
\hat{P}_{0,\subgroupfhat}(\targvar = l)
= \frac{\sum_{\{i : \attributei=0\}} 
    \subgroupfhat_{\temp}(\feati) \,\mathds{1}(\targi=l)}
   {\sum_{\{i : \attributei=0\}} 
    \subgroupfhat_{\temp}(\feati)} \;.
\]

For a \textbf{continuous} target $\targvar \in \R$, we use a kernel density estimator \citep{terrell1992variable} using 
the implementation from \texttt{scipy.stats.gaussian\_kde} where each sample is weighted by 
$\subgroupfhat_{\temp}(\feati)$.
The bandwidth is selected using Scott's rule of thumb.
This yields two subgroup density estimators 
$\phat_{0,\subgroupf}(\targ)$ 
and 
$\phat_{1,\subgroupf}(\targ)$.  

\subsection{Regularization}
As part of the estimation of covariate dependence, we estimate the conditional distribution $\hat{P}(\targvar \mid \featvar=\feat)$ for 
discrete and continuous target variables.
For a \textbf{discrete} target variable, we train a boosting ensemble $f(\feat)$ using \texttt{RandomForestClassifier} from \texttt{sklearn} to predict the class probabilities $\hat{\mathbb{P}}(\targvar=l \mid \featvar=\feat)$.
For a \textbf{continuous} target variable, we train a regression model using \texttt{RandomForestRegressor} from \texttt{sklearn}.
In both models, we use the default hyperparameters of \texttt{sklearn} (100 estimators, unrestricted depth, gini criterion and least squares loss respectively).
At this time, we do not perform hyperparameter tuning for these models as it would significantly increase the computational cost of training.

\subsection{Temperature Annealing}
\label{app:temperature-schedules}
Temperature schedules are an important mechanism in optimization with soft relaxations of discrete functions. They provide a smooth transition from soft to hard decision boundaries, improving both convergence and final performance. By adjusting the temperature parameter, we control the degree of smoothness: higher values encourage exploration in the early stages of training, while lower values lead to sharper, more precise solutions as training progresses.  

We employ a linear decay schedule during the second half of training for the temperature parameter. The temperature $\temp$ decays from $0.2$ to $0.05$.
These ranges were selected via hyperparameter optimization and remain fixed across all experiments.
The update at each epoch is implemented as follows:
\begin{verbatim}
temp_start = 0.2
temp_end = 0.05
temp = temp_start
step_size = (temp_start - temp_end)/(total_epochs)
for epoch in range(total_epochs):
    temp = temp - step_size
\end{verbatim}
\section{Experiments}
All experiments were conducted on a consumer grade laptop.

\subsection{Synthetic Data Generation}
\begin{figure*}[ht]
    \begin{subfigure}[t]{0.3\linewidth}
        \centering
        \begin{tikzpicture}[->, >=stealth, thick, every node/.style={circle, draw, minimum size=8mm, font=\small}]
            \node (A) at (0,1.5) {A};
            \node (X) at (-1,0) {X};
            \node (Y) at (1,0) {Y};
            \draw (X) -- (Y);
            \draw (X) -- (A);
            \draw (A) -- (Y);
        \end{tikzpicture}
        \caption{Observational Study: $\mathbb{P}(\attributevar=1|\featvar)=\sigma(\beta_{\attributevar}^T \featvar)$, $\featvar \sim \mathcal{U}(0,1)^d$.}
        \label{fig:synth_observational}
    \end{subfigure}
    \hfill
    \begin{subfigure}[t]{0.3\linewidth}
        \centering
        \begin{tikzpicture}[->, >=stealth, thick, every node/.style={circle, draw, minimum size=8mm, font=\small}]
            \node (A) at (0,1.5) {A};
            \node (X) at (-1,0) {X};
            \node (Y) at (1,0) {Y};
            \draw (X) -- (Y);
            \draw (A) -- (Y);
        \end{tikzpicture}
        \caption{Randomized Trial: $\mathbb{P}(\attributevar=1)=0.5$. $\featvar \sim \mathcal{U}(0,1)^d$.}
        \label{fig:synth_randomized}
    \end{subfigure}
    \hfill
    \centering
    \begin{subfigure}[t]{0.3\linewidth}
        \centering
        \begin{tikzpicture}[->, >=stealth, thick, every node/.style={circle, draw, minimum size=8mm, font=\small}]
            \node (A) at (0,1.5) {A};
            \node (X) at (-1,0) {X};
            \node (Y) at (1,0) {Y};
            \draw (A) -- (X);
            \draw (A) -- (Y);
            \draw (X) -- (Y);
        \end{tikzpicture}
        \caption{Demographic Groups: $\mathbb{P}(\attributevar=1)=0.5$. $\featvar \sim \mathcal{U}(0,1)^d+\mathds{1}(\attributevar=1) \cdot\boldsymbol{\mu}$, where $\boldsymbol{\mu} \in [-0.3,0.3]^{\nfeat}$.}
        \label{fig:synth_demographic}
    \end{subfigure}
    \caption{Data generation process for the three settings: observational studies, randomized trials, and demographic groups. In all settings, the target is a linear function of the covariates with a subgroup-dependent mean-shift between 
    $P(\targvar|\attributevar=1)$ and $P(\targvar|\attributevar=0)$ such that $\targvar = f(\featvar, \attributevar) + N_Y$.
    }
    \label{fig:causal_structure}
\end{figure*}
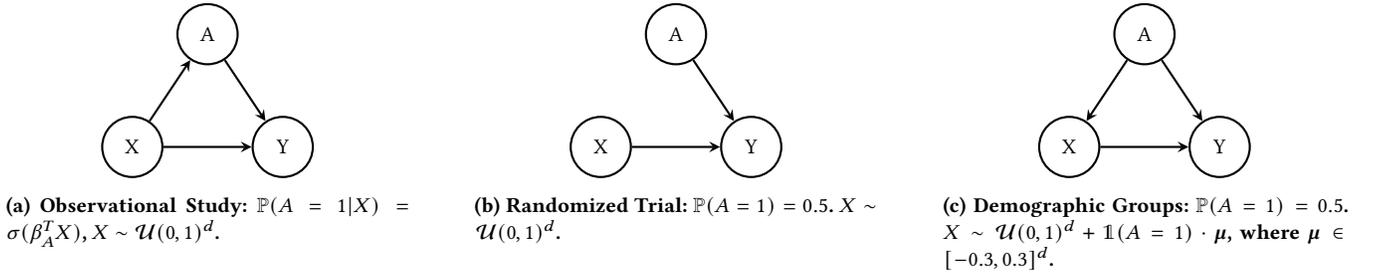
\label{app:simulated_data}

To evaluate the performance of methods in recovering differential subgroups, we generate datasets under three different causal structures.
In all settings, the target variable $\targvar$ is a function of a binary attribute $\attributevar$ and a set of covariates $\featvar$ as per 
\begin{equation}
\targvar = f(\featvar, \attributevar) + N_Y\;,
\end{equation}

\begin{figure}[ht]
    \centering
    \includegraphics[width=\linewidth]{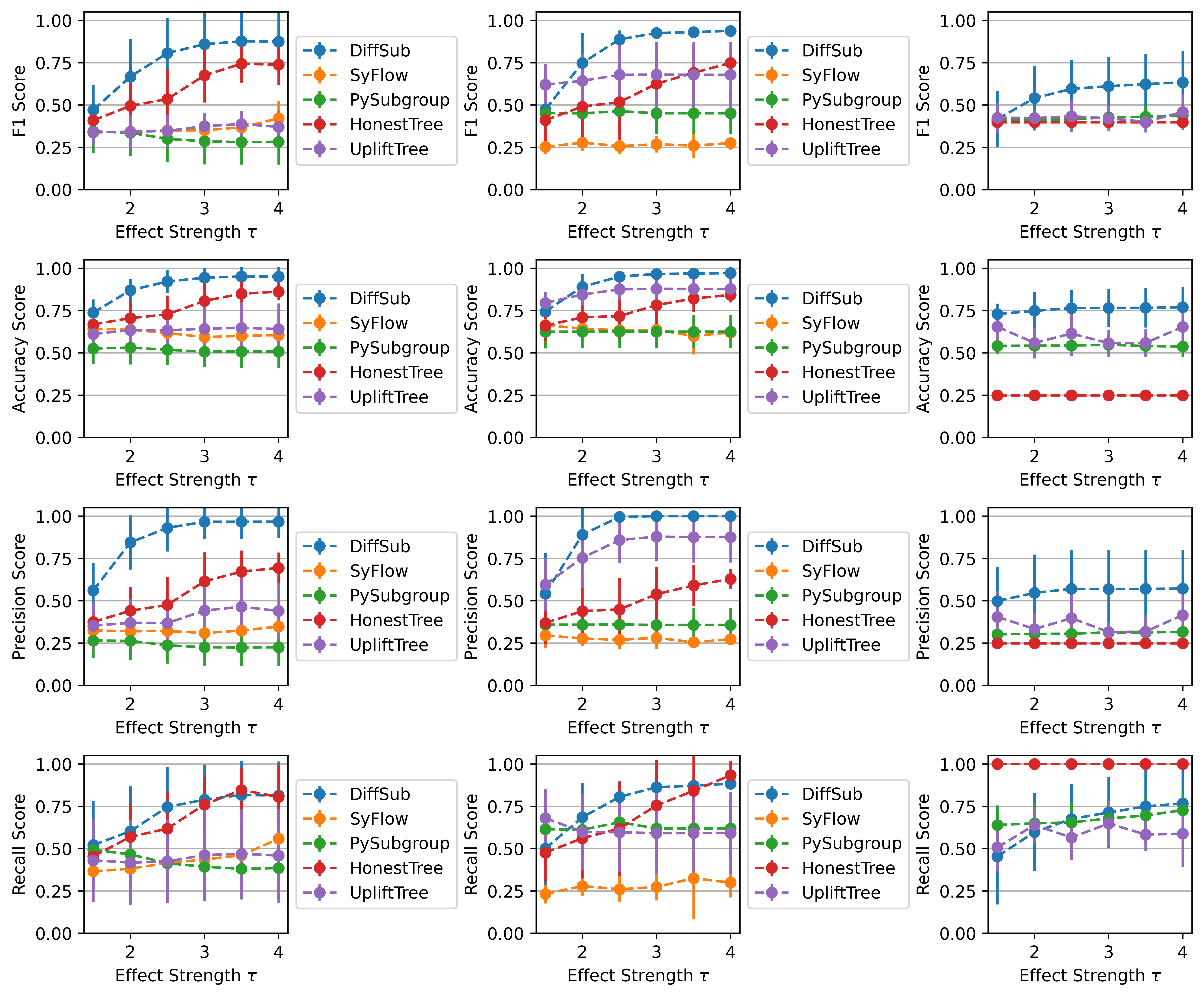}
    \caption{From top to bottom row: $F_1$-score, accuracy, precision, recall of recovered subgroup.
    \ourmethod outperforms the competing methods in the observational (left column), randomized (middle column) and demographic (right column)causal 
    data generating mechanisms.}
    \label{fig:all_metrics}
\end{figure}

We summarize the respective data generation process for each setting in Figure \ref{fig:causal_structure}.
The core components of the generated data are:
\begin{itemize}
    \item \textbf{Subgroup $s^*(X)$}: A ground-truth subgroup defined by an axis-aligned box rule on a small subset of features $\mathcal{J} \subset \{1, \dots, d\}$. A sample $\feat$ belongs to the subgroup if its covariates $\feat_j$ satisfy the rule:
    $$
    s^*(\feat) = \bigwedge_{j \in \mathcal{J}} (\lowerbound_j < \feat_j < \upperbound_j)
    $$
    where $\lowerbound_j$ and $\upperbound_j$ are lower and upper bounds for feature $j$.
    \item \textbf{Target Variable $\targvar$}: A continuous outcome variable that depends on the binary attribute $\attributevar$ and a set of covariates $\featvar$ as per  
    \begin{equation}
    \targvar = f(\featvar, \attributevar) + N_Y\;,
    \end{equation}
    where $N_Y$ is Gaussian noise with mean zero and variance $\sigma^2$. $f$ consist of a linear function $\beta_{\targvar}^T \featvar$, $\beta_{\targvar} \in [-1,1]^{\nfeat}$, with an subgroup dependent interaction term for the difference between
    $\attributevar=1$ and $\attributevar=0$:
    \begin{equation}
    f(\featvar, \attributevar) = \beta_{\targvar}^T \featvar + s^*(\featvar)\left(\frac{\tau}{2}\mathds{1}(\attributevar=1)-\frac{\tau}{2}\mathds{1}(\attributevar=0)\right)
    + (1-s^*(\featvar))\left(\frac{\eta}{2}\mathds{1}(\attributevar=1)-\frac{\eta}{2}\mathds{1}(\attributevar=0)\right)\;,
\end{equation}
    where $\tau$ is the treatment effect for individuals in the subgroup, i.e.~$s^*(\featvar)=1$, and $\eta$ is the treatment effect for individuals outside the subgroup, i.e.~$s^*(\featvar)=0$.
    \item \textbf{Covariates $X$}: A feature matrix $X \in \mathbb{R}^{n \times d}$. For the setting "observational" and "interventional", each feature $\featvar_j$ is sampled uniformly from the interval $[0, 1]$. 
    In the "demographic shift" setting, the features are mean-shifted conditional on the binary attribute $\attributevar$.
    \item \textbf{Population Indicator $A$}: A binary population assignment $A \in \{0, 1\}$. For the setting "interventional" and "demographic" shift, $A$ is sampled uniformly at random, $P(A=1)=0.5$.
    In the "observational" setting, $A$ is sampled from a Bernoulli distribution with probability dependent on the covariates $X$.
\end{itemize}

Across all settings, we vary the direct effect in the subgroup $\tau \in \{1.5,2,2.5,3,3.5,4\}$ compared to a outside effect of $\eta=1$.
The subgroup is defined by two randomly chosen features $\featvar_i$ and $\featvar_j$, with random bounds such that $P(s^*(\featvar)=1)=0.3$, i.e.~30\% of the samples belong to the subgroup.
The number of samples is set to $\nsamples=2000$ and the number of features to $\nfeat=5$.
The noise variance is set to $\sigma^2=0.5$.

As per the assumed causal struture, we obtain the following distributions for $\attributevar$ and $\featvar$:
\begin{itemize}
    \item \textbf{Observational}: The covariates $\featvar$ are sampled uniformly from the interval $[0, 1]^d$. The binary attribute $\attributevar$ is sampled from a Bernoulli distribution with probability dependent on the covariates $\featvar$
    $$
    P(\attributevar=1\mid \featvar) = \sigma(\beta_{\attributevar} ^T \featvar)\;,
    $$
    where $\sigma(\cdot)$ is the sigmoid function and $\beta_{\attributevar} \in [-1,1]^{\nfeat}$ is a vector of coefficients that determines the treatment assignment probability.
    \item \textbf{Interventional}: The covariates $\featvar$ are sampled uniformly from the interval $[0, 1]^d$. The binary attribute $\attributevar$ is sampled uniformly at random, $P(\attributevar=1)=0.5$.
    \item \textbf{Demographic Shift}: The binary attribute $\attributevar$ is sampled uniformly at random, $P(\attributevar=1)=0.5$.
    The covariates $\featvar$ are sampled uniformly from the interval $[0, 1]^d$, but with a mean shift conditional on the binary attribute $\attributevar$ as per 
    $$
    \featvar_i \sim \mathcal{U}(0,1)^d + \mu_i \cdot \mathds{1}(\attributevar=1)\;.
    $$
    The feature shift $\mu_i\in [-0.3,0.3]$ is the distribution shift of $\featvar_i$ induced by $\attributevar$.
\end{itemize}

\subsection{Results of Synthetic Experiments}
We generate data using causal mechanism as detailed above.
We evaluate the correctness of the recovered subgroup for each sample $\feati$, which we denote as $\subgroupfhat(\feat_i)$,
and compared to the ground truth $\subgroupf^*(\feati)$.

We report the following metrics :
\begin{itemize}
    \item $F_1 = \frac{2 TP}{2 TP + FP + FN}$
    \item Accuracy $= \frac{TP+TN}{TP+FP+TN+FN}$
    \item Precision $=\frac{TP}{TP+FP}$
    \item Recall $=\frac{TP}{TP+FN}$
\end{itemize}
We display all metrics for the synthetic experiments in Fig.~\ref{fig:all_metrics}.
With increasing strength $\tau$ of the direct causal effect, \ourmethod recovers the ground truth with high 
precision and good recall.
\ourmethod beats the domain specific methods \causaltree for observational data and \uplifttree for randomized control trials,
and is the only one to recover a reasonable result in the demographic setting.
On the other hand, the regular subgroup discovery methods \pysubgroup and \syflow do not work well in any setting,
due to their non differential approach.

\subsection{Runtimes}
\label{app:runtimes}
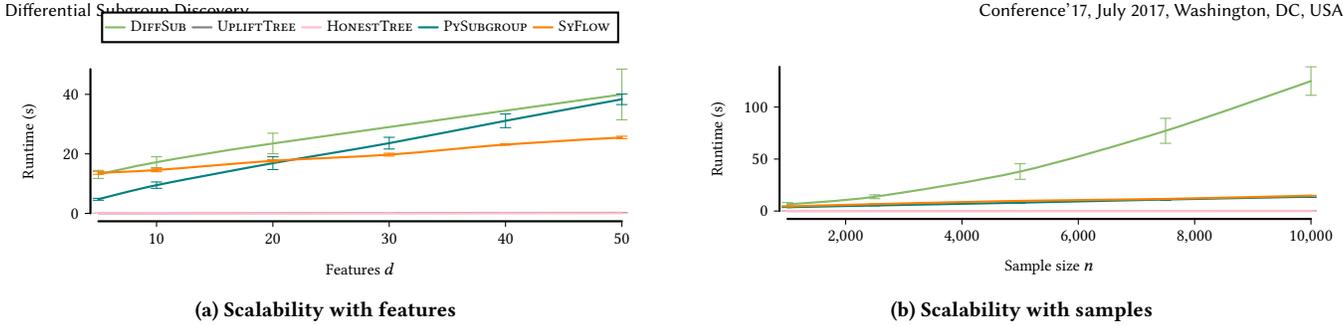
\begin{figure*}[t]
    \begin{subfigure}[t]{0.48\linewidth}
        \centering
        \begin{tikzpicture}
            \begin{axis}[
        pretty line,
        width=\linewidth,
        height=3.5cm,
        ylabel={Runtime (s)},
        xlabel={Features $d$},
        legend style={/tikz/overlay,
        legend columns=5,
        at={(0.5,1.18)},
        anchor=south,
        font=\scriptsize,
        },
        legend entries={\ourmethod, \uplifttree,\causaltree,\pysubgroup,\syflow},
        ymin=0, smooth ]

        \addplot+[dollarbill,
        error bars/.cd,
        y dir=both, y explicit]
        table [x index=0, y index=4, y error index=9, col sep=comma] {expres/synthetic_n_vars_Runtime.csv};

        \addplot+[gray,
        error bars/.cd,
        y dir=both, y explicit]
        table [x index=0, y index=1, y error index=6, col sep=comma] {expres/synthetic_n_vars_Runtime.csv};

        \addplot+[pink,
        error bars/.cd,
        y dir=both, y explicit]
        table [x index=0, y index=2, y error index=7, col sep=comma] {expres/synthetic_n_vars_Runtime.csv};

        \addplot+[teal,
        error bars/.cd,
        y dir=both, y explicit]
        table [x index=0, y index=3, y error index=8, col sep=comma] {expres/synthetic_n_vars_Runtime.csv};

        \addplot+[orange,
        error bars/.cd,
        y dir=both, y explicit]
        table [x index=0, y index=5, y error index=10, col sep=comma] {expres/synthetic_n_vars_Runtime.csv};
        \end{axis}
        \end{tikzpicture}
        \caption{Scalability with features}
        \label{fig:experiment_runtime_features}
        \end{subfigure}
    \hfill
    \begin{subfigure}[t]{0.48\linewidth}
        \centering
        \begin{tikzpicture}
            \begin{axis}[
        pretty line,
        width=\linewidth,
        height=3.5cm,
        ylabel={Runtime (s)},
        xlabel={Sample size $n$},
        ymin=0, smooth ]

        \addplot+[dollarbill,
        error bars/.cd,
        y dir=both, y explicit]
        table [x index=0, y index=4, y error index=9, col sep=comma] {expres/scalability_n_samples_Runtime.csv};

        \addplot+[gray,
        error bars/.cd,
        y dir=both, y explicit]
        table [x index=0, y index=1, y error index=6, col sep=comma] {expres/scalability_n_samples_Runtime.csv};

        \addplot+[pink,
        error bars/.cd,
        y dir=both, y explicit]
        table [x index=0, y index=2, y error index=7, col sep=comma] {expres/scalability_n_samples_Runtime.csv};

        \addplot+[teal,
        error bars/.cd,
        y dir=both, y explicit]
        table [x index=0, y index=3, y error index=8, col sep=comma] {expres/scalability_n_samples_Runtime.csv};

        \addplot+[orange,
        error bars/.cd,
        y dir=both, y explicit]
        table [x index=0, y index=5, y error index=10, col sep=comma] {expres/scalability_n_samples_Runtime.csv};
        \end{axis}
        \end{tikzpicture}
        \caption{Scalability with samples}
        \label{fig:experiment_runtime_samples}
        \end{subfigure}
    \caption{Runtime scalability of each method in synthetic experiments.}
    \label{fig:runtime_synthetic}
\end{figure*}

We report the runtimes of each method on the synthetic data in Fig.~\ref{fig:runtime_synthetic} with respect 
to the number of samples $\nsamples$ and the number of features $\nfeat$.
\ourmethod is slightly slower than \syflow due to the increased complexity of the loss function,
but is still in the same order of magnitude for an increasing number of features.
In terms of sample complexity, \ourmethod's use of KDE for density estimation results in higher computational costs as the number of samples increases,
as KDE scales quadratically with the number of samples whereas \syflow scales linearly.
The tree-based methods \causaltree and \uplifttree are significantly faster.
\pysubgroup is faster for few features $\nfeat$, but scales poorly with increasing number of features due to its combinatorial search strategy.
For increasing number of samples $\nsamples$, \pysubgroup remains fast due to its beam search strategy. 

\section{Hyperparameter Configuration}
\label{app:hyperparameters}

For our experiments, we performed a comprehensive hyperparameter search for each method across the three distinct settings introduced in the synthetic experiments: \textbf{observational}, \textbf{interventional}, and \textbf{demographic}. 
We employed a grid search, exhaustively testing all combinations of the parameters specified for each algorithm. 
The rule-based models, \ourmethod and \syflow, share a common rule configuration where the predicate temperature is initialized at $0.2$ and linearly annealed over the course of training, reducing by a factor of 10. 

We use the setting of $\nsamples=2000$, $\nfeat=5$, $\tau=4$, $\sigma=0.5$ for the synthetic data, which is consistent across all methods.
We generate 10 different synthetic datasets for each method to ensure robustness in our evaluation
and we selected the best-performing configuration for each specific setting based on validation performance.

The hyperparameters and their tested ranges for each method are as follows:

\begin{itemize}
    \item \ourmethod: The grid search included the trade-off parameter $\lambda \in \{0.1, 0.5, 1.0\}$, the generality parameter $\gamma \in \{0.1, 0.5\}$, the number of training epochs $\in \{500, 1000\}$, and the classifier's learning rate $\in \{0.001, 0.005\}$.
    \item \syflow: We tuned the size rewards as $\alpha \in \{0.1, 0.3, 0.5\}$, the classifier's learning rate $\in \{10^{-3}, 10^{-4}, 10^{-5}\}$, and the number of subgroup training epochs $\in \{200, 500, 1000\}$.
    \item \pysubgroup: We explored different values for the beam width $\in \{50, 100, 200\}$, the number of bins for discretization $n_{bins} \in \{5, 10, 20\}$, and the size trade-off parameter $\alpha \in \{0.2, 0.5, 1.0\}$.
    \item \causaltree \& \uplifttree: For both tree-based methods, we searched over the minimum number of samples per leaf, specified as a fraction of the dataset, \texttt{min\_samples\_leaf} $\in \{0.01, 0.05, 0.1, 0.2, 0.3\}$ and the maximum tree depth \texttt{max\_depth} $\in \{2, 3, 5, \text{None}\}$.
\end{itemize}

We use the validation performance to select the best hyperparameters for each method in each setting.
The validation performance is measured by the area under the receiver operating characteristic curve (AUC) for the subgroup prediction task, which is a common metric for evaluating the performance of classification models in subgroup discovery

The table below details the final hyperparameter configurations chosen for each method in the respective experimental settings.

\begin{table}[t]
\centering
\caption{Selected hyperparameters for each method across the different experimental settings.}
\label{tab:hyperparameters}
\centering
\begin{tabular}{@{}lll@{}}
\toprule
\textbf{Method} & \textbf{Setting} & \textbf{Selected Hyperparameters} \\ \midrule
\addlinespace
\ourmethod & \begin{tabular}[c]{@{}l@{}}Observational\\ Interventional\\ Demographic\end{tabular} & $\lambda=0.5$, $\gamma=0.1$, \texttt{n\_epochs}=500, \texttt{lr\_classifier}=0.005 \\
\addlinespace \addlinespace \addlinespace
\hline
\addlinespace
\syflow & Observational & $\alpha=0.5$, \texttt{lr\_classifier}=$10^{-3}$, \texttt{subgroup\_train\_epochs}=1000 \\
 & Interventional & $\alpha=0.1$, \texttt{lr\_classifier}=$10^{-3}$, \texttt{subgroup\_train\_epochs}=1000 \\
 & Demographic & $\alpha=0.1$, \texttt{lr\_classifier}=$10^{-4}$, \texttt{subgroup\_train\_epochs}=200 \\
\addlinespace
\hline
\addlinespace
\pysubgroup & \begin{tabular}[c]{@{}l@{}}Observational\\ Demographic\end{tabular} & \texttt{beam\_width}=200, \texttt{n\_bins}=20, $\alpha=1.0$ \\
\addlinespace
 & Interventional & \texttt{beam\_width}=100, \texttt{n\_bins}=5, $\alpha=1.0$ \\
\addlinespace
\hline
\addlinespace
\causaltree & Observational & \texttt{min\_samples\_leaf}=0.01, \texttt{max\_depth}=3 \\
 & Interventional & \texttt{min\_samples\_leaf}=0.01, \texttt{max\_depth}=5 \\
 & Demographic & \texttt{min\_samples\_leaf}=0.05, \texttt{max\_depth}=5 \\
\addlinespace
\hline
\addlinespace
\uplifttree & Observational & \texttt{min\_samples\_leaf}=0.1, \texttt{max\_depth}=3 \\
 & Interventional & \texttt{min\_samples\_leaf}=0.1, \texttt{max\_depth}=2 \\
 & Demographic & \texttt{min\_samples\_leaf}=0.3, \texttt{max\_depth}=5 \\
\addlinespace
\bottomrule
\end{tabular}%
\end{table}

\subsection{Sensitivity}
\label{app:sensitivity}
Additionally, we conduct a sensitivity analysis of the hyperparameters $\lambda$ and $\gamma$ of \ourmethod.
We vary $\lambda$ in the range $[0., 1.0]$ and $\gamma$ in the range $[0., 1.0]$.
We keep all other hyperparameters fixed to the values selected in the observational setting (see Table \ref{tab:hyperparameters}).
The results are displayed in Fig.~\ref{fig:sensitivity_lambda_gamma}.
For $\gamma$, we observe that a small value of $\gamma\in [0.1,0.3]$ leads to the best performance across all settings.
When disabling size regularization, i.e.~$\gamma=0$, the performance drops significantly.
On the other hand, when prioritizing size too much, i.e.~$\gamma>0.5$, the performance also drops.
For $\lambda$, we observe that a value of $\lambda\in[0.0,0.5]$ leads to good performance.
Increasing the strength of covariate independence regularization too much, i.e.~$\lambda>0.5$, leads to a drop in performance.
We hypothesize that this is due to the fact that the optimization focuses too much on minimizing covariate dependence, which can lead to very small subgroups with low exceptionality.

\textbf{Estimators.}
We now compare the effect of using different estimators for the subgroup density $\hat{P}_{\attribute,\subgroupf}(\targvar)$ and the local conditional distribution $\hat{P}(\targvar \mid \featvar=\feat, \attributevar=\attribute)$.
We compare the kernel density estimator (KDE) with a Gaussian Mixture Model (GMM), and a simple Gaussian.
We display the results in Fig.~\ref{fig:sensitivity_density}.
In all settings, KDE slightly outperforms GMM, which in turn is better than a simple Gaussian.
Especially in the demographic setting, using a Gaussian leads to a significant drop in performance.
This is likely due to the fact that the simple Gaussian is not flexible enough to capture the true distribution of the target variable within the subgroup.

Regarding the local conditional distribution $\hat{P}(\targvar \mid \featvar=\feat, \attributevar=\attribute)$, we compare a Random Forest (RF) with a SVM and a neural network.
We display the results in Fig.~\ref{fig:sensitivity_regularizer}.
The results show now trend no clear winner across all settings.
In the observational setting, NN performs best, while in the interventional and observational settings, RF has a slight edge.
Ultimately, the choice of estimator for the local conditional distribution does not seem to have a significant impact on the performance of \ourmethod
in the synthetic experiments.
\begin{figure}[t]
    \centering
    \begin{subfigure}[t]{0.24\linewidth}
        \centering
        \includegraphics[width=\linewidth]{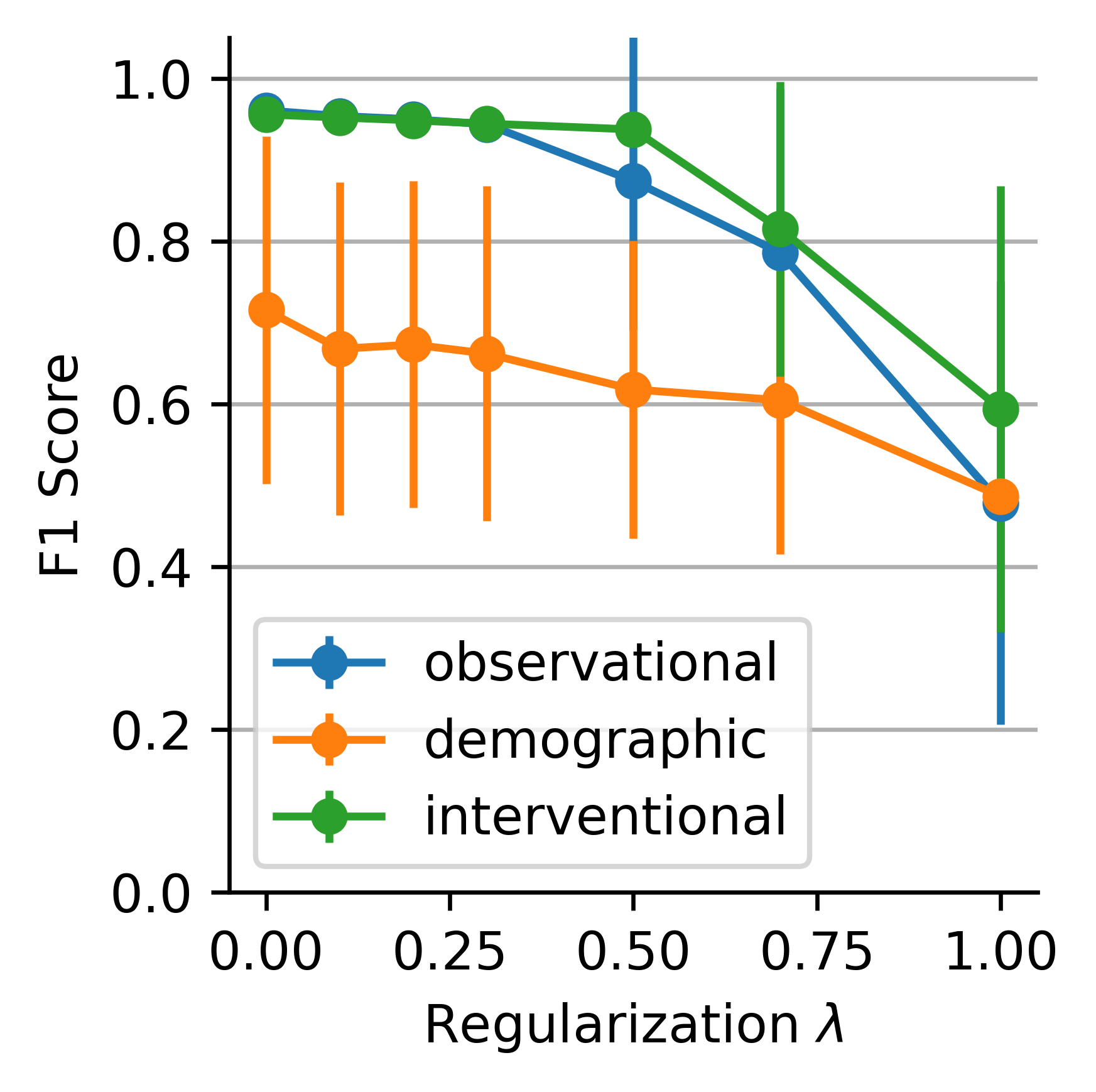}
        \caption{Sensitivity of $\lambda$.}
        \label{fig:sensitivity_lambda}
    \end{subfigure}
    \hfill
    \begin{subfigure}[t]{0.24\linewidth}
        \centering
        \includegraphics[width=\linewidth]{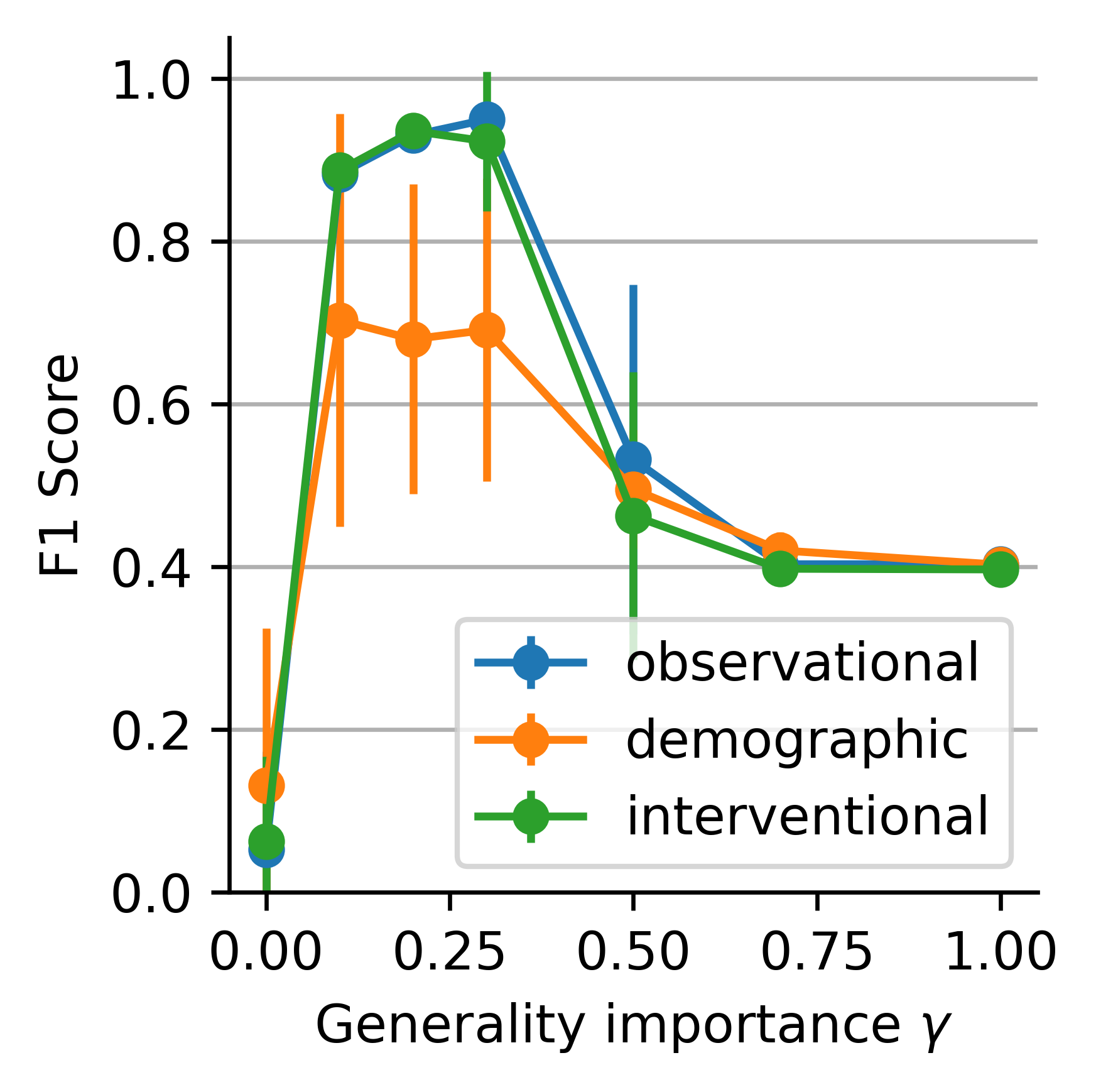}
        \caption{Sensitivity of $\gamma$.}
        \label{fig:sensitivity_gamma}
    \end{subfigure}
    \hfill
    \begin{subfigure}[t]{0.24\linewidth}
        \centering
        \includegraphics[width=\linewidth]{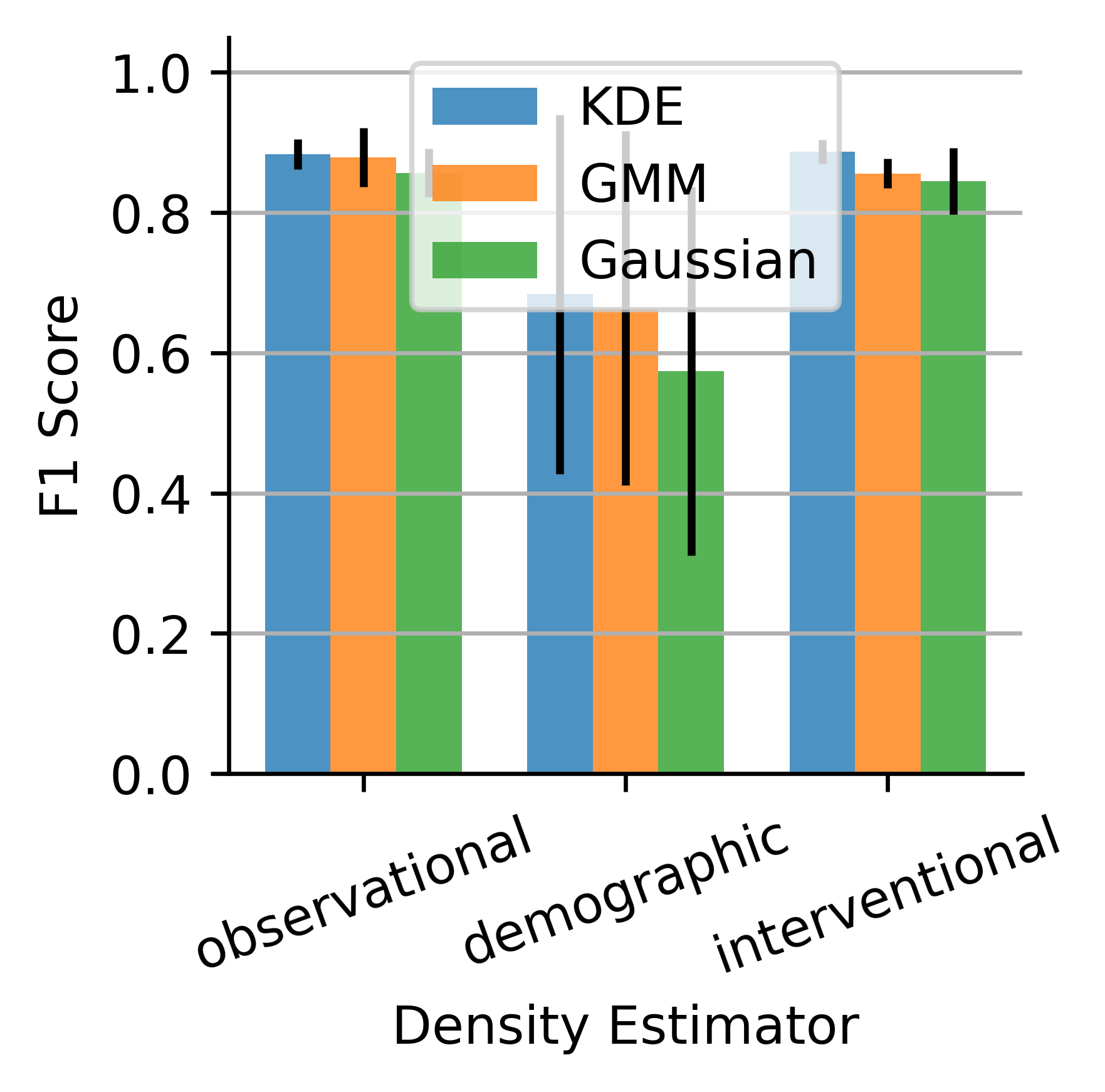}
        \caption{Sensitivity of subgroup density estimator.}
        \label{fig:sensitivity_density}
    \end{subfigure}
    \hfill
    \begin{subfigure}[t]{0.24\linewidth}
        \centering
        \includegraphics[width=\linewidth]{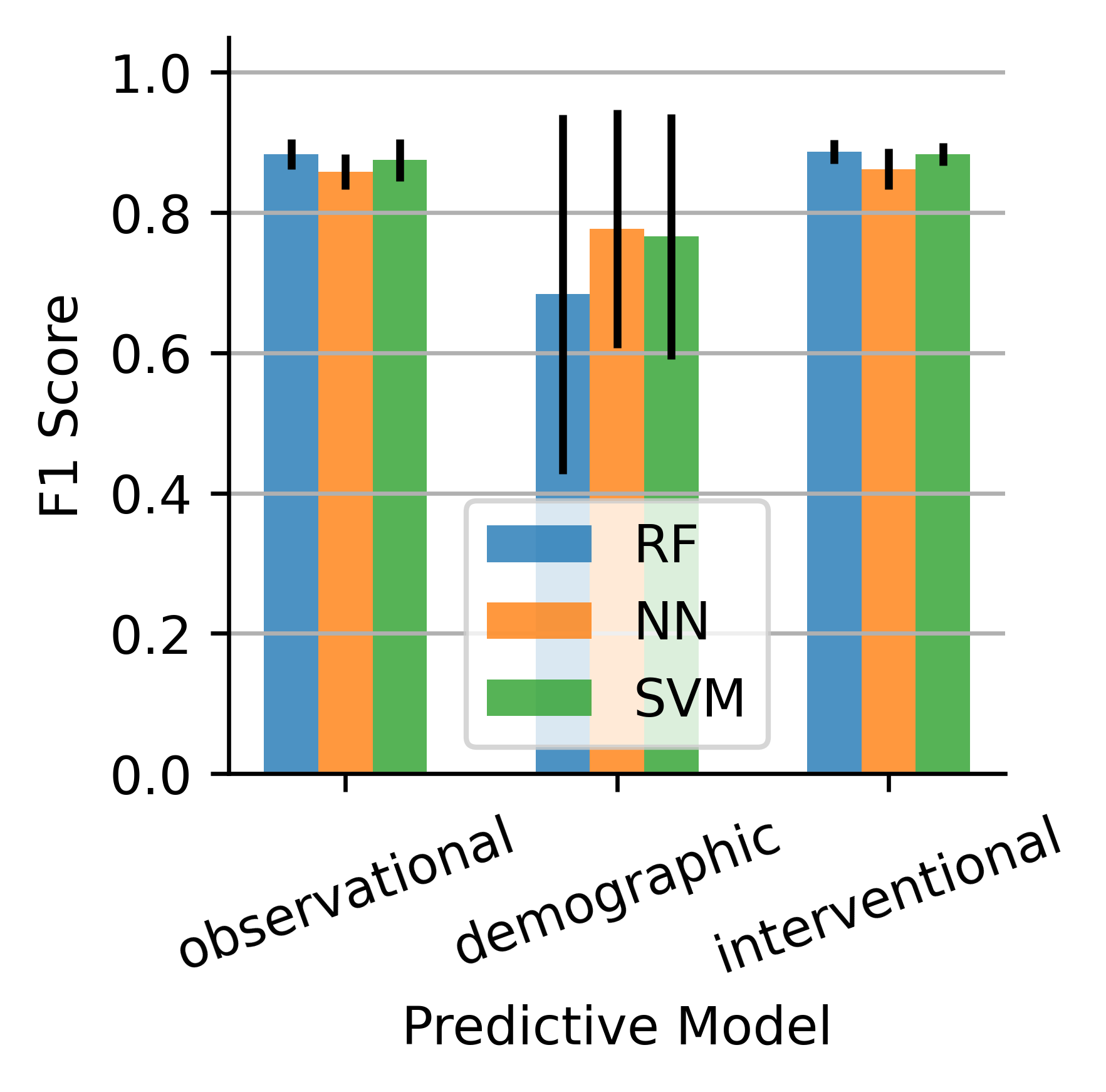}
        \caption{Sensitivity of regularization model.}
        \label{fig:sensitivity_regularizer}
    \end{subfigure}
    \caption{Sensitivity analysis of hyperparameters $\lambda$ and $\gamma$, as well as choice of estimators for 
    subgroup density and local density on the synthetic dataset. 
    }
    \label{fig:sensitivity_lambda_gamma}
\end{figure}

\clearpage

\section{Experiments: Real World}

\subsection{IHDP}
\label{app:ihdp}
We further display the subgroup distributions discovered for each permutation of the IHDP dataset \citep{hill2011bayesian}.
We plot the subgroup distributions in Fig.~\ref{fig:ihdp_overall}.
We list the rules for the discovered subgroups in Table \ref{tab:ihdp_rules}.
\begin{figure}
    \centering
    \begin{tabular}{cccc}
        \includegraphics[width=0.24\textwidth]{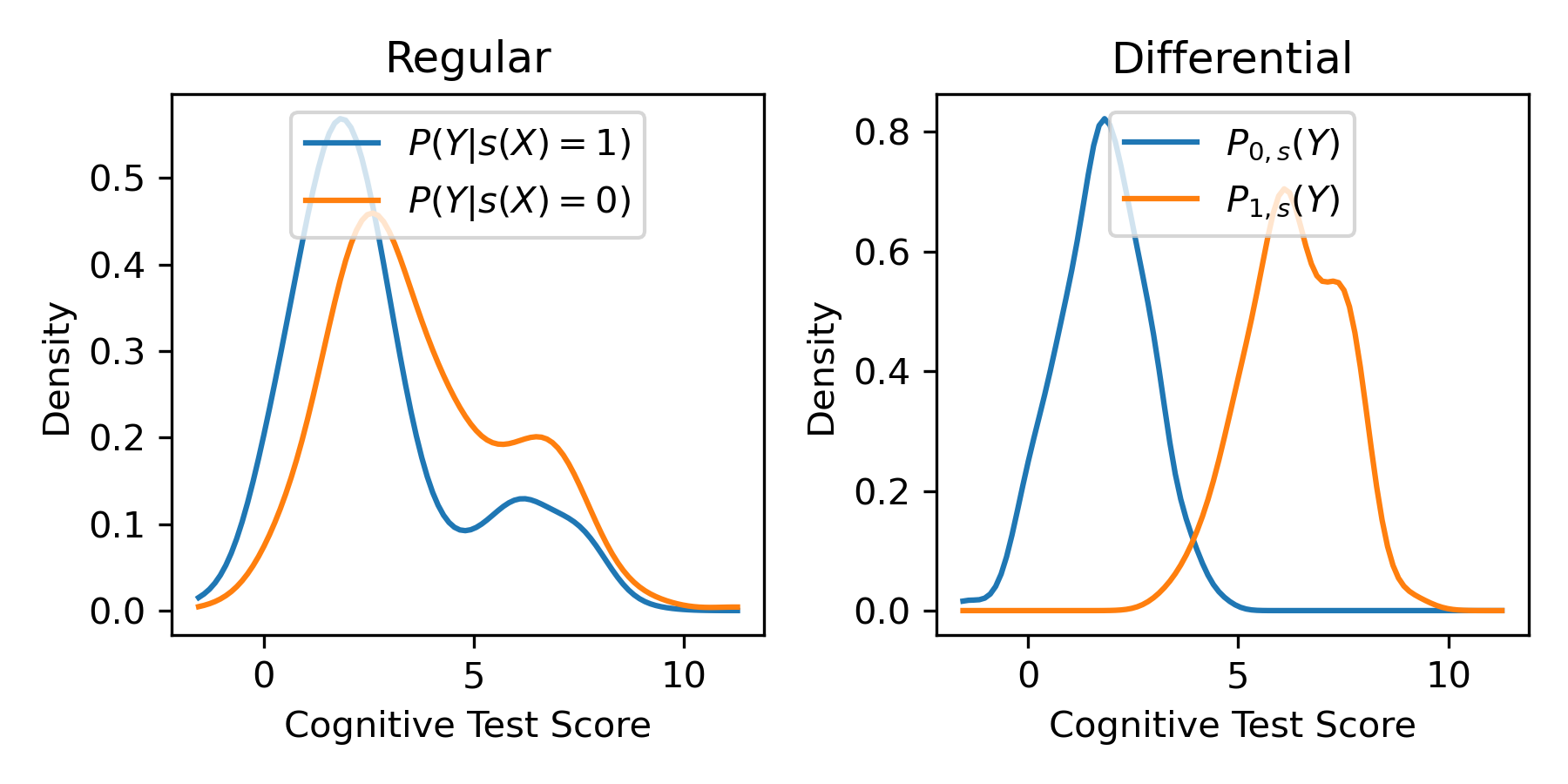} &
        \includegraphics[width=0.24\textwidth]{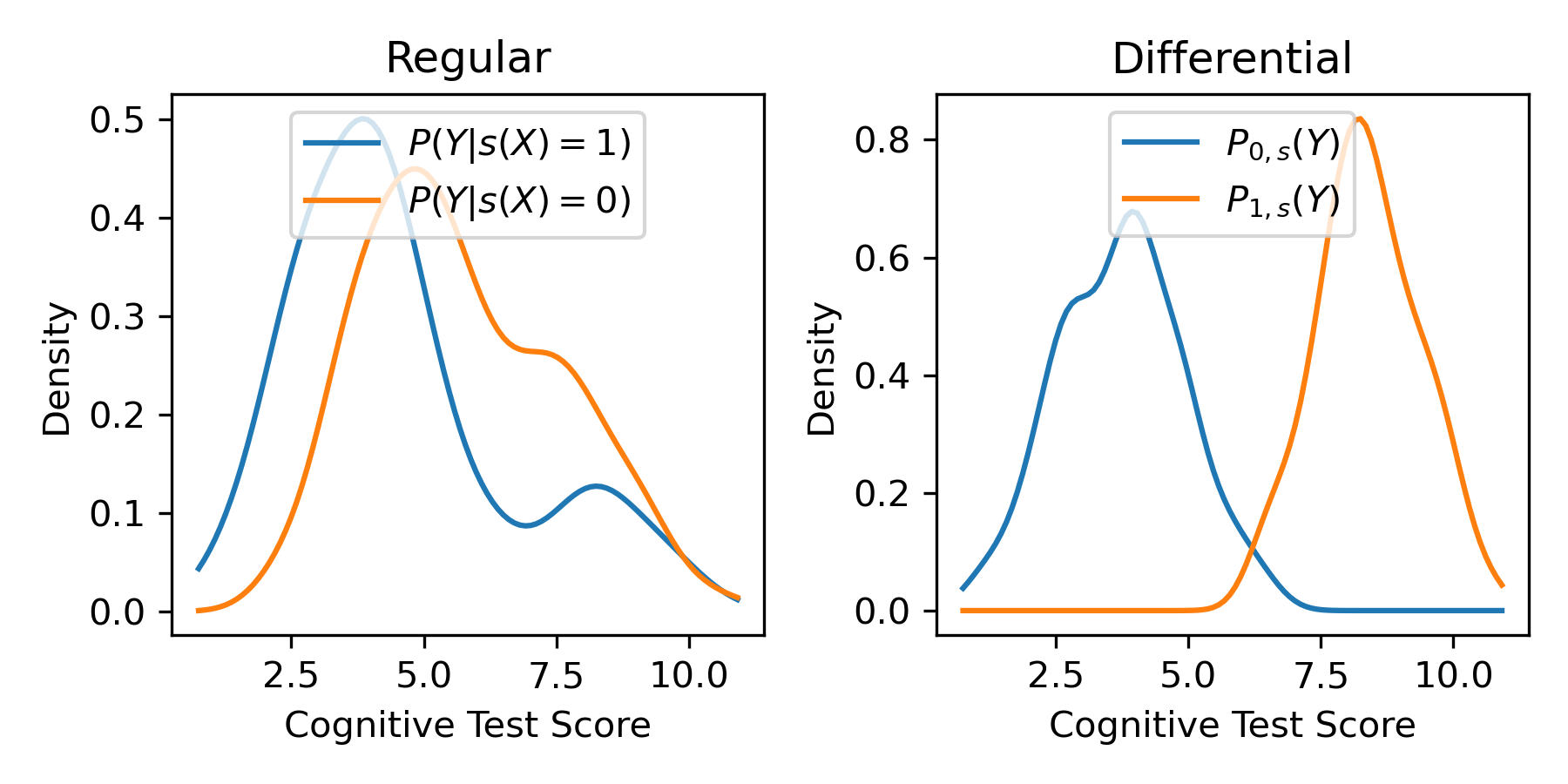} &
        \includegraphics[width=0.24\textwidth]{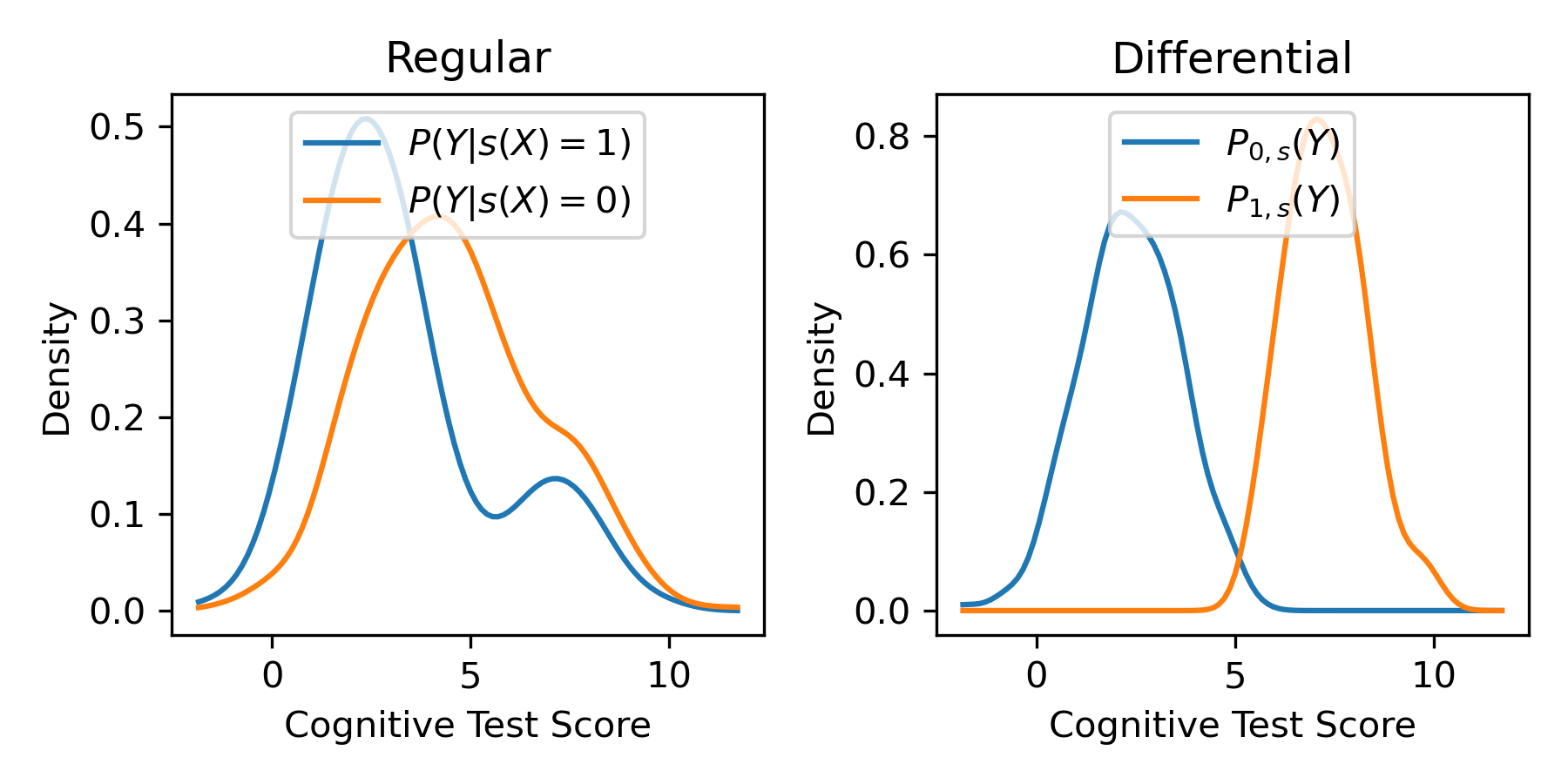} &
        \includegraphics[width=0.24\textwidth]{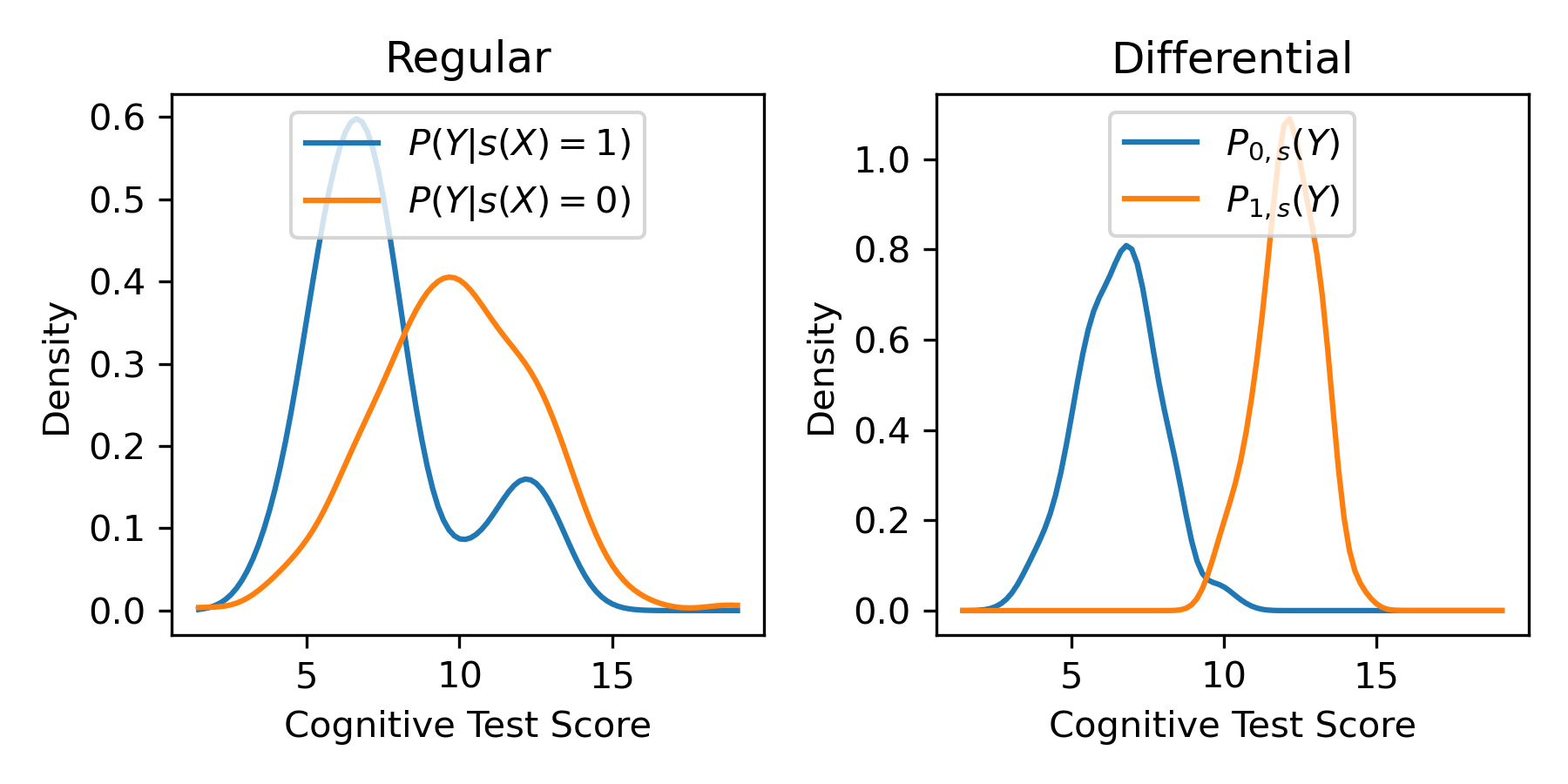} \\
        \includegraphics[width=0.24\textwidth]{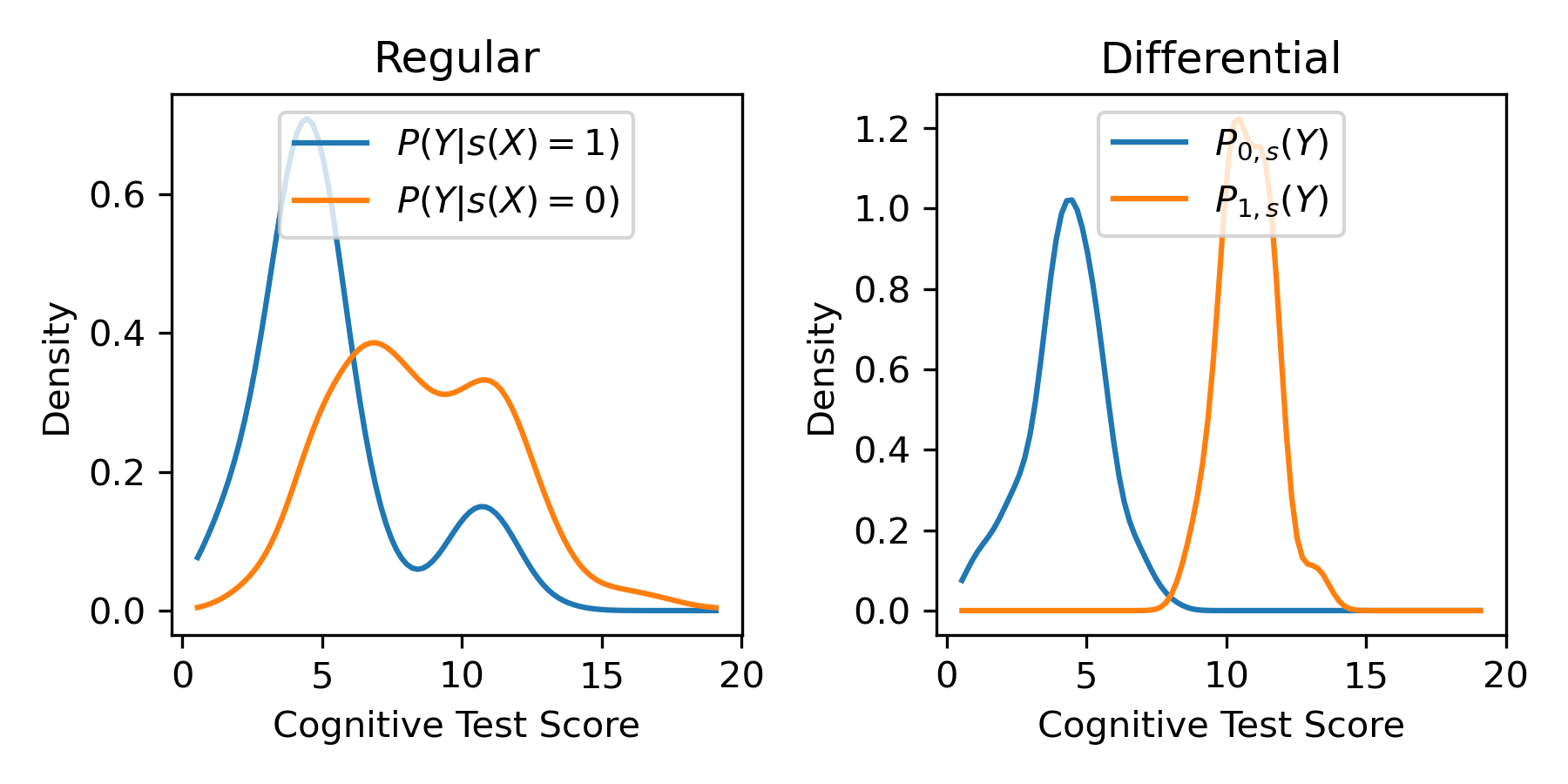} &
        \includegraphics[width=0.24\textwidth]{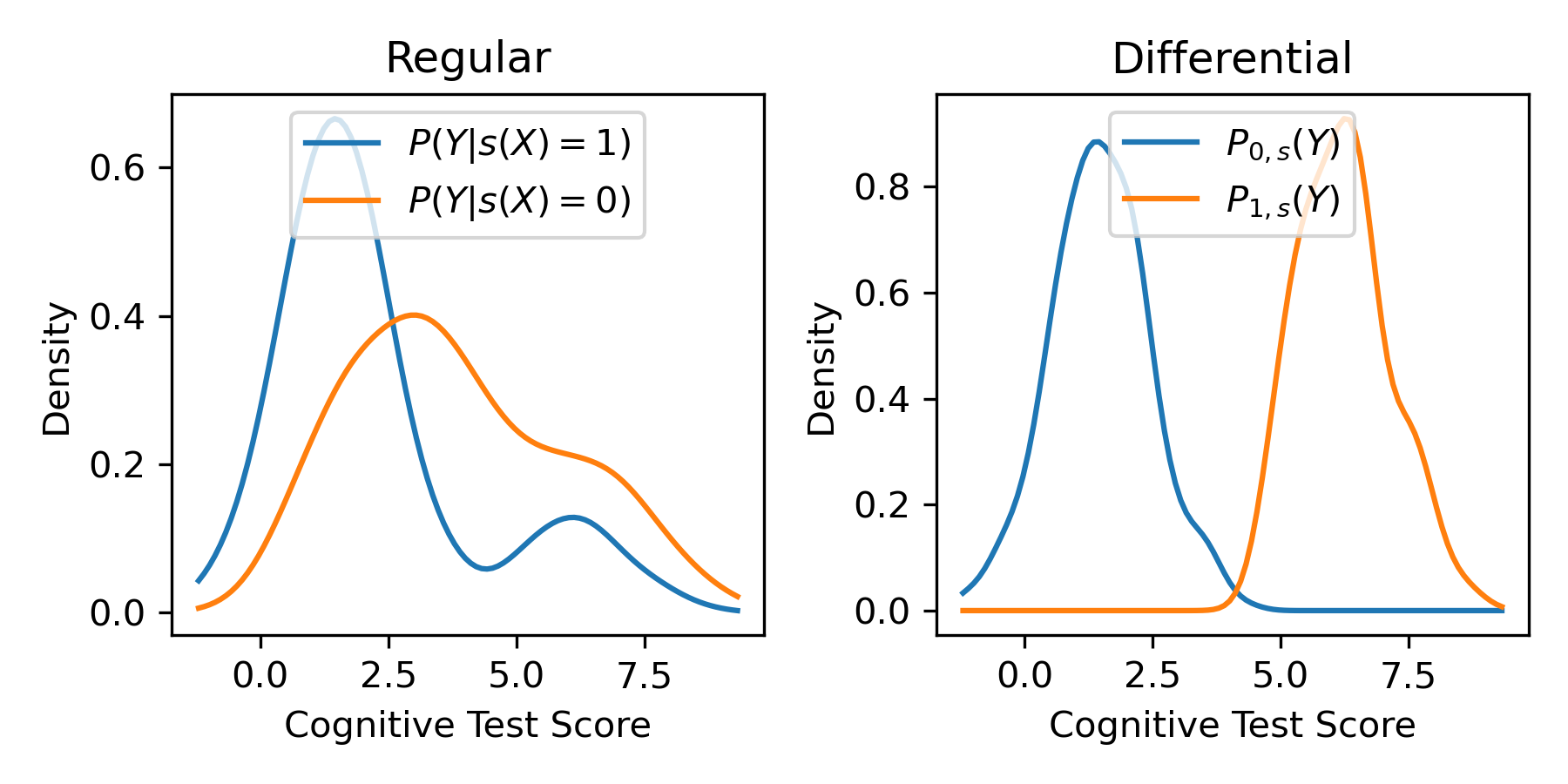} &
        \includegraphics[width=0.24\textwidth]{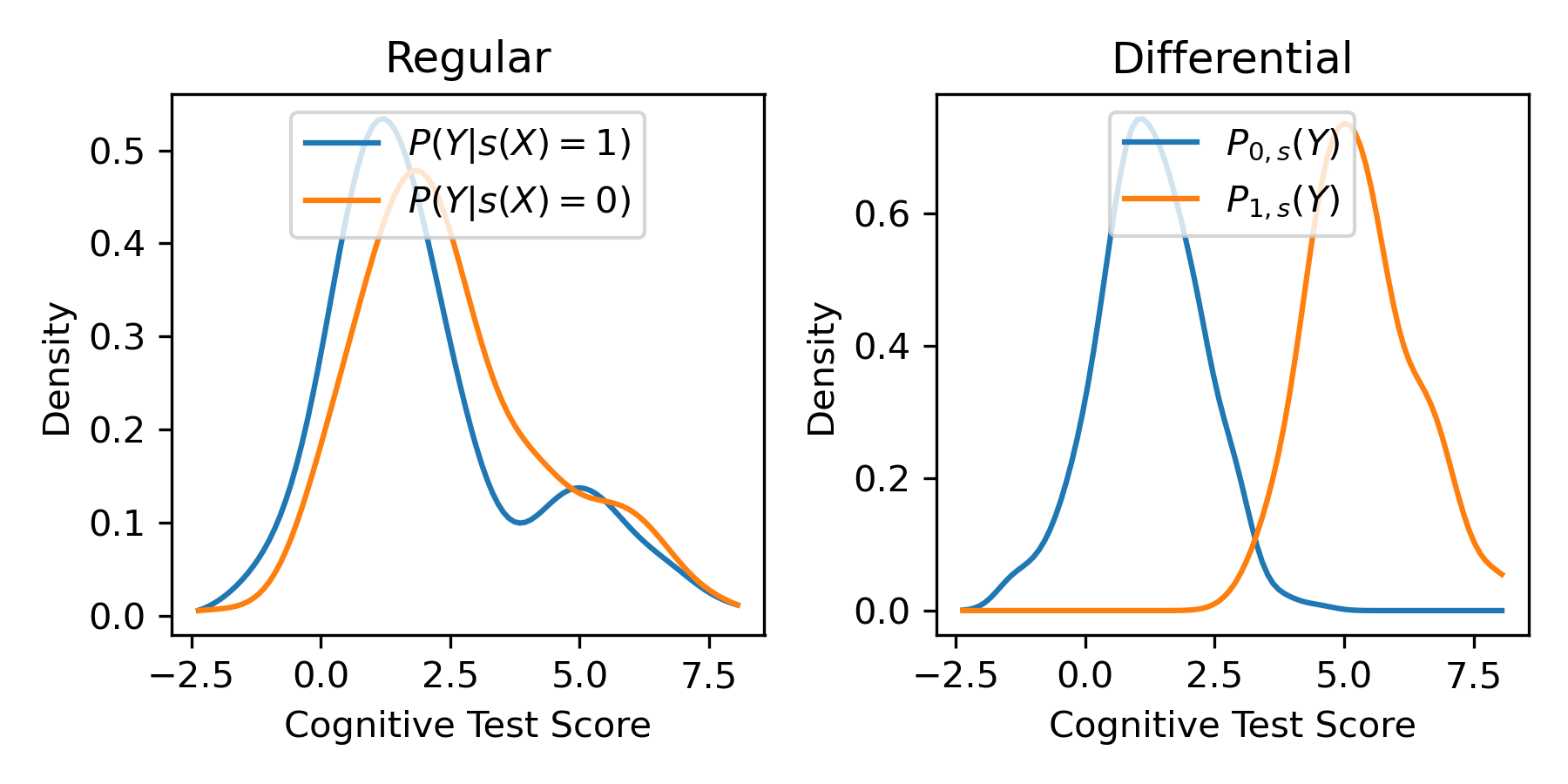} &
        \includegraphics[width=0.24\textwidth]{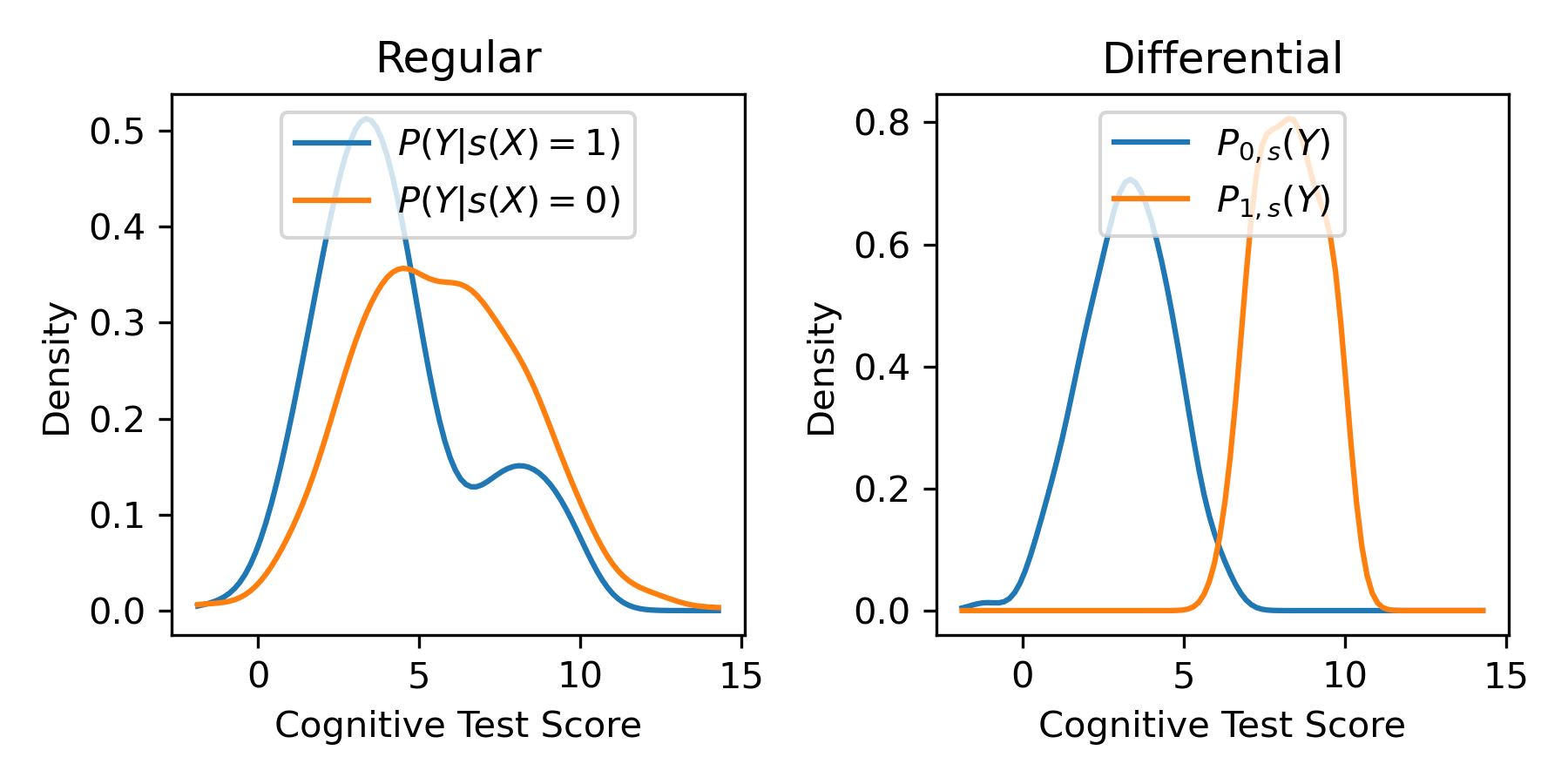} \\ &
        \includegraphics[width=0.24\textwidth]{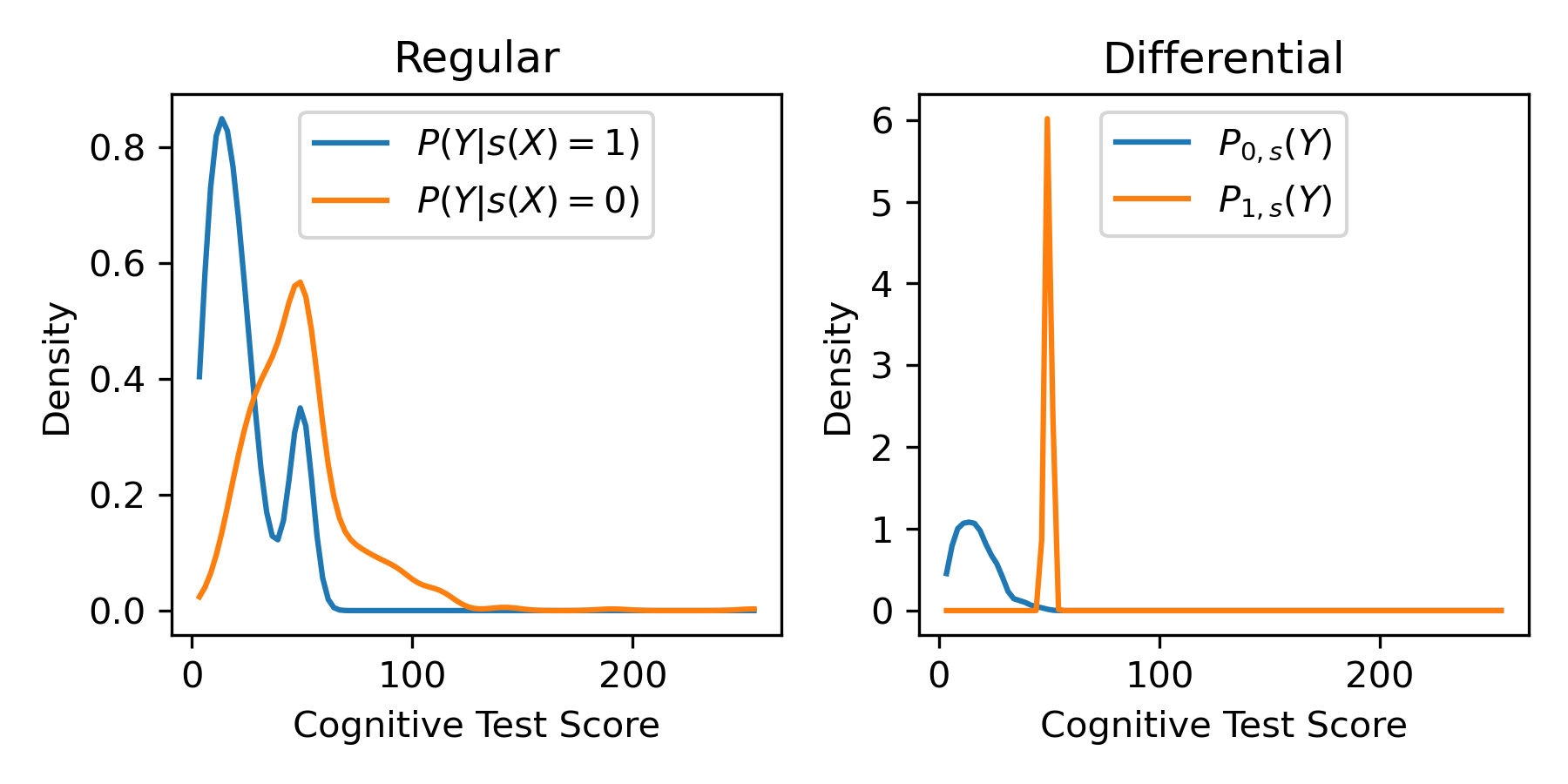} &
        \includegraphics[width=0.24\textwidth]{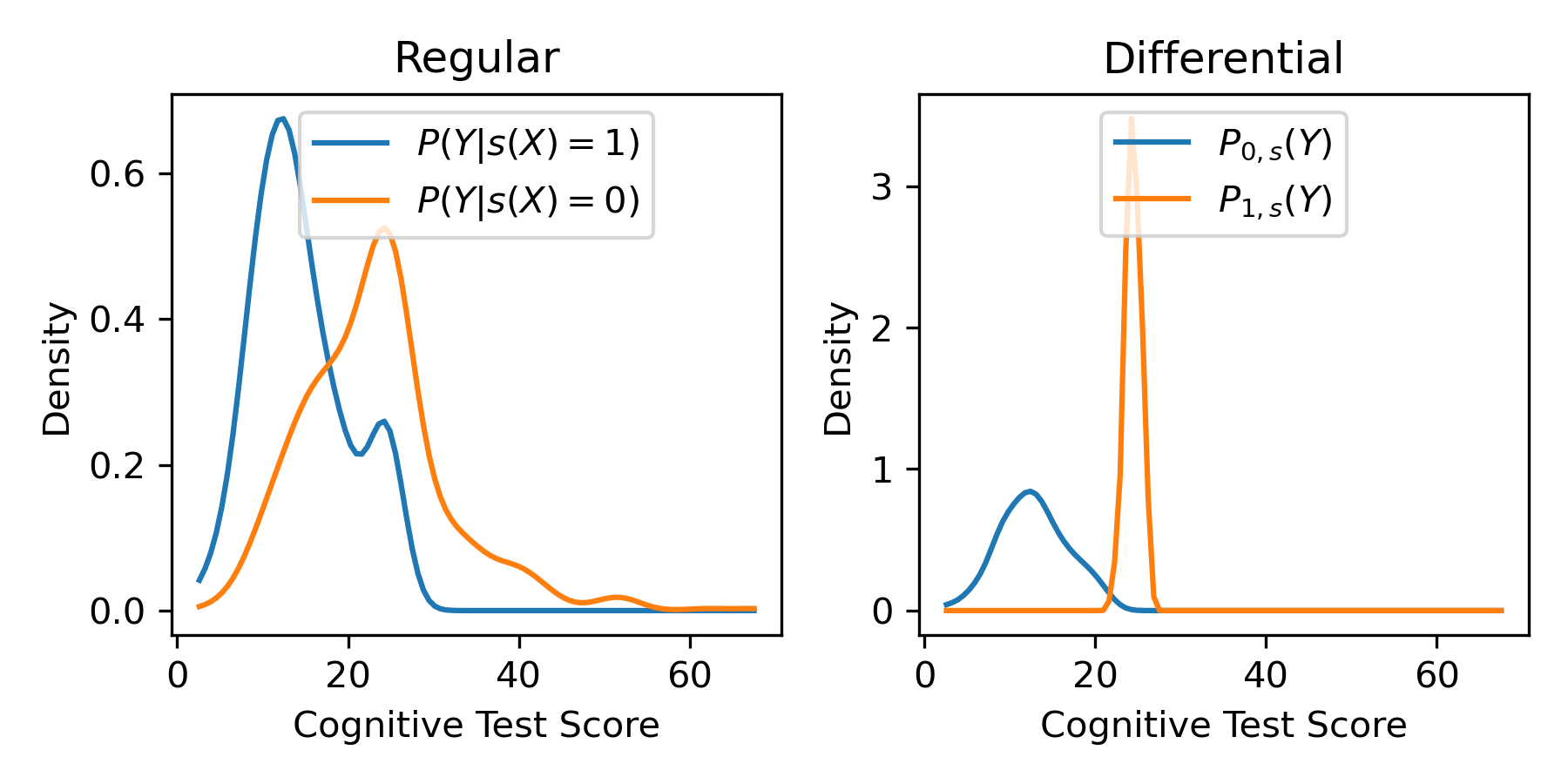} & \\
    \end{tabular}
    \caption{Subgroups discovered in the IHDP dataset by \ourmethod.}
    \label{fig:ihdp_overall}
\end{figure}

\begin{table*}[ht]
    \centering
    \begin{tabular}{|p{0.45\textwidth}|p{0.45\textwidth}|}
        \hline
        \textbf{Subgroup 1-5 Rules} & \textbf{Subgroup 6-10 Rules} \\
        \hline
        $-1.59 < X2 \& X3 < 1.50 \& X5 < 1.05 \& X14 < 0.64 \& X19 < 0.62$
 & $-2.60 < X0 \& -3.48 < X1 \& X2 < 2.82 \& X3 < 1.29 \& X5 < 1.01 \& X14 < 0.65 \& X24 < 0.63$
 \\
        \hline
        $X0 < 1.40 \& -4.48 < X4 \& X5 < 2.63 \& X7 < 0.68 \& X11 < 0.62 \& X20 < 0.65$
 & $X1 < 1.53 \& X3 < 0.75 \& X10 < 0.53$
 \\
        \hline
        $X0 < 0.54 \& X1 < 0.64 \& X3 < 1.63 \& X21 < 0.69$
 & $X1 < 2.29 \& X4 < 0.86 \& X6 < 0.40 \& X14 < 0.64 \& X22 < 0.75 \& X23 < 0.67$
 \\
        \hline
        $X1 < 1.20 \& -4.43 < X4 \& X5 < 0.70 \& X20 < 0.64 \& X23 < 0.65$
 & $0.24 < X17 \& X19 < 0.85$
 \\
        \hline
        $-0.21 < X3 \& -1.78 < X5 \& X6 < 0.40 \& X13 < 1.47 \& X19 < 0.58$
 & $-1.24 < X2 \& X3 < 1.42 \& X4 < 0.66 \& X11 < 0.54 \& X18 < 0.66$
 \\
        \hline
    \end{tabular}
    \caption{Rules for each discovered subgroup in the IHDP dataset.}
    \label{tab:ihdp_rules}
\end{table*}

\subsection{COVID-19}
We report the returned top subgroup from \pysubgroup, and two variants of \ourmethod: \ourmethod-$\max$ and \ourmethod-$\min$.
\ourmethod-$\max$ tries to maximize the distance $D$, whilst for \ourmethod-$\min$ we set $D = - D_{JS}$, such that it minimizes the distance.
In this section, we provide the top three subgroups found by \ourmethod-$\max$, \ourmethod-$\min$ and \pysubgroup on the COVID-19 dataset.
To obtain multiple subgroups using \ourmethod, we iteratively run the optimization and remove those points, which have 
been identified as subgroup from the dataset after each run.

\begin{figure}[h]
    \begin{subfigure}[t]{0.3\linewidth}
        \includegraphics[width=\linewidth]{figs/covid/max_subgroup_1_overall.png}
        \caption{\ourmethod-$\max$ Subgroup 1: race = not black AND hypertension = 0 AND diabetes = 1 AND cerebrovascular.disease = 0}
    \end{subfigure}
    \hfill
    \begin{subfigure}[t]{0.3\linewidth}
        \includegraphics[width=\linewidth]{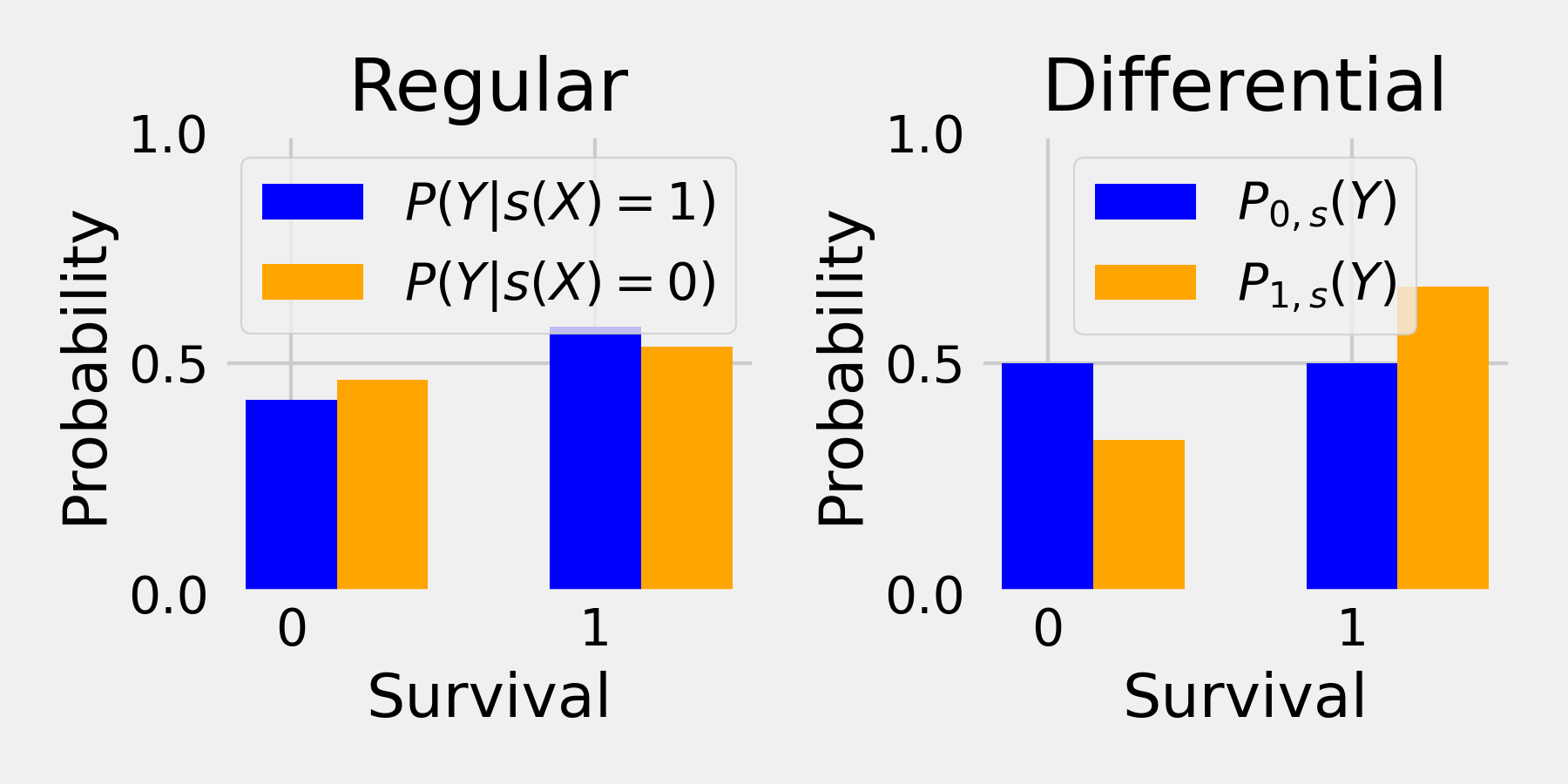}
        \caption{\ourmethod-$\max$ Subgroup 2: age $>30$ AND coronary.artery.disease = 0 AND cerebrovascular.disease = 0 AND hepatitis = 0 AND dementia = 0 AND chronic.kidney.disease = 0 AND dementia = 0 AND cancer = 0}
    \end{subfigure}
    \hfill
    \begin{subfigure}[t]{0.3\linewidth}
        \includegraphics[width=\linewidth]{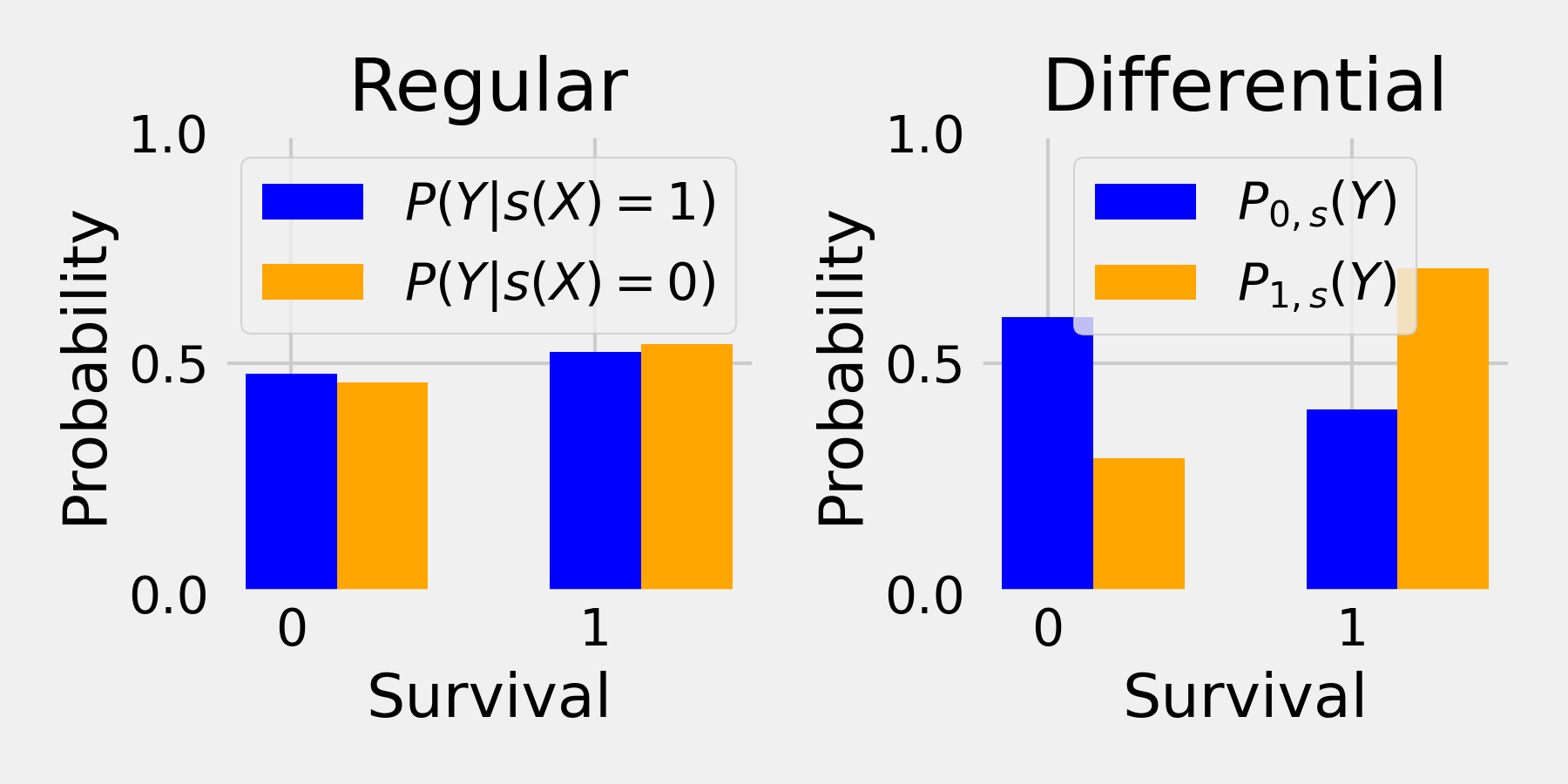}
        \caption{\ourmethod-$\max$ Subgroup 3:  19.10 $<$ age $<$ 79.85 AND race = not white AND race not other AND hypertension = 0 AND copd = 0 AND dementia = 0}
    \end{subfigure}
    \begin{subfigure}[t]{0.3\linewidth}
        \includegraphics[width=\linewidth]{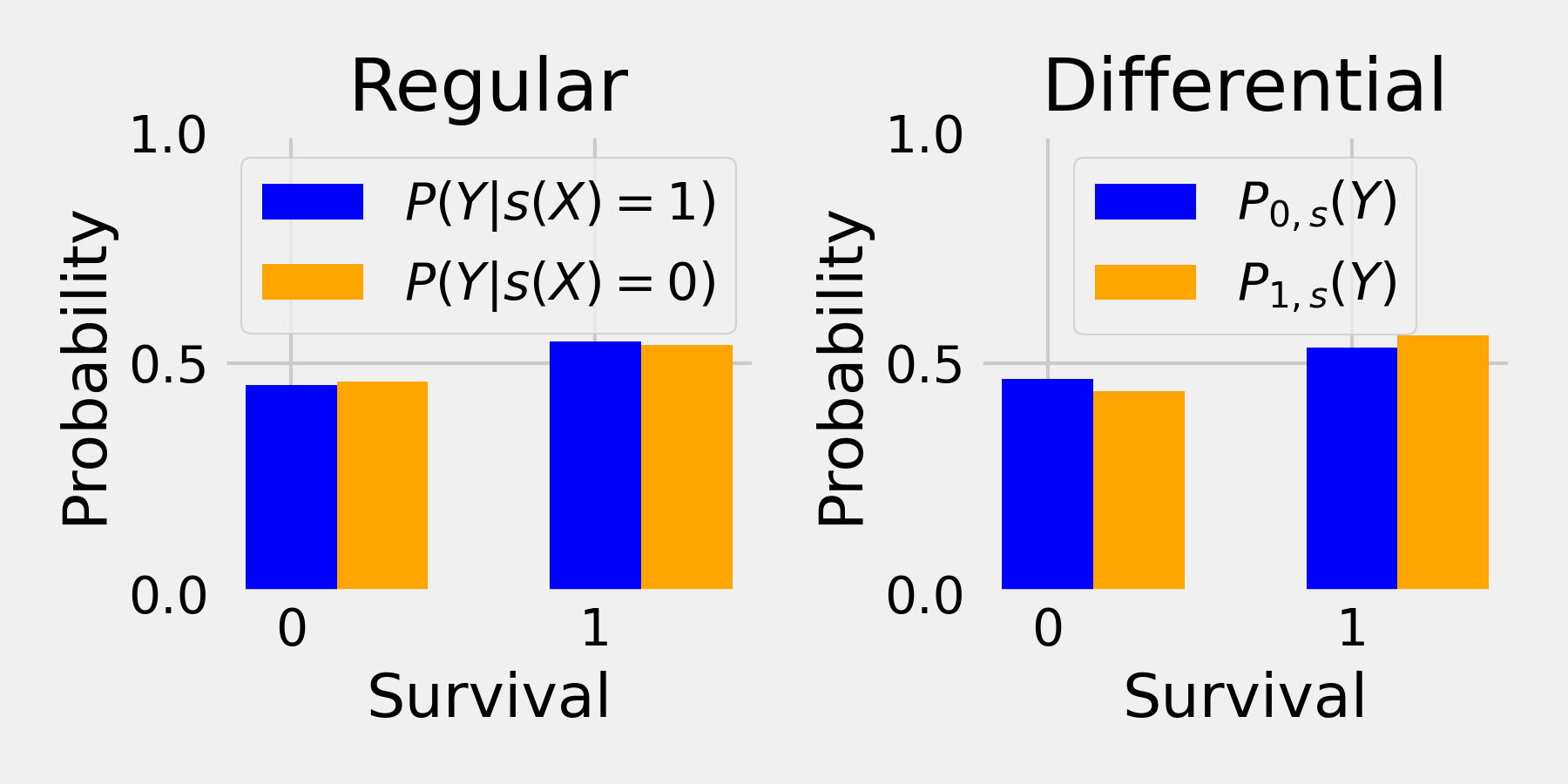}
        \caption{\ourmethod-$\min$ Subgroup 1: 11.56 $<$ age $<$ 80.76 AND  race = not white AND  hypertension = 1 AND hyperlipidemia = 1 AND chf = 1}
    \end{subfigure}
    \hfill
    \begin{subfigure}[t]{0.3\linewidth}
        \includegraphics[width=\linewidth]{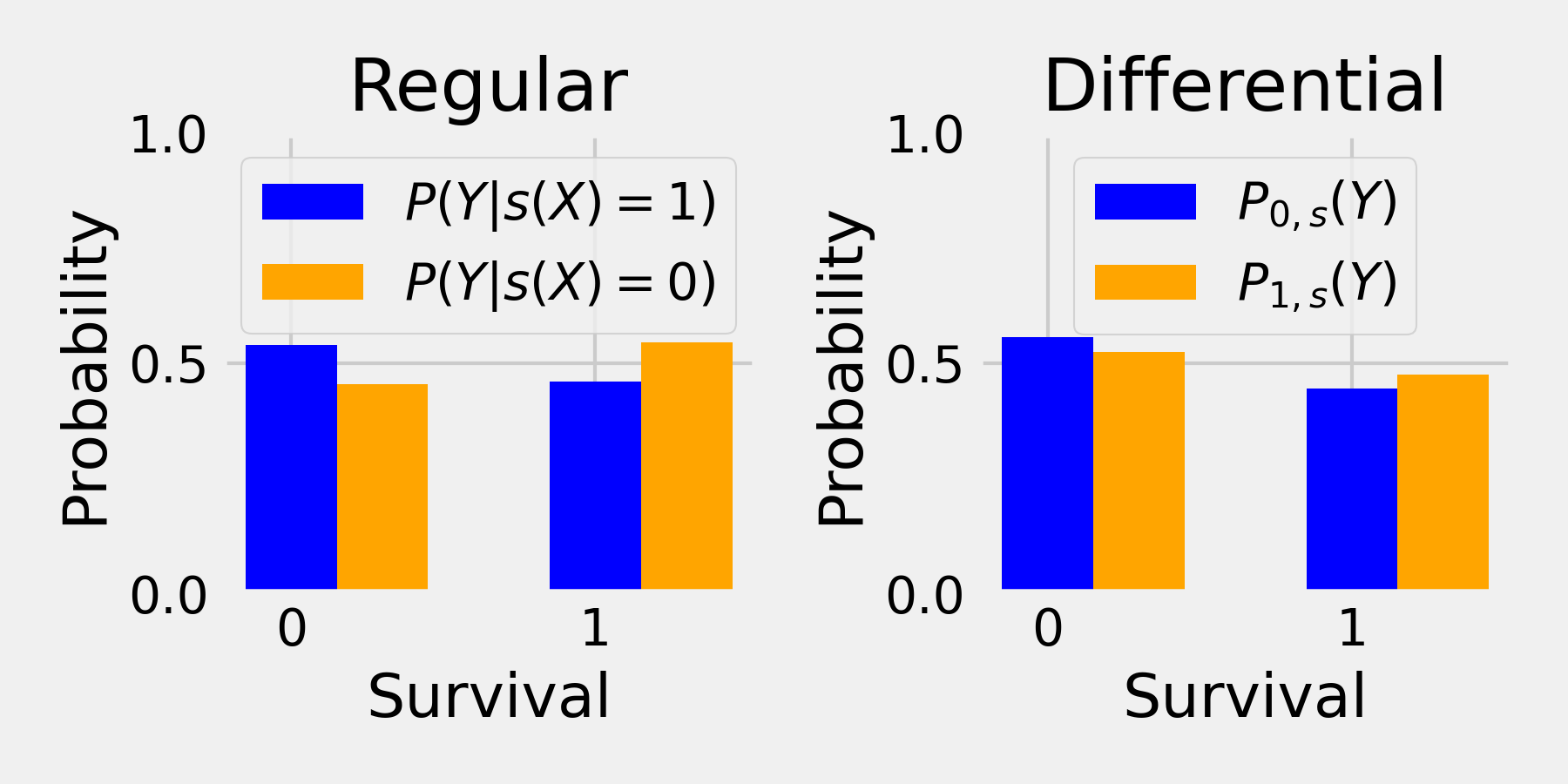}
        \caption{\ourmethod-$\min$ Subgroup 2: 71.90 $<$ age $<$ 99.57 AND race = not white AND hypertension = 1}
    \end{subfigure}
    \hfill
    \begin{subfigure}[t]{0.3\linewidth}
        \includegraphics[width=\linewidth]{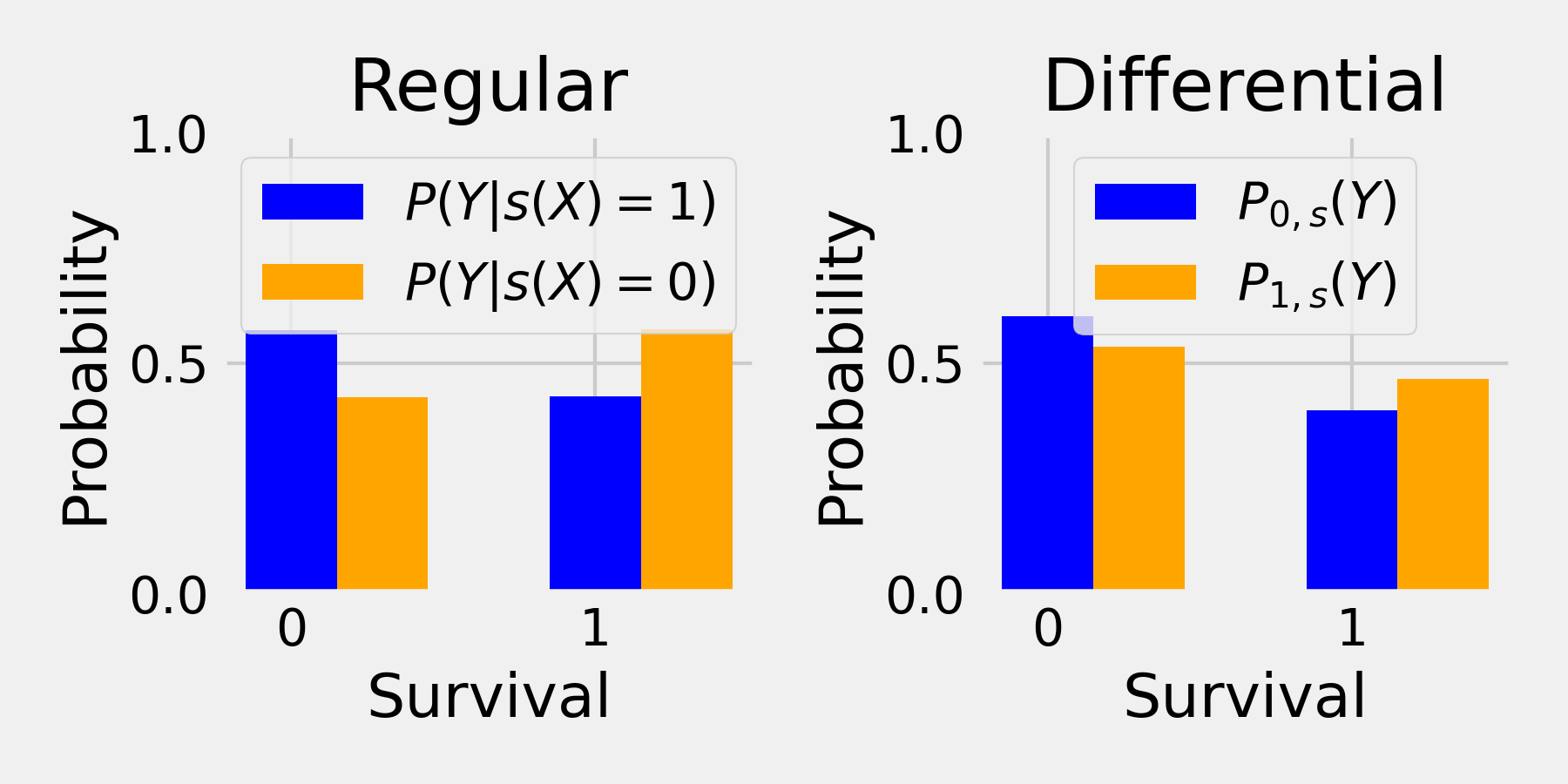}
        \caption{\ourmethod-$\min$ Subgroup 3: 62.09 $<$ age $<$ 76.33 AND hypertension = 1 AND chf = 1}
    \end{subfigure}

    \begin{subfigure}[t]{0.3\linewidth}
        \includegraphics[width=\linewidth]{figs/covid/ps_subgroup_1_overall.png}
        \caption{\pysubgroup Subgroup 1: age$<$74.0 AND coronary.artery.disease==0.0 AND cerebrovascular.disease==0.0}
    \end{subfigure}
    \hfill
    \begin{subfigure}[t]{0.3\linewidth}
        \includegraphics[width=\linewidth]{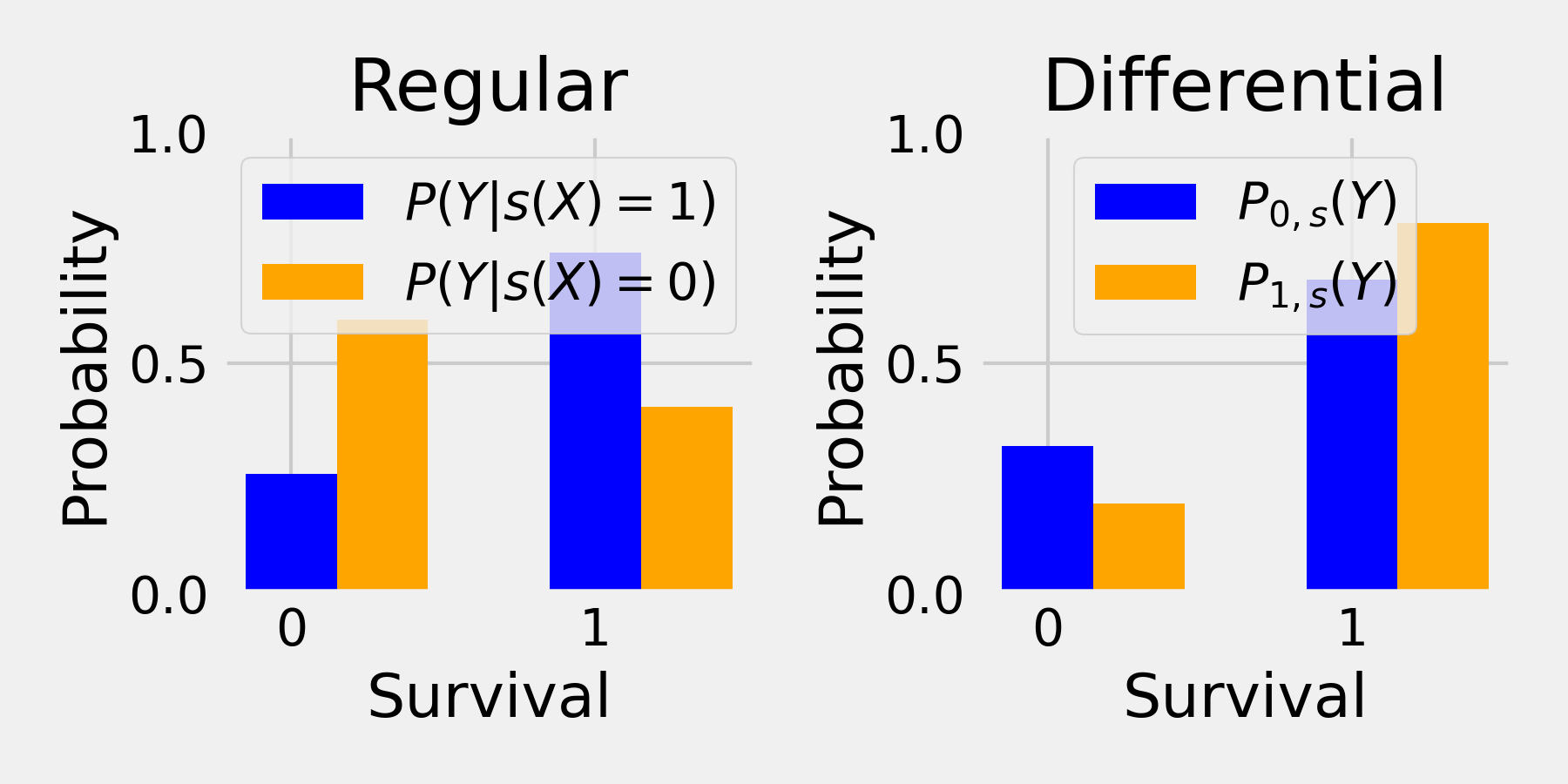}
        \caption{\pysubgroup Subgroup 2: age$<$74.0 AND coronary.artery.disease==0.0 AND cerebrovascular.disease==0.0}
    \end{subfigure}
    \hfill
    \begin{subfigure}[t]{0.3\linewidth}
        \includegraphics[width=\linewidth]{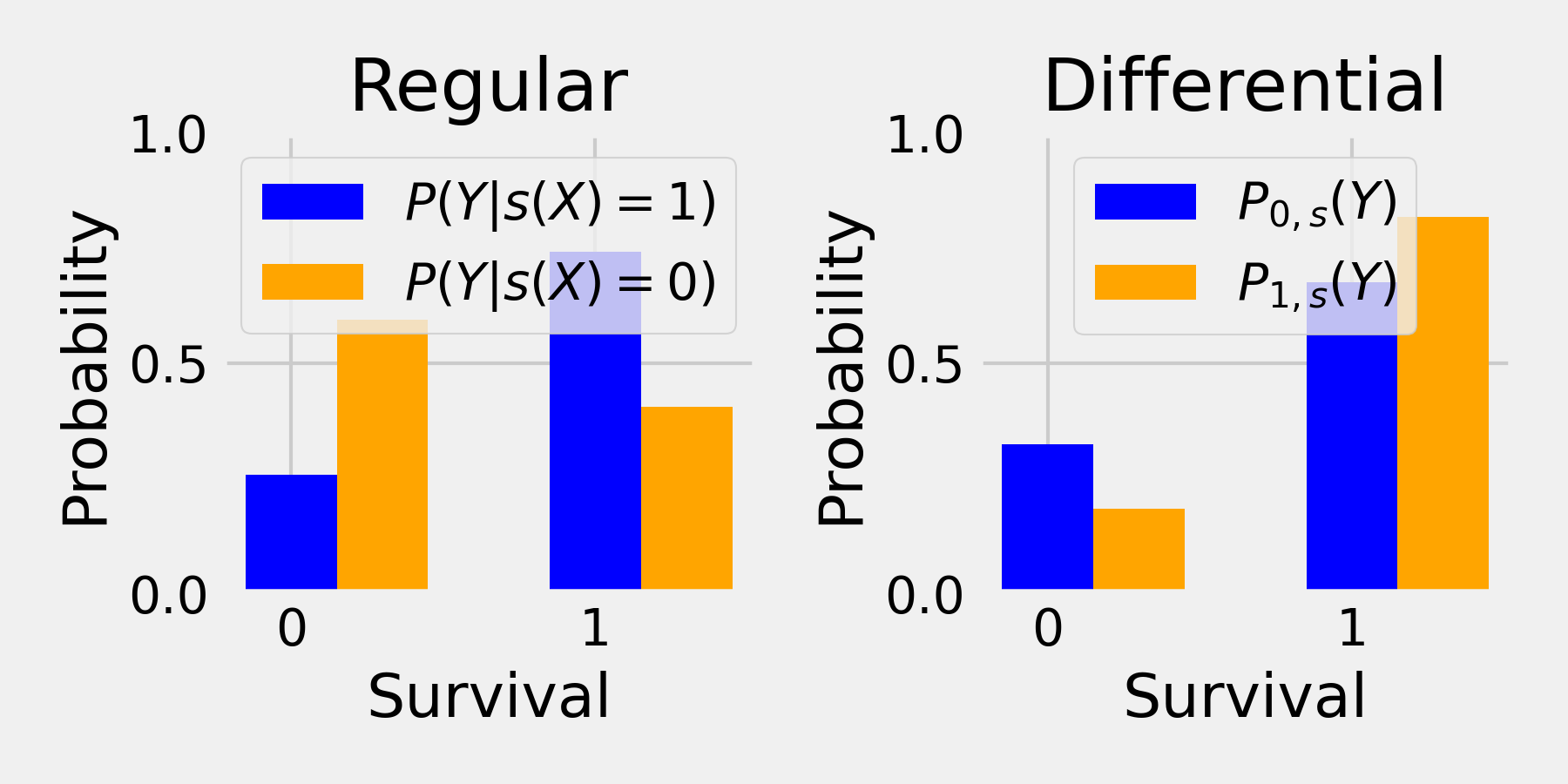}
        \caption{\pysubgroup Subgroup 3: age$<$72.0 AND coronary.artery.disease==0.0 AND hepatitis==0.0}
    \end{subfigure}
    \caption{Top three subgroups discovered by \ourmethod-$\max$, \ourmethod-$\min$ and \pysubgroup on the COVID19 ICU dataset \cite{lambert2022using}.}
\end{figure}

\clearpage
\subsection{Life Expectancy}
We further present the results discovered on the WHO Life expectancy dataset \citep{kumarajarshi_ray_2020}. We differentiate between the developed nations ($\attributevar=1$) and developing/least developed countries ($\attributevar=0$).
We report the returned top subgroup from \pysubgroup, and two variants of \ourmethod: \ourmethod-$\max$ and \ourmethod-$\min$.
\ourmethod-$\max$ tries to maximize the distance $D$, whilst for \ourmethod-$\min$ we set $D = - D_{JS}$, such that it minimizes the distance.
In this section, we provide the top three subgroups found by \ourmethod-$\max$, \ourmethod-$\min$ and \pysubgroup on the COVID-19 dataset.
To obtain multiple subgroups using \ourmethod, we iteratively run the optimization and remove those points, which have 
been identified as subgroup from the dataset after each run.

\ourmethod-$\max$ 
\begin{figure}[h]
    \begin{subfigure}[t]{.3\linewidth}
        \centering
        \includegraphics[width=\linewidth]{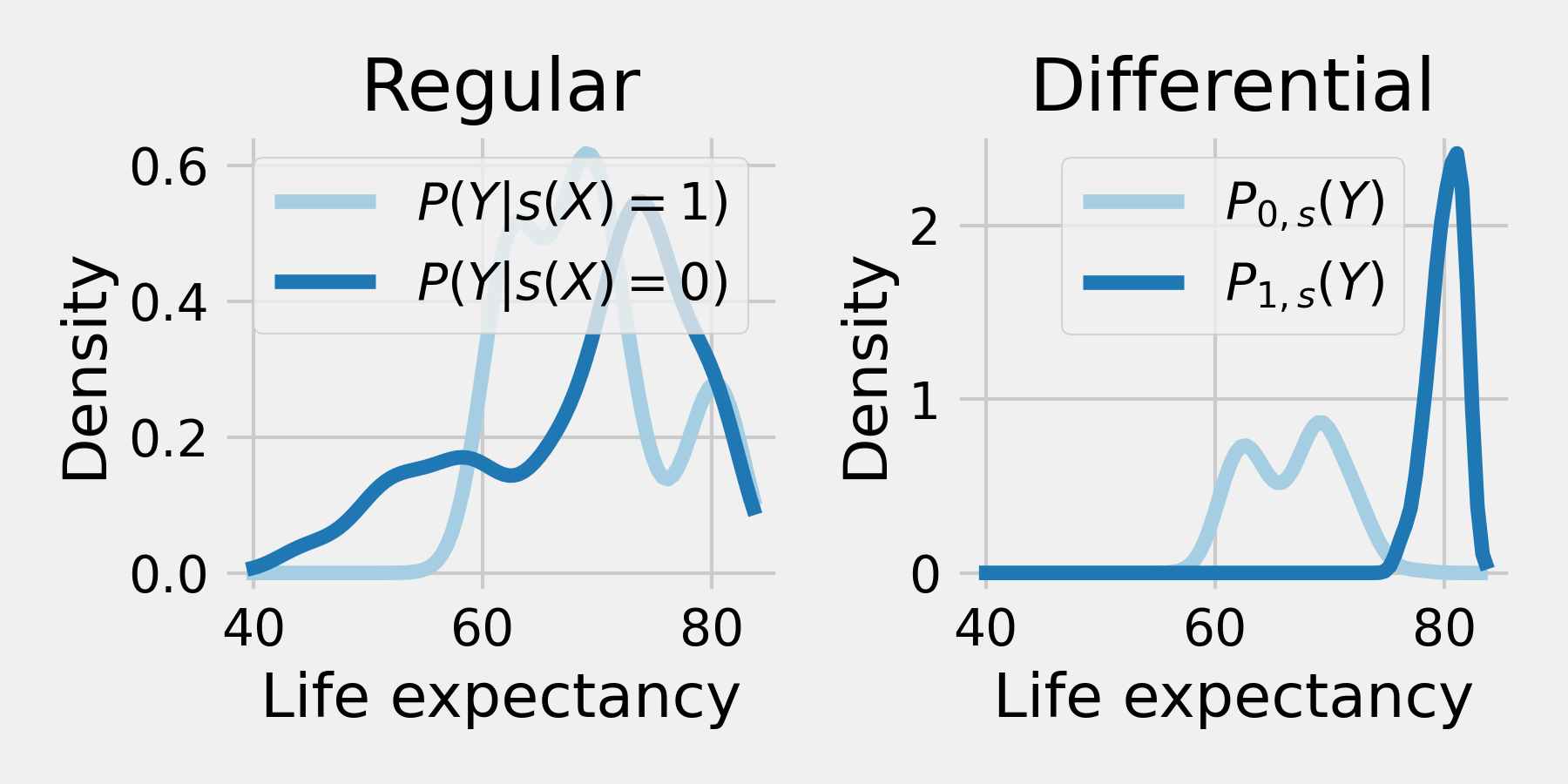}
        \caption{\ourmethod-$\max$ Subgroup 1: Under-five-deaths $<$ 112.36 AND Adult-mortality $<$ 254.76 AND Hepatitis-B $<$ 71.96
}
    \end{subfigure}
    \hfill
    \begin{subfigure}[t]{.3\linewidth}
        \centering
        \includegraphics[width=\linewidth]{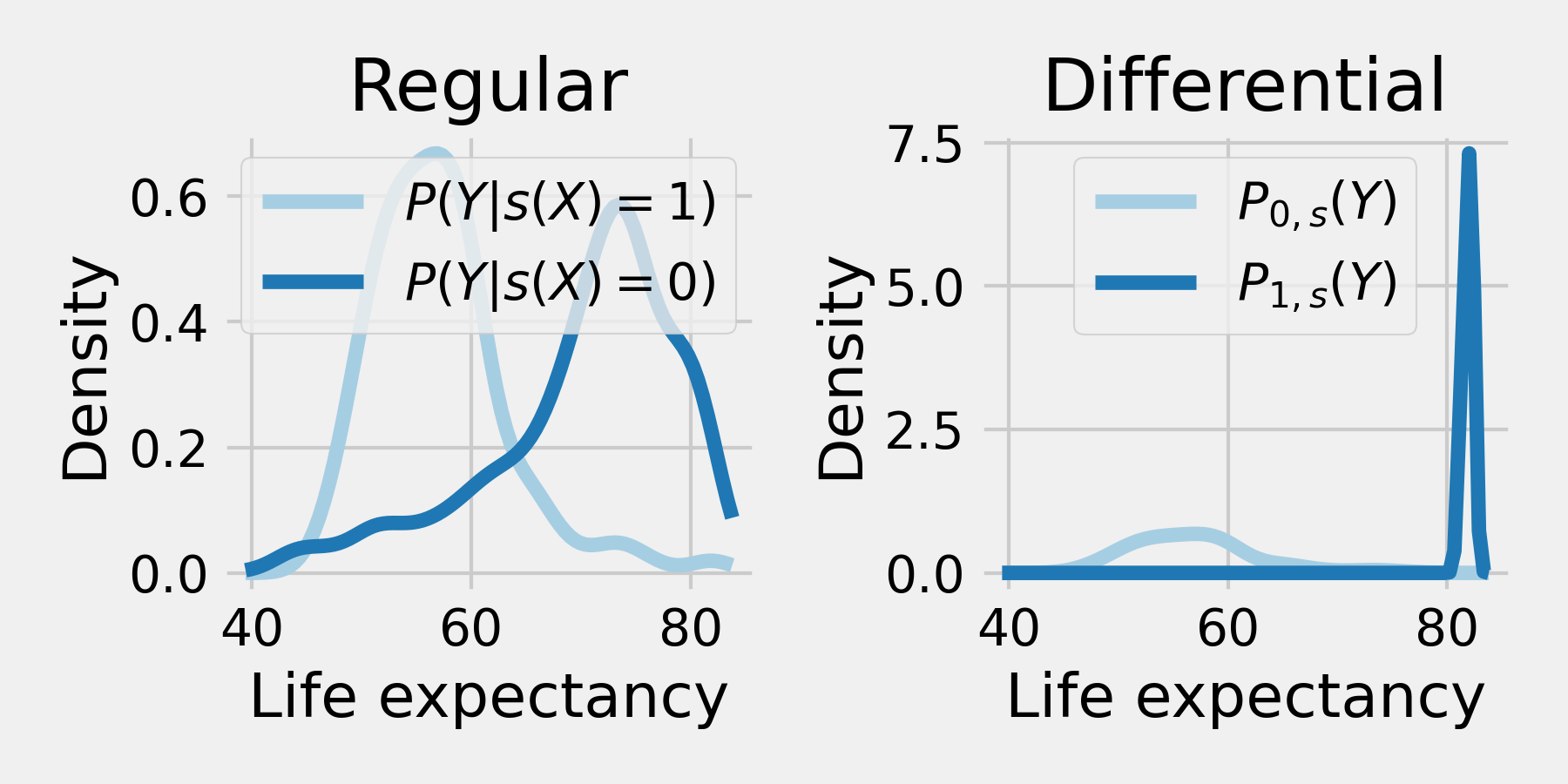}
        \caption{\ourmethod-$\max$ Subgroup 2: Under-five-deaths $<$ 172.39 AND Adult-mortality $<$ 493.28 AND Hepatitis-B $<$ 75.57 AND Measles $<$ 86.18
}
    \end{subfigure}
    \hfill
    \begin{subfigure}[t]{.3\linewidth}
        \centering
        \includegraphics[width=\linewidth]{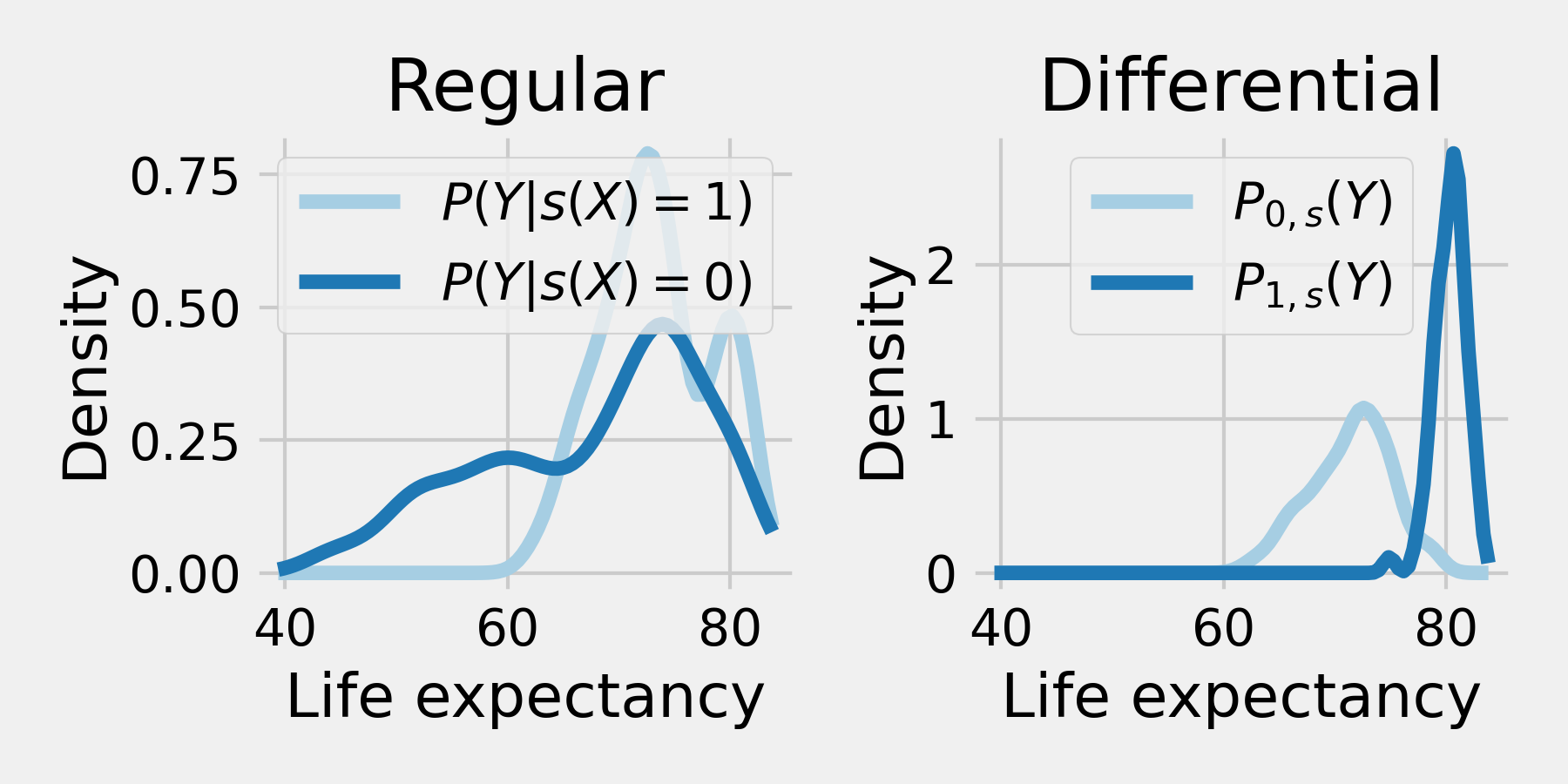}
        \caption{\ourmethod-$\max$ Subgroup 3: 2004.18 $<$ Year AND Infant-deaths $<$ 50.78 AND Adult-mortality $<$ 247.63 AND 32.50 $<$ Hepatitis-B AND Measles $<$ 86.46
}
    \end{subfigure}

    \begin{subfigure}[t]{.3\linewidth}
        \centering
        \includegraphics[width=\linewidth]{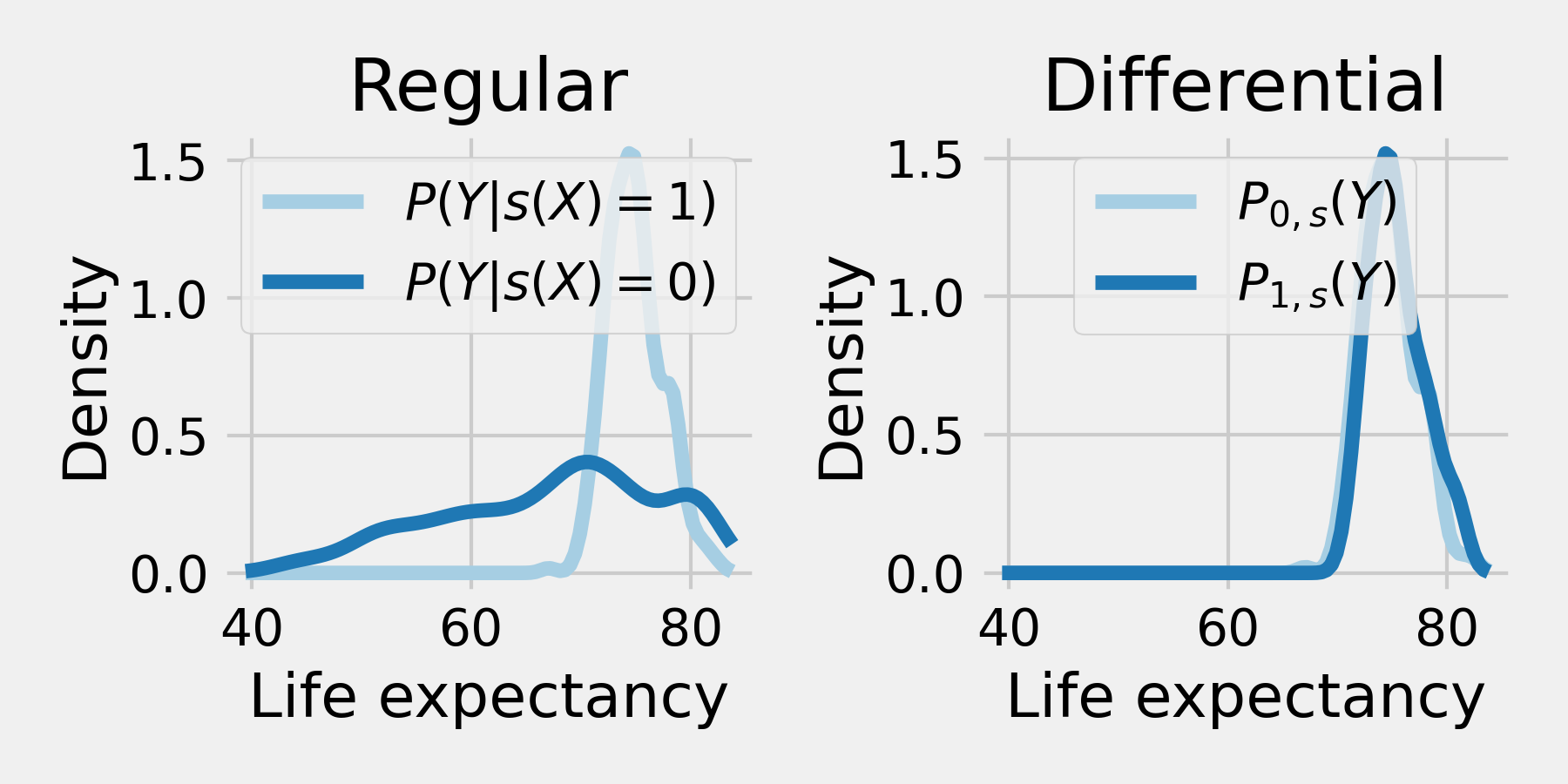}
        \caption{\ourmethod-$\min$ Subgroup 1: Adult-mortality $<$ 178.19 AND 32.09 $<$ Hepatitis-B AND 23.11 $<$ BMI AND 52.38 $<$ Diphtheria AND 4106.81 $<$ GDP-per-capita $<$ 92037.85 AND 1.06 $<$ Thinness-ten-nineteen-years $<$ 22.08 AND 2.16 $<$ Thinness-five-nine-years $<$ 22.49 AND 7.50 $<$ Schooling
}
    \end{subfigure}
    \hfill
    \begin{subfigure}[t]{.3\linewidth}
        \centering
        \includegraphics[width=\linewidth]{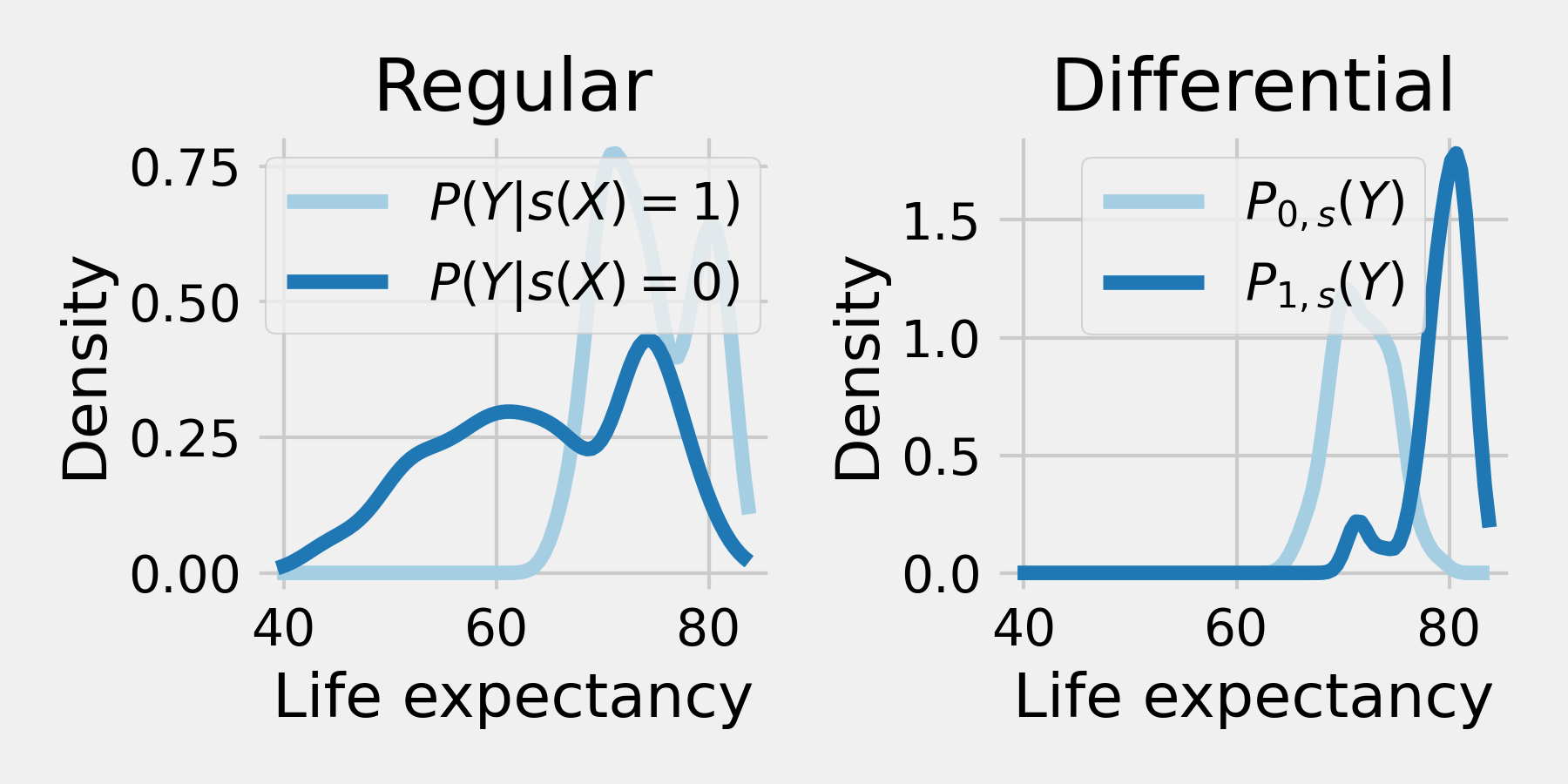}
        \caption{\ourmethod-$\min$ Subgroup 2: 103.97 $<$ Adult-mortality $<$ 219.50 AND 9487.06 $<$ GDP-per-capita $<$ 24595.72 AND Thinness-ten-nineteen-years $<$ 20.01 AND Thinness-five-nine-years $<$ 22.55 AND 5.75 $<$ Schooling
}
    \end{subfigure}
    \hfill
    \begin{subfigure}[t]{.3\linewidth}
        \centering
        \includegraphics[width=\linewidth]{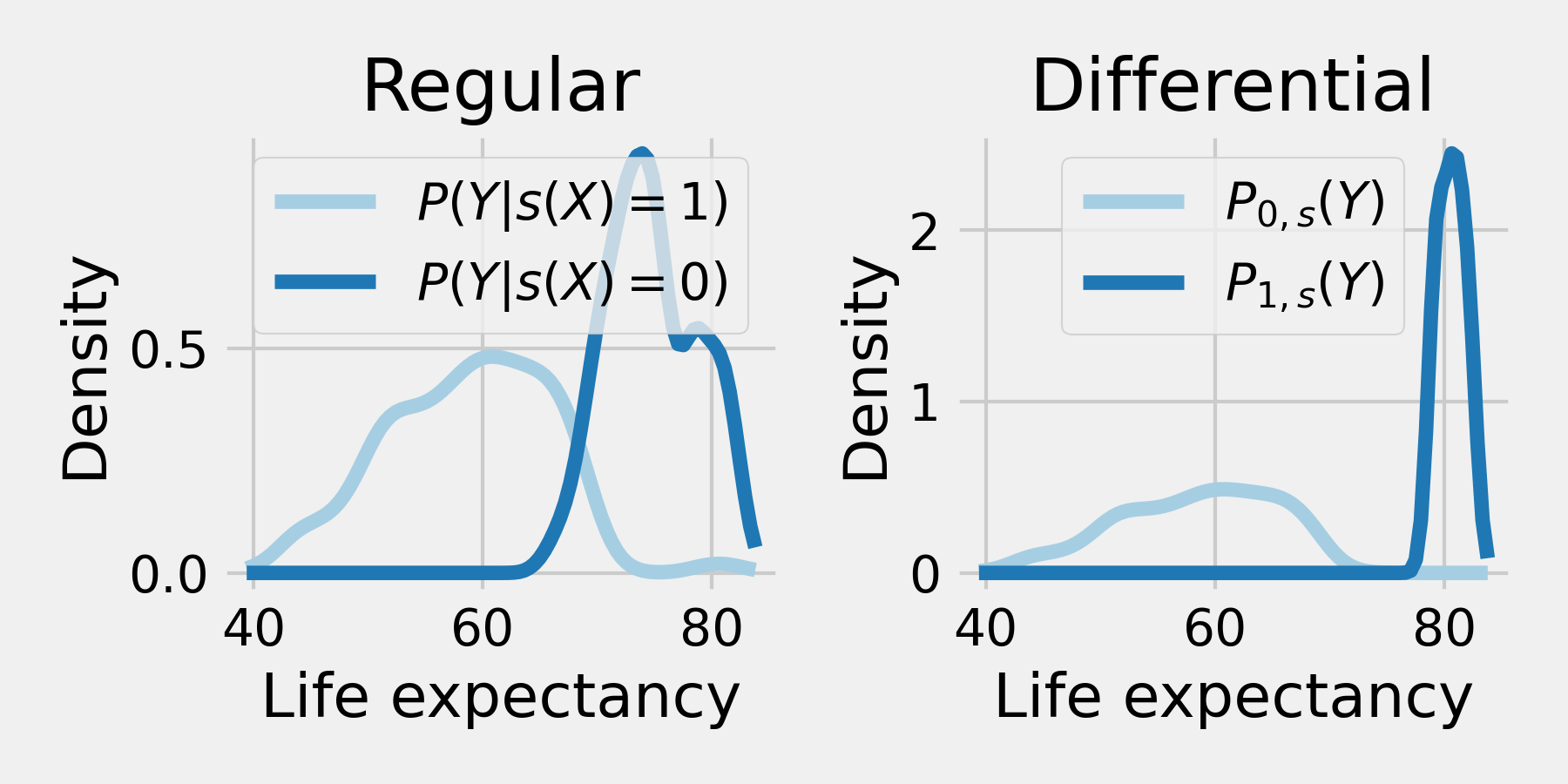}
        \caption{\ourmethod-$\min$ Subgroup 3: Infant-deaths $<$ 83.39 AND Adult-mortality $<$ 436.42 AND Incidents-HIV $<$ 17.45
}
    \end{subfigure}

    \begin{subfigure}[t]{.3\linewidth}
        \centering
        \includegraphics[width=\linewidth]{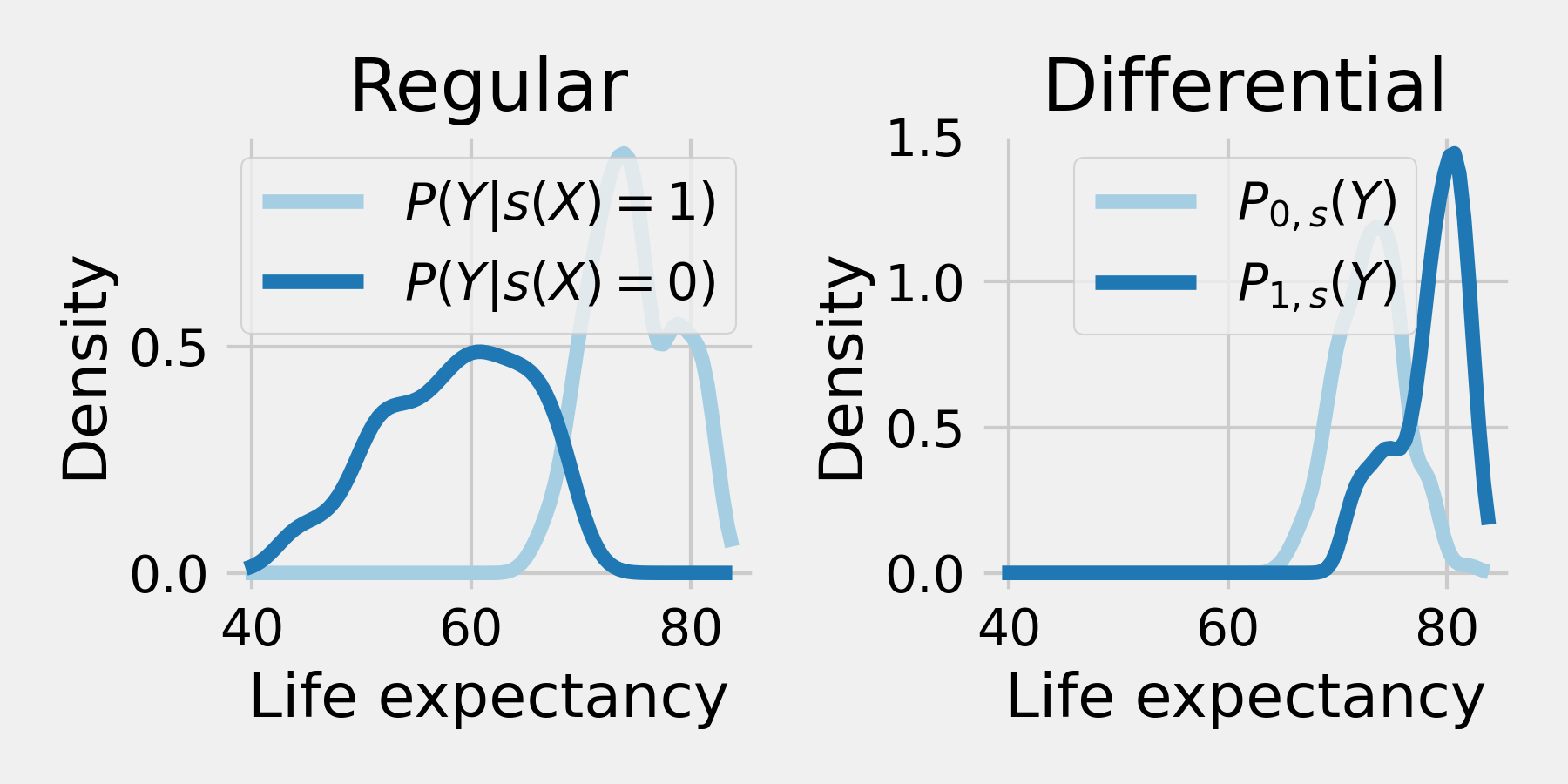}
        \caption{\pysubgroup Subgroup 1: GDP-per-capita$>=$831.0 AND Under-five-deaths$<$41.0 AND Adult-mortality$<$247.14
}
    \end{subfigure}
    \hfill
    \begin{subfigure}[t]{.3\linewidth}
        \centering
        \includegraphics[width=\linewidth]{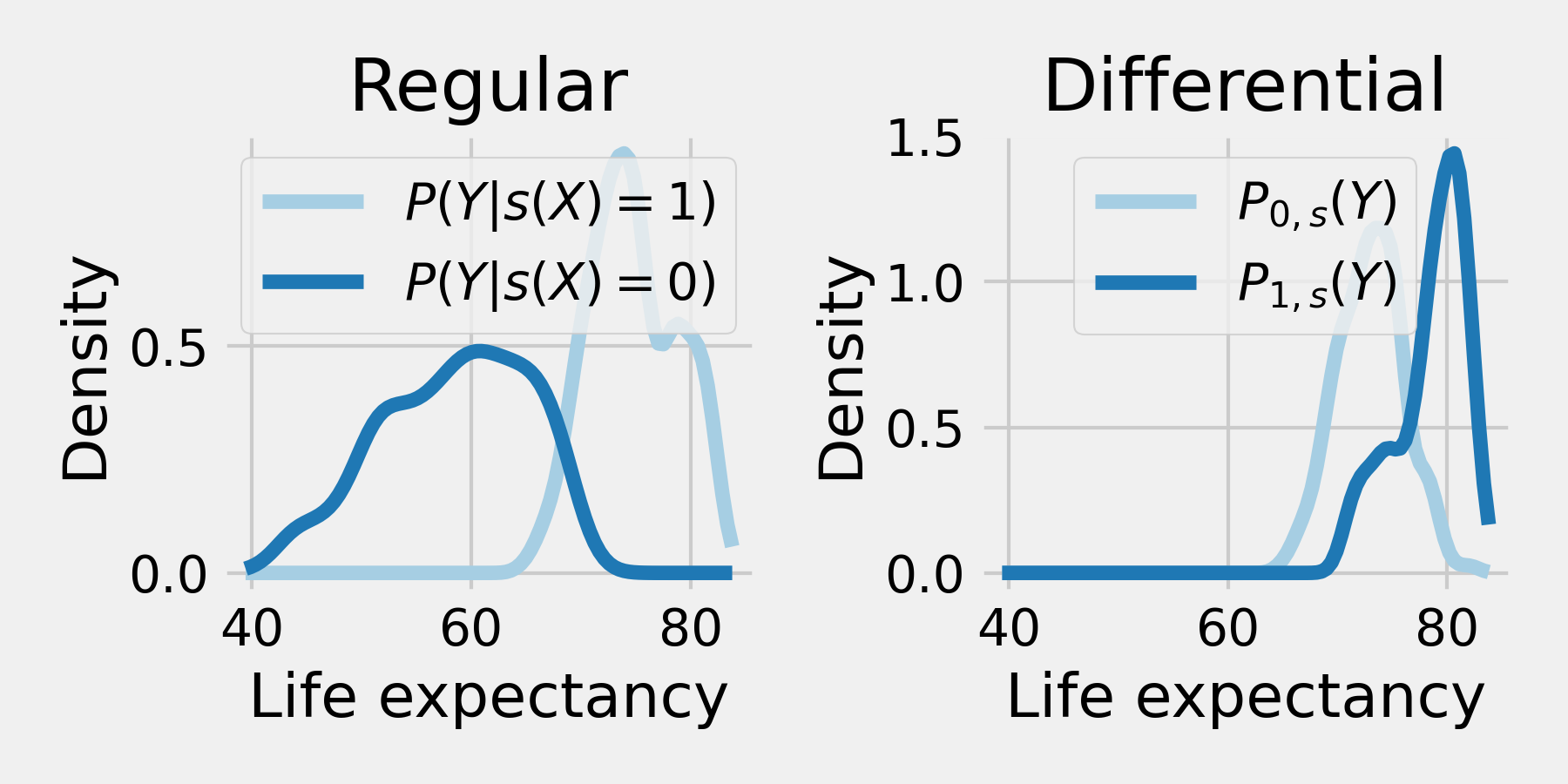}
        \caption{\pysubgroup Subgroup 2: Under-five-deaths$<$41.0 AND Adult-mortality$<$247.14
}
    \end{subfigure}
    \hfill
    \begin{subfigure}[t]{.3\linewidth}
        \centering
        \includegraphics[width=\linewidth]{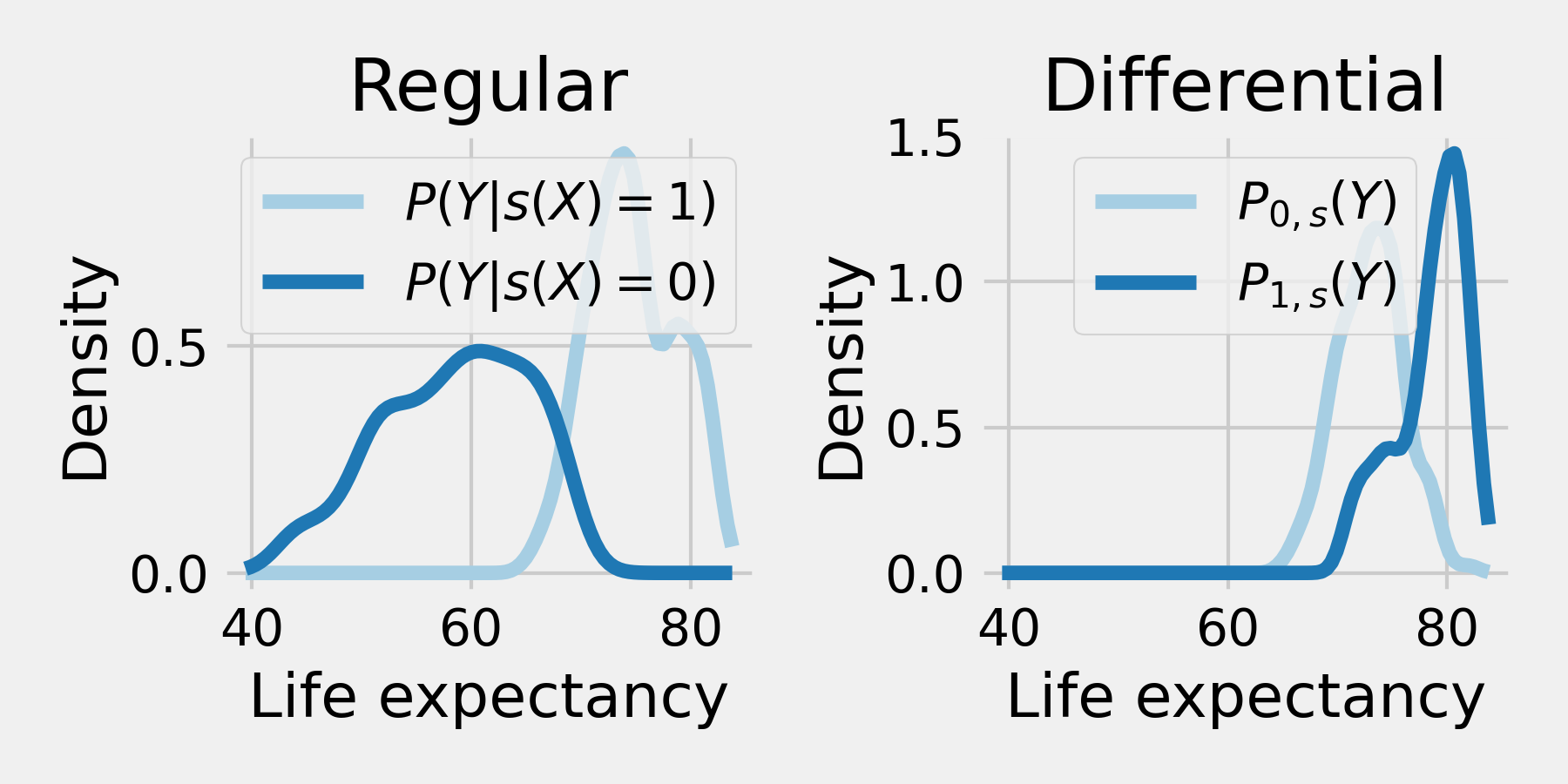}
        \caption{\pysubgroup Subgroup 3: Under-five-deaths$<$41.0 AND Adult-mortality$<$247.14 AND Adult-mortality$<$429.31
}
    \end{subfigure}
        \caption{Top three subgroups discovered by \ourmethod-$\max$, \ourmethod-$\min$ and \pysubgroup on the WHO Life Expectancy Dataset.}
\end{figure}

\end{document}